\definecolor{darkred}{RGB}{230,0,0}
\definecolor{darkgreen}{RGB}{0,170,0}
\definecolor{darkblue}{RGB}{0,0,150}
\newcommand\blfootnote[1]{%
  \begingroup
  \renewcommand\thefootnote{}\footnote{#1}%
  \addtocounter{footnote}{-1}%
  \endgroup
}
\newcommand{\widesim}[2][1.5]{
  \mathrel{\overset{#2}{\scalebox{#1}[1]{$\sim$}}}}
\newcommand{\Sn}{\Large{\bm{\Sigma}}_n}
\newcommand{\db}{\bm{\delta}}
\newcommand*{\rom}[1]{\expandafter\@slowromancap\romannumeral #1@}
\newcommand{\lac}{{\bm{\lambda}}}
\newcommand{\rhoc}{{\bm{\rho}}}
\newcommand{\thc}{{\bm{\theta}}}
\newcommand{\mathleft}{\@fleqntrue\@mathmargin0pt}
\newcommand{\mathcenter}{\@fleqnfalse}
\newcommand{\ssymbol}[1]{^{\@fnsymbol{#1}}}
\newcommand{\env}[3]{\mathcal{M}_{{#1}}\left({#2};{#3}\right)}
\newcommand{\prox}[3]{\mathcal{P}_{{#1}}\left({#2};{#3}\right)}
\newcommand{\Th}{\Theta_n}
\newcommand{\Thp}{\Theta^\perp_n}
\newcommand{\envdx}[3]{\mathcal{M}^{\prime}_{{#1},1}\left({#2};{#3}\right)}
\newcommand{\envdla}[3]{\mathcal{M}^{\prime}_{{#1},2}\left({#2};{#3}\right)}
\newcommand{\R}{\mathbb{R}}
\newcommand{\al}{\alpha}
\newcommand{\Lm}{\mathcal{L}}
\newcommand{\rP}{\stackrel{P}{\longrightarrow}}
\providecommand{\abs}[1]{\lvert#1\rvert}
\providecommand{\norm}[1]{\lVert#1\rVert}
\newcommand{\simiid}{\widesim{\text{\small{iid}}}}
\newcommand{\Pro}{\mathbb{P}}
\theoremstyle{theorem}
\newtheorem{ass}{Assumption}
\theoremstyle{remark}
\newtheorem{remark}{Remark}
\newcommand{\eps}{\varepsilon}
\newcommand{\sign}{\mathrm{sign}}
\newcommand{\vb}{\mathbf{v}}
\newcommand{\ub}{\mathbf{u}}
\newcommand{\one}{\mathbf{1}}
\newcommand{\E}{\mathbb{E}}                    
\newcommand{\la}{r}                     
\newcommand{\nn}{\notag}
\newcommand{\G}{\mathbf{G}}
\newcommand{\ellb}{\boldsymbol{\ell}}
\newcommand{\Ellb}{\pmb{\mathcal{L}}}
\newcommand{\x}{\mathbf{x}}
\newcommand{\w}{\mathbf{w}}
\newcommand{\g}{\mathbf{g}}
\newcommand{\y}{\mathbf{y}}
\newcommand{\s}{\mathbf{s}}
\newcommand{\z}{\mathbf{z}}
\newcommand{\h}{\mathbf{h}}
\newcommand{\Nn}{\mathcal{N}}
\newcommand{\beq}{\begin{equation}}
\newcommand{\eeq}{\end{equation}}
\newcommand{\bea}{\begin{align}}
\newcommand{\eea}{\end{align}}
\def\bea#1\eea{\begin{align}#1\end{align}}
\theoremstyle{plain}
\newtheorem{theorem}{Theorem}
\newtheorem{corollary}{Corollary}[theorem]
\newtheorem{lemma}[theorem]{Lemma}
\newtheorem{proposition}[theorem]{Proposition}
\date{}
\title{Asymptotic Behavior of Adversarial Training \\in Binary Classification}
\author{Hossein Taheri, Ramtin Pedarsani, and Christos Thrampoulidis\blfootnote{All authors are with the Department of Electrical and Computer Engineering, University of California, Santa Barbara.}}
\begin{document}
\maketitle

\begin{abstract}
It has been consistently reported that many machine learning models are susceptible to adversarial attacks i.e., small additive adversarial perturbations applied to data points can cause misclassification. Adversarial training using empirical risk minimization is considered to be the state-of-the-art method for defense against adversarial attacks. Despite being successful in practice, several problems in understanding generalization performance of adversarial training remain open. In this paper, we derive precise theoretical predictions for the performance of adversarial training in binary classification. We consider the high-dimensional regime where the dimension of data grows with the size of the training data-set at a constant ratio. Our results provide exact asymptotics for standard and adversarial test errors of the estimators obtained by adversarial training with $\ell_q$-norm bounded perturbations ($q \ge 1$) for both discriminative binary models and generative Gaussian-mixture models with correlated features. Furthermore, we use these sharp predictions to uncover several intriguing observations on the role of various parameters including the over-parameterization ratio, the data model, and the attack budget on the adversarial and standard errors.
\end{abstract}
\addtocontents{toc}{\protect\setcounter{tocdepth}{0}}

\section{Introduction}
Most machine learning algorithms ranging from simple linear classifiers to complex deep neural networks have been shown to be prone to adversarial attacks, i.e., small additive perturbations to the data that cause the model to predict a wrong label \cite{szegedy2013intriguing,moosavi2016deepfool}. The requirement for robustness against adversaries is crucial for the safety of systems that rely on decisions made by these algorithms (e.g., in self-driving cars). With this motivation, over the past few years, there have been remarkable efforts by the research community to construct defenses against those adversaries, e.g., see \cite{silva2020opportunities,chakraborty2018adversarial} for a survey.
%
Among many defense methods that have been proposed, the state-of-the-art defense
is to train the machine learning model with adversarial examples \cite{goodfellow2014explaining,madry2017towards}, which is also known as adversarial training. However, despite major recent progress in the study and implementation of adversarial training, its efficacy has been mainly shown empirically without providing much theoretical understanding. Indeed, many questions regarding its theoretical properties remain open even for simple stylized models. For instance, how does the adversarial/standard error depends on the adversary's budget during training time and test time? How do they depend on the over-parameterization ratio that is the ratio of dimension to number of data points? What is the role of the chosen loss function?



In this paper, we consider the adversarial training problem for $\ell_q$-norm bounded perturbations in classification tasks, which solves the following robust empirical risk minimization (ERM) problem:
\bea\label{eq:adv_train_erm}
\min_{\thc_n \in \R^n} \;\frac{1}{m}\sum_{i=1}^m  \max_{\|\db_i\|_{q}\le\, \eps_{\rm tr}}\Lm \left(y_i, f_{\thc_n}(\x_i+\db_i)\right) + \la\|\thc_n\|_2^2.
\eea
Here,  $\{(\x_i,y_i)\}_{i\in[m]}$ is the training set, $\db_i\in \R^p$ are the perturbations with $p$ the dimension of the feature space, $f_{\thc_n}:\R^p\rightarrow\R$ is a model parameterized by a vector $\thc_n\in\R^n$, $\eps_{\rm tr}$ is a user-specified tunable parameter that can be interpreted as the adversary's budget during training, and $\la$ is the ridge-regularization parameter. Once the robust classifier $\widehat{\thc_n}$ is obtained by \eqref{eq:adv_train_erm}, the 
\emph{adversarial error / robust classification error} is given by 
$\E_{\x,y}\big[\max_{\|\db\|_q\leq \eps_{\rm ts}} \one_{\{y\neq \widehat{f}_{\widehat{\thc_n}}(\x+\db)\}}\big],$
where $\one_{\{\cdot\}}$ is the $0/1$-indicator function, $(\x,y) \in \R^p\times\{\pm 1\}$ is a test sample drawn from the same distribution as that of the training dataset, $\eps_{\rm ts}$ is the budget of the adversary, and $\widehat{f}_{\widehat{\thc_n}}$ uses the trained parameters $\widehat{\thc_n}$ and the fresh sample $\x$ to output a label guess.  The standard classification error is given by the same formula by simply setting $\eps_{\rm ts}=0$.

The goal of this paper is to precisely analyze the performance of adversarial training in \eqref{eq:adv_train_erm} for binary classification of certain statistical data models. In our proof we use the Convex-Gaussian-Min-max-Theorem (CGMT) \cite{sto,stoLASSO,COLT} and in particular its applications to the convex ERM that enables its precise analysis \cite{master,montanari2019generalization,salehi2019impact,taheri2020sharp,taheri2021fundamental}. However, compared to previous works, we develop a new analysis for robust optimization. 

Our main contributions are summarized as follows: 

\noindent$\bullet$~We precisely analyze, for the first time, the performance of adversarial training with $\ell_2$ and $\ell_\infty$ attacks in binary classification for two important data models of Gaussian Mixtures and Generalized Linear Models. See Sections \ref{sec:ell_inf} and \ref{sec:ell_2}.

\noindent$\bullet$~Our approach is general, allowing us to characterize the role of feature correlation, regularization and general $\ell_q$ attacks with $q\ge1$. In particular, our proof technique allows for non-isotropic features, yielding novel theoretical results even for non-adversarial convex regularized ERM settings (i.e., when $\eps_{\rm tr}=\eps_{\rm ts}=0$). We will elaborate on our technical approach in Section \ref{sec:sketch}.

\noindent$\bullet$~Numerical illustrations in Section \ref{sec:num} show tight agreements between our theoretical and empirical results and also allow us to draw intriguing conclusions regarding the behavior of adversarial and standard errors as functions of key problem parameters such as the sampling ratio $\delta:=m/n$, the budget of the adversary $\eps_{\rm ts}$, and the robust-optimization hyper-parameter $\eps_{\rm tr}$ in our studied settings. Moreover, we observe interesting phonemena by comparing our results with the Bayes optimal robust errors.

\subsection{Prior Works}
Relevant to the flavour of our results, the recent work \cite{javanmard2020reg} studies precise tradeoffs and performance analysis in adversarial training with linear regression with $\ell_2$ perturbations and isotropic Gaussian data. Compared to \cite{javanmard2020reg}, our results hold for binary models, general $\ell_q$ perturbations with $q\ge1$, and non-isotropic features with mild assumptions on the covariance matrix. Moreover, we consider the regularized ERM allowing us to study the behavior of adversarial training in the over-parameterized regime. Similar results are only derived in a \emph{contemporaneous} work by \cite{javanmard2020precise}. On the one hand, compared to \cite{javanmard2020precise} our analysis applies to both discriminative and generative data models and also to the \emph{regularized} ERM with Positive Definite (PD) covariance matrices. We also compare the results of adversarial training with the Bayes robust ones. On the other hand, \cite{javanmard2020precise} extends their analysis to the support vector machines. Our analysis of correlated features was motivated by \cite{montanari2019generalization}. However, the work of \cite{montanari2019generalization} considers standard support vector machines, whereas we consider regularized ERM methods for adversarial training. The recent works \cite{mei2019generalization,goldt2020modeling,dhifallah2020precise,dhifallah2021inherent,ghorbani2019limitations} have characterized precise error of random features and neural tangent models in high-dimensions. The Adversarial Bayes risk for the Gaussian-mixture model has been characterized in \cite{bhagoji2019lower,dan2020sharp,dobriban2020provable}. The references \cite{charles2019convergence,allen2020feature,xing2020generalization} address few theoretical properties of adversarial training. 
The prior work \cite{min2020curious} considers adversarial training with linear loss in order to analyze the sample complexity of robust estimators. Another line of work studies the trade-offs between the standard and adversarial errors e.g., see \cite{tsipras2018robustness,raghunathan2019adversarial,zhang2019theoretically,dobriban2020provable}. The benefits of unlabeled data in robustness have been investigated in \cite{raghunathan2020understanding,carmon2019unlabeled}, among several other works. 

 \subsection*{Notation}
Letting $\delta(x)$ denote a Dirac delta mass on $x$, the empirical distribution of a vector $\x\in\R^n$ is given by $\frac{1}{n}\sum_{i=1}^n \delta(\x_i)$. The Wasserstein-$k$ distance between two measures $\rho_1,\rho_2$ is defined as $W_k(\rho_1,\rho_2)\triangleq\left(\inf _{\rho\in\mathrm{P}} \mathbb{E}_{(X, Y) \sim \rho}|X-Y|^{k}\right)^{1 / k}$, where $\mathrm{P}$ denotes all couplings of $\rho_1$ and $\rho_2$. We say that a sequence of probability distributions $\mu_n$ converges in Wasserstein-$k$ distance to a probability distribution $\mu$, if $W_k(\mu_n,\mu)\rightarrow 0$ as $n\rightarrow\infty$. The Gaussian $Q$-function is denoted by $Q(\cdot)$. $\odot$ denotes the element-wise multiplication. For a sequence of random variables $X_{m,n}$ that converges in probability to some constant $c$ in the proportional asymptotic limit, we write $X_{m,n}\rP c$. We further need to recall the definition of the \emph{Moreau envelope function}. We write
\bea\label{eq:ME_def}
\env{f}{x}{\kappa}\triangleq\min_{v}\frac{1}{2\kappa}(x-v)^2 + f(v),
\eea
for the Moreau envelope of the function $f:\R\rightarrow\R$ at $x\in\R$ with parameter $\kappa>0$. 

\section{Problem Formulation}\label{sec:pf}
In this section, we describe the data model, the specific form of \eqref{eq:adv_train_erm}, and the asymptotic regime for which our results hold. After this section, it is understood that all our results hold in the setting described here without any further explicit reference.  

\subsection{Data Model}\label{sec:data_models}
We study two stylized models for binary classification.
\paragraph{Gaussian Mixture Models.} The first model is a Gaussian Mixture model (GMM) where the conditional distribution of the feature vectors is an isotropic Gaussian with mean $\pm\thc^\star$, depending on the label $y_i$. Formally, the GMM model assumes
\bea\label{eq:G_mix}
\x_i | y_i \;\sim\; \mathcal{N}\left(y_i\thc^\star_n,\Sn\right), \;\;\;\;\Pro(y_i=1) = \pi\in[0,1].
\eea

\paragraph{Generalized Linear Models.} The second model is a generalized linear model (GLM) with binary link function. Specifically, assume that the label $y_i\in\{\pm1\}$ associated with the feature vector $\x_i$ is generated as
%
\bea\label{eq:binarymodel}
y_i = \psi\left(\langle\thc^\star_n,\x_i\rangle\right), \;\;\;\;\x_i\sim\Nn(\mathbf{0},\Sn),
\eea
for a possibly random link function $\psi:\R\rightarrow\{\pm1\}$. This includes the well-known Logistic and Signed models, by letting $\Pro(\psi(x)=1) = 1/(1+\exp(-x))$ and $\psi(x) = \sign(x)$, respectively. 

We assume that the underlying (unknown) vector of regressors $\thc_{n}^\star\in\R^n$, and the covariance matrix $\Sn\in\R^{n\times n}$, satisfy the following assumptions.
\begin{ass}\label{ass:1}
The minimum and maximum eigenvalues of the covariance matrices $\Sn$ satisfy $0<c<\lambda_{\min}(\Sn)$ and $\lambda_{\max}(\Sn)<C<\infty$. 
\end{ass}
\begin{ass}\label{ass:2}
Denoting $\zeta_n\triangleq ({\thc^\star_n}^\top\Sn\thc^\star_n)^{1/2}$ for GLM and $\widetilde{\zeta_n}\triangleq ({\thc^\star_n}^\top\Sn^{-1}\thc^\star_n)^{1/2}$ for GMM, we define their high-dimensional limits as $\zeta$ and $\widetilde{\zeta}$, i.e., $\zeta_n\rP \zeta$ and $\widetilde{\zeta_n}\rP \widetilde\zeta$. Moreover, for both models we assume without loss of generality $\|\thc_n^\star\|_2\,\rP1$.
\end{ass}
\begin{ass}\label{ass:3}
Let $\Sn=\mathbf{U}_n\mathbf{\Lambda}_n\mathbf{U}_n^\top$ be the eigen-decomposition of $\Sn$ and let $\lambda_{n,i}$ denote the $i$'th entry on the diagonal of $\mathbf{\Lambda_n}$. Denote $\vb_n\triangleq\mathbf{U}_n^\top{\thc_{n}^\star}$. Then the joint distribution of $(\sqrt{n}\thc^\star_{n,i},\lambda_{n,i},\sqrt{n}\vb_{n,i})$, $i\in[n]$, converges in Wasserstein-2 distance to a probability distribution $\Pi$ in $\R\times\R_+\times\R$, i.e., 
$$
\frac{1}{n}\sum_{i=1}^n \delta(\sqrt{n}\thc^\star_{n,i},\lambda_{n,i},\sqrt{n}\vb_{n,i})\;\xrightarrow{W_2} \;\Pi.
$$
\end{ass}
The assumption on $\|\thc_n^\star\|_2$ is without loss of generality for GLM since $\|\thc_n^\star\|_2$ can be absorbed in the link function $\psi$. Similarly for GMM, it can be relaxed in a straightforward way. We remark that while the Gaussian distribution assumption on feature vectors is crucial for theoretical analysis, our empirical results suggest that this assumption can be relaxed to include the family of sub-Gaussian data distributions. We discuss this universality property in Appendix \ref{sec:add_num_exp}.
%
\subsection{Asymptotic Regime}
We consider an asymptotic regime in which the size  of the training set $m$ and the dimension of the feature space $n$  grow large at a proportional rate. Formally, $m, n \rightarrow \infty$ at a fixed ratio $\delta= m/n$.
\subsection{Robust Learning}
Let $\widehat{\thc_n}$ be a linear classifier trained on data generated according to either models \eqref{eq:G_mix} or \eqref{eq:binarymodel}. As is typical, given $\widehat{\thc_n}$, a decision is made about the label of a fresh sample $\x$ based on $\sign(\langle\x,\widehat{\thc_n}\rangle)$. Thus, letting $y$ be the label of a fresh sample $\x$, the \emph{Standard Test Error} is given by
\bea\label{eq:gen_error_binarymodel}
\; \mathcal{E}(\widehat{\thc_n})\triangleq  \E_{\x,y}\left[ \one_{\left\{y\neq \sign\left(\langle\x,\widehat{\thc_n}\rangle\right)\right\}}\right].
\eea
Here, the expectation is over a fresh pair $(\x,y)$ also generated according to either the GLM or the GMM model.
Next, we define the adversarial error with respect to a worst-case $\ell_q$-norm bounded additive perturbation. Let $\eps_{\rm ts}\ge0$ be the budget of the adversary. Then, the \emph{Adversarial Test Error} is defined as follows:
%
%
\bea\label{eq:adv_error_lin}
 \;\; \mathcal{E}_{\ell_q,\eps_{\rm ts}}(\widehat{\thc_n})\triangleq \E_{\x,y}\left[\max_{\|\db\|_q\leq\eps_{\rm ts}} \one_{\left\{y\neq \sign\left(\langle\x+\db,\widehat{\thc_n}\rangle\right)\right\}}\right].
\eea

Adversarial training leads to a classifier $\widehat{\thc_n}$ that solves the following robust optimization problem tailored to binary classification:
\bea\label{eq:main_erm}
\widehat{\thc_n} := \arg\min_{\thc_n \in \R^n}\sum_{i=1}^m\;\max_{\|\db_i\|_{q}\le\, \eps_{\rm tr}}\Lm \left(y_i \left\langle\x_i+\db_i,\thc_n\right\rangle\right) + \la\|\thc_n\|_2^2\, .
\eea
The loss function $\Lm:\R\rightarrow\R$ is chosen as a convex approximation to the 0/1 loss. Specifically, throughout the paper, we assume that $\Lm$ is convex and decreasing. This includes popular choices such as the logistic, hinge and exponential losses. 

\section{Asymptotics for Adversarial Training with $\pmb{\ell_\infty}$ Perturbations}\label{sec:ell_inf}

In this section, we focus on the case of bounded $\ell_\infty$-perturbations, i.e. the adversarial error in \eqref{eq:adv_error_lin} is considered for $q=\infty$.  
Specifically, let $\widehat{\thc_n}$ be a solution to  the following robust minimization:
\bea\label{eq:main_erm_Linfty}
\hspace{-0.1in}\min_{\thc_n \in \R^n} \;\frac{1}{m}\sum_{i=1}^m  \max_{\|\db_i\|_{\infty}\le\, \frac{\eps_{\rm tr}}{\sqrt{n}}}\Lm \left(y_i \left\langle\x_i+\db_i,\thc_n\right\rangle\right) +\la\|\thc_n\|_2^2.
\eea

In our asymptotic setting, $\eps_{\rm tr}$ is of constant order and the  factor $1/\sqrt{n}$ in front of it is the proper normalization needed to obtain non-trivial results. We explain this normalization further in Section \ref{sec:sketch}. We consider the case of diagonal covariance matrix i.e., $\Sn=\mathbf{\Lambda}_n$ here and defer the general case of PD covariance matrix to the appendix where we also discuss how final expressions simplify in the case of isotropic features.

Before presenting our main result, we need to introduce some necessary definitions. We define the following min-max optimization over eight scalar variables. Denote $\bar{\vb}\triangleq(\al,\tau_1,w,\mu,\tau_2,\beta,\gamma,\eta)$ and define $f:\R^8\rightarrow\R$ as follows,
$$f_{_{\delta,\mathcal{C}}}(\bar\vb) \triangleq -\gamma w - \frac{\mu^2\tau_2}{2\alpha}\mathcal{C}^2-\frac{\alpha\beta^2}{2\delta\tau_2} - \frac{\alpha\tau_2}{2}+ \frac{\beta\tau_1}{2} + \eta \mu- \frac{\eta^2\alpha}{2\tau_2\mathcal{C}^2},
$$
where $\mathcal{C}=\widetilde\zeta$ and $\zeta$ (defined in Assumption \ref{ass:2}) for data models \eqref{eq:G_mix} and \eqref{eq:binarymodel}, respectively.
We introduce the following min-max objective based on the eight scalars,
\bea
&\min_{\substack{{\alpha,\tau_1,w\in\R_+,}\\ {\mu\in\R}}}\;\;\max_{\substack{{\tau_2,\beta,\gamma\in\R_+,}\\ {\eta\in\R}}} \,   
f_{_{\delta,\mathcal{C}}}(\bar\vb) +\E_{Z_{\al,\mu}}\left[\env{\Lm}{Z_{\al,\mu}-w}{\frac{\tau_1}{\beta}}\right]\nn\\
&\hspace{1.1in}+\eps_{\rm tr}\gamma\,\E_{L,H,T}\left[\env{\ell_1+\frac{r}{\eps_{\rm tr}\gamma}\ell_2^2}{\frac{\alpha\beta}{\tau_2\sqrt{\delta L}}H + \frac{\alpha\eta}{\tau_2\widetilde{\mathcal{C}}}\;T}{\frac{\alpha\eps_{\rm tr}\gamma}{\tau_2 L}} \right],\label{eq:minmax_main}
\eea
where $\widetilde{\mathcal{C}} \triangleq\widetilde\zeta^2 L$ and $\zeta^2$ for models \eqref{eq:G_mix} and \eqref{eq:binarymodel}, respectively, $H\sim\mathcal{N}(0,1)$ and $(T, L,V)\sim\Pi$ where $\Pi$ was defined in Assumption \ref{ass:3}. We also define:
\begin{align}\label{eq:datamodels}
Z_{\al.\mu}\triangleq\begin{cases} \sqrt{\al^2+\mu^2\widetilde{\zeta}^2}\,G+\mu\widetilde{\zeta}^2 \;\;\;{\text {for data model }\eqref{eq:G_mix}},\\[4pt] \alpha G+\mu \zeta S\cdot \psi(\zeta S) \;\;\;{\text {for data model }\eqref{eq:binarymodel}},\end{cases}
\;G,S\simiid\mathcal{N}(0,1).
\end{align}
Notice that the objective function of \eqref{eq:minmax_main} depends explicitly on the sampling ratio $\delta$ and on the training parameter $\eps_{\rm tr}$. Moreover, it depends implicitly on $\thc^\star_n$ and $\mathbf{\Lambda}_n$ via $T$ and $L$, respectively, and on the specific loss $\Lm$ via its Moreau envelope. 


We are now ready to state our main result in Theorem \ref{thm:main}, which establishes a relation between the solutions of \eqref{eq:minmax_main} and the adversarial risk of the robust classifier $\widehat{\thc_n}$.
\begin{theorem}\label{thm:main}
Assume that the training dataset $\{(\x_i,y_i)\}_{i=1}^{m}$, is generated according to either \eqref{eq:G_mix} or \eqref{eq:binarymodel} with diagonal covariance matrices satisfying Assumptions \ref{ass:1}-\ref{ass:3}. Consider the sequence of robust classifiers $\{\widehat{\thc_n}\}$, obtained by adversarial training in \eqref{eq:main_erm_Linfty} with a convex decreasing loss function $\Lm:\R\rightarrow\R$. 
 Then, the high-dimensional limit for the adversarial test error $(\mathcal{E}_{\ell_\infty,\frac{\eps_{\rm ts}}{\sqrt{n}}})$ is derived as follows,
\bea\label{eq:thm2.1_gen_error}
\left\{\mathcal{E}_{\ell_\infty, \frac{\eps_{\rm ts}}{\sqrt{n}}}\left(\widehat{\thc_n}\right)\right\} \rP  \begin{cases} Q \Big(\frac{\mu^\star\widetilde{\zeta}^2-w^\star\,\eps_{\rm ts}/{\eps_{\rm tr}} }{\sqrt{{\mu^\star}^2\widetilde{\zeta}^2+{\alpha^\star}^2}}\Big)  ~~{\text{for data model}~\eqref{eq:G_mix}}, \\[7pt]\Pro \Big( \mu^\star \zeta\,S\,\psi(\zeta S) + \alpha^\star G < w^\star {\eps_{\rm ts}}/{\eps_{\rm tr}} \Big) ~~{\text{for data model }~\eqref{eq:binarymodel}},\end{cases}
\eea
where $Q(\cdot)$ denotes the Gaussian $Q$-function and $(\alpha^\star,\mu^\star,w^\star)$ is the unique solution to the scalar minimax problem \eqref{eq:minmax_main}.
\end{theorem}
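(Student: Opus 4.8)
The plan is to apply the Convex Gaussian Min-max Theorem (CGMT) to the robust ERM \eqref{eq:main_erm_Linfty}, following the standard route of converting the primal optimization into a scalar min-max problem. First I would rewrite the inner maximization over the perturbations $\db_i$ in closed form: since the loss $\Lm$ is convex and decreasing, $\max_{\|\db_i\|_\infty \le \eps_{\rm tr}/\sqrt{n}} \Lm(y_i\langle \x_i + \db_i, \thc_n\rangle) = \Lm\big(y_i\langle \x_i,\thc_n\rangle - \tfrac{\eps_{\rm tr}}{\sqrt n}\|\thc_n\|_1\big)$, because the worst-case perturbation aligns with $-\sign(\thc_n)$ (up to the sign of $y_i$, which is absorbed). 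This reduces the robust ERM to a regularized ERM with an additional $\ell_1$-type penalty inside the loss, so the objective becomes $\frac{1}{m}\sum_i \Lm\big(y_i\langle \x_i,\thc_n\rangle - \tfrac{\eps_{\rm tr}}{\sqrt n}\|\thc_n\|_1\big) + \la\|\thc_n\|_2^2$, which is convex in $\thc_n$.

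Next I would introduce the Gaussian feature matrix and decompose it to expose the structure CGMT needs. For the GMM model, write $\x_i = y_i\thc^\star_n + \Sn^{1/2}\g_i$; for the GLM model, $\x_i = \Sn^{1/2}\g_i$ with $y_i = \psi(\langle\thc^\star_n,\x_i\rangle)$. In either case the randomness enters through a Gaussian matrix $\G\in\R^{m\times n}$. I would introduce an auxiliary variable $\mathbf{r} = \X\thc_n$ (or the residual $\mathbf{v}$) with a Lagrange multiplier $\mathbf{u}\in\R^m$, turning the problem into a bilinear form $\mathbf{u}^\top\G\thc_n$ plus deterministic terms. Applying CGMT replaces this with the Auxiliary Optimization (AO): $\|\thc_n\|_2 \mathbf{g}^\top\mathbf{u} + \|\mathbf{u}\|_2 \mathbf{h}^\top\thc_n$ for fresh Gaussian vectors $\mathbf{g},\mathbf{h}$. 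Then I would perform the standard scalarization: optimize over the directions of $\mathbf{u}$ and $\thc_n$, introduce scalar order parameters $\al$ (the norm of the component of $\thc_n$ orthogonal to $\thc^\star_n$ and the span of the relevant directions), $\mu$ (the overlap with $\thc^\star_n$), $w$ (tracking $\tfrac{\eps_{\rm tr}}{\sqrt n}\|\thc_n\|_1$), and the conjugate variables $\tau_1,\tau_2,\beta,\gamma,\eta$. The $\ell_1$ term together with the covariance spectrum gets handled by a separable minimization over the coordinates of $\thc_n$, which in the limit, using Assumption \ref{ass:3}, concentrates to the expectation $\E_{L,H,T}[\env{\ell_1 + \frac{r}{\eps_{\rm tr}\gamma}\ell_2^2}{\cdot}{\cdot}]$ appearing in \eqref{eq:minmax_main}; the loss term concentrates to $\E_{Z_{\al,\mu}}[\env{\Lm}{Z_{\al,\mu}-w}{\tau_1/\beta}]$ via a similar per-sample argument and the weak law of large numbers.

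Once the AO is reduced to the deterministic scalar min-max \eqref{eq:minmax_main}, I would invoke the convexity-based part of CGMT to transfer: establish that the scalar problem has a unique saddle point $(\al^\star,\tau_1^\star,w^\star,\mu^\star,\tau_2^\star,\beta^\star,\gamma^\star,\eta^\star)$ (uniqueness coming from strict convexity/concavity after the reductions, or from an explicit argument as in prior CGMT-for-ERM works), and that the minimizer $\widehat{\thc_n}$ of the primal has its key summary statistics converging: $\langle\widehat{\thc_n},\thc^\star_n\rangle_{\Sn} \rP$ (something in terms of $\mu^\star$), $\|\widehat{\thc_n}\|_{\Sn}^2 \rP {\al^\star}^2 + {\mu^\star}^2\mathcal{C}^2$ (appropriately), and $\tfrac{\eps_{\rm tr}}{\sqrt n}\|\widehat{\thc_n}\|_1 \rP w^\star$. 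Finally I would plug these into the adversarial test error: again solving the inner max, $\mathcal{E}_{\ell_\infty,\eps_{\rm ts}/\sqrt n}(\widehat{\thc_n}) = \E_{\x,y}\one\{y\langle\x,\widehat{\thc_n}\rangle < \tfrac{\eps_{\rm ts}}{\sqrt n}\|\widehat{\thc_n}\|_1\}$, and since for a fresh sample $y\langle\x,\widehat{\thc_n}\rangle$ is (conditionally) Gaussian with mean and variance determined by the summary statistics above, this probability converges to the $Q$-function expression (GMM) or the probability over $(S,G)$ (GLM) in \eqref{eq:thm2.1_gen_error}, with $\tfrac{\eps_{\rm ts}}{\sqrt n}\|\widehat{\thc_n}\|_1 \rP (\eps_{\rm ts}/\eps_{\rm tr})\, w^\star$.

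The main obstacle I anticipate is twofold. First, handling the $\ell_1$-norm term $\|\thc_n\|_1$ correctly through CGMT: it couples all coordinates of $\thc_n$ and interacts nontrivially with the non-isotropic covariance $\mathbf{\Lambda}_n$, so the separable scalarization must be done carefully to produce the combined Moreau envelope of $\ell_1 + \tfrac{r}{\eps_{\rm tr}\gamma}\ell_2^2$ rather than treating regularization and the robustness penalty separately — getting the $1/\sqrt n$ normalization and the correct coupling constants right is delicate. Second, verifying the hypotheses of CGMT in this setting: the feasible sets are unbounded (no a priori bound on $\|\thc_n\|$), so one must first argue boundedness of the minimizer (e.g.\ via the regularization $\la>0$ or a coercivity argument on the loss) to legitimately restrict to a compact set, and one must check that the AO's scalar reduction is valid uniformly. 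I would also need to confirm strong convexity / uniqueness of the scalar saddle point to get convergence in probability of $\widehat{\thc_n}$'s statistics rather than just of the optimal cost.
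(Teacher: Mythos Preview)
Your proposal is correct and follows essentially the same route as the paper: closed-form inner max, Lagrangian reformulation, projection decomposition along $\thc_n^\star$, CGMT applied to the orthogonal component, scalarization via the dual variable $w$ for $\tfrac{\eps_{\rm tr}}{\sqrt n}\|\thc_n\|_1$ (with multiplier $\gamma$), and a separable Moreau-envelope reduction over coordinates using Assumption~\ref{ass:3}. The one technical device you should be aware of when executing is that the paper decouples the non-decomposable terms $\|\Sn^{-1/2}\widetilde{\thc_n}\|_1$ and $\|\Sn^{-1/2}\widetilde{\thc_n}\|_2^2$ from the $\Th/\Thp$ split by introducing a slack copy $\widetilde{\rhoc_n}$ of $\widetilde{\thc_n}$ with its own vector Lagrange multiplier, which is precisely what produces the combined $\ell_1+\tfrac{r}{\eps_{\rm tr}\gamma}\ell_2^2$ Moreau envelope you anticipated.
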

The asymptotics for adversarial error in Theorem \ref{thm:main} are precise in the sense that they hold with probability 1, as $m,n\rightarrow\infty.$ In the following section, we demonstrate these sharp predicted results for different values of problem parameters, in order to assess the impact of each parameter on the adversarial/standard errors. 
\subsection{Numerical Illustrations}\label{sec:num}




In this section, we illustrate the theoretical predictions for various values of the different problem parameters, including $\delta=m/n$ and the attack budgets $\eps_{\rm tr}$ and $\eps_{\rm ts}$. For numerical results here, we focus on the Hinge-loss i.e., $\Lm(t)= \max{(1-t,0)}$ and on the GMM with isotropic features, thus $L$ has a unit mass at 1. Additional experiments on GLM are given in Appendix \ref{sec:add_num_exp}. We further assume that $T$ is standard normal and fix regularization parameter $\la=10^{-4}$. To solve \eqref{eq:minmax_main}, we find the fixed-point solution of the corresponding saddle-point equations (derived in \eqref{eq:eta_main} in Appendix \ref{sec:sys_eqs_iso}) by iterating over these equations. 
 For the numerical results, we set $n=200$ and solve the ERM problem \eqref{eq:main_erm} by gradient descent. The resulting estimator is used to derive the adversarial test error by evaluating \eqref{eq:gen_error_binarymodel} on a test set of $3\times 10^3$ samples. We then average the results over 20 independent experiments. The results for both numerical and theoretical values are depicted in Figures \ref{fig:fig1}-\ref{fig:fig2}. Next, we discuss some of the insights obtained from these figures. 
\paragraph{Impact of $\bm{\delta}$ on standard/adversarial test error.} Figure \ref{fig:fig1} shows the adversarial and standard errors as a function of $\delta=m/n$. Note that both errors decrease as the sampling ratio $\delta$ grows, with the adversarial error approaching the Bayes adversarial error of the corresponding value of $\eps_{\rm ts}$. In Appendix \ref{sec:large_sample_size}, we formally prove that for $\ell_2$ attacks bounded by $\eps_{\rm ts}\in[0,1]$, the robust error of estimators obtained from adversarial training with any $\eps_{\rm tr}\in[0,1]$ converges to the Bayes adversarial error in the infinite sample-size limit i.e., when $\delta\rightarrow\infty$. Next, we highlight another observation regarding the role of data-set size. The second sharp decrease in standard and adversarial test errors appears right after the interpolation threshold $\delta_{\eps_{\rm tr}/\sqrt{n},\Pi}$, which denotes the maximum value of $\delta$ for which the data-points are $(\ell_\infty,\eps_{\rm tr}/\sqrt{n})$-separable (for definition, see the discussion on Robust Separability in Section \ref{sec:dis}). Such constantly decreasing behavior of error is in contrast to the corresponding behavior in linear regression with $\ell_2$ perturbations and $\ell_2$ loss as in \cite{javanmard2020reg} where a double-descent behavior was observed. This double-descent behavior can be considered as extensions of the double-descent behavior in standard ERM (first observed in numerous high-dimensional machine learning models \cite{belkin2018reconciling,belkin2019two,hastie2019surprises}), to the adversarial training case.  
 However, Figure \ref{fig:fig1} signifies that in binary robust classification by using decreasing losses such as the hinge-loss, the double-descent behavior does not appear for any value of $\eps_{\rm tr}$ and $\eps_{\rm ts}$. Additionally, in light of Figure \ref{fig:fig1}, one can measure for all $\delta>0$, the sub-optimality gap of standard/adversarial errors compared to the Bayes error. Such results, signify the critical role of training data-set size on obtaining robust and accurate estimators. Relevant results were obtained in \cite{schmidt2018adversarially}, where the authors derive bounds on the standard/adversarial error of a simple averaging estimator. However, our analysis is precise and holds for the broader case of convex decreasing losses. Finally, we highlight an important observation from Figure \ref{fig:fig1} (right): For highly over-parametrized models (very small $\delta$), standard accuracy remains the same for different choice of $\eps_{\rm tr}$. As $\delta$ grows, adversarial training (perhaps surprisingly) seems to improve the standard accuracy; however, for very large $\delta$, increasing $\eps_{\rm tr}$ hurts standard accuracy. It is also worth mentioning that similar results on the role of data-set size on standard accuracy was empirically observed in \cite{tsipras2018robustness} for neural network training of real-world data-sets such as MNIST. 
 \par
 \begin{figure*}[t!]
\centering
\begin{subfigure}{}
  \includegraphics[width=.41\linewidth,height= 5.1cm]{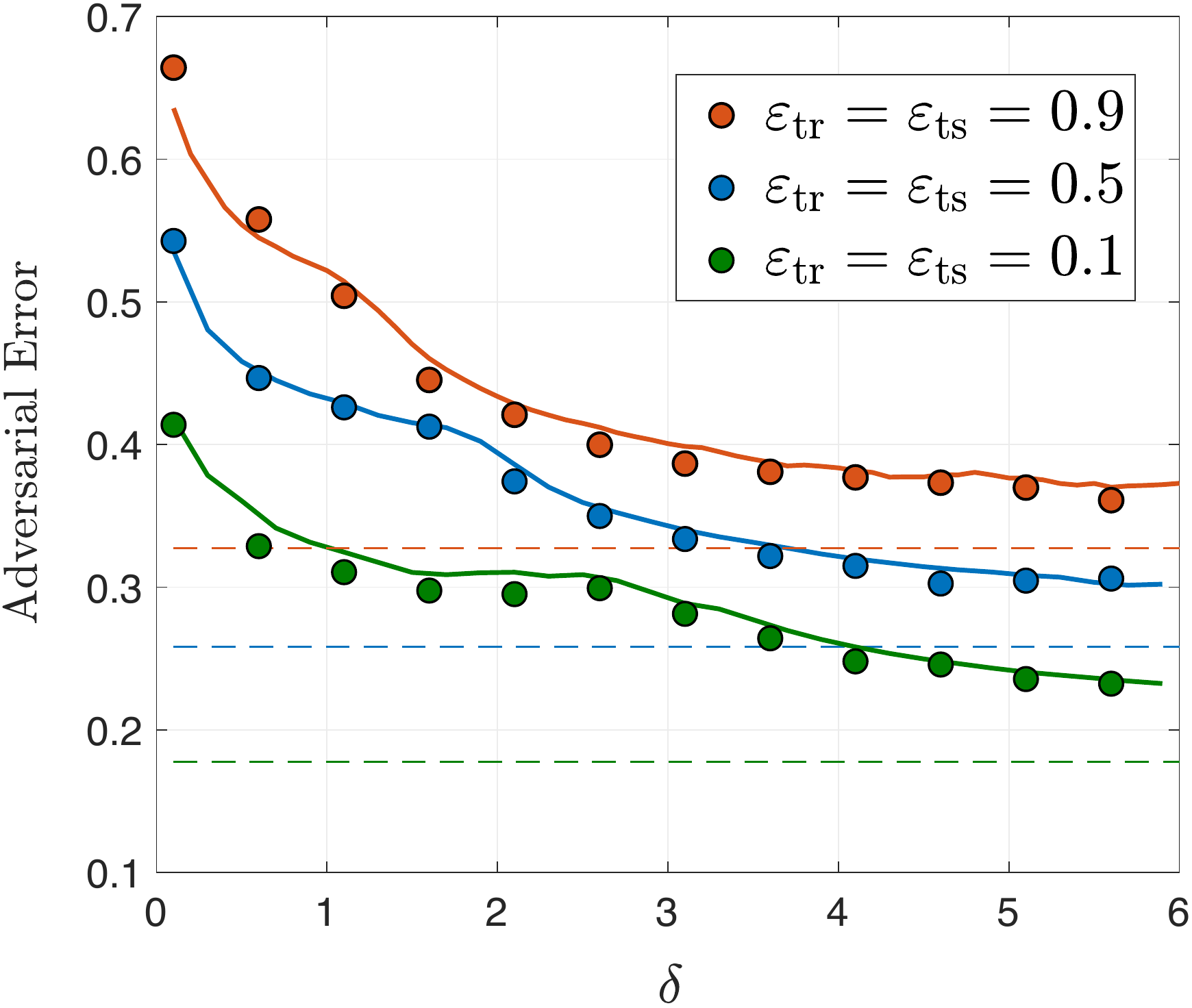}
  \label{fig:1}
\end{subfigure}%
\;\;\;\;\;\;
\begin{subfigure}{}
  \includegraphics[width=.41\linewidth,height=5.1cm]{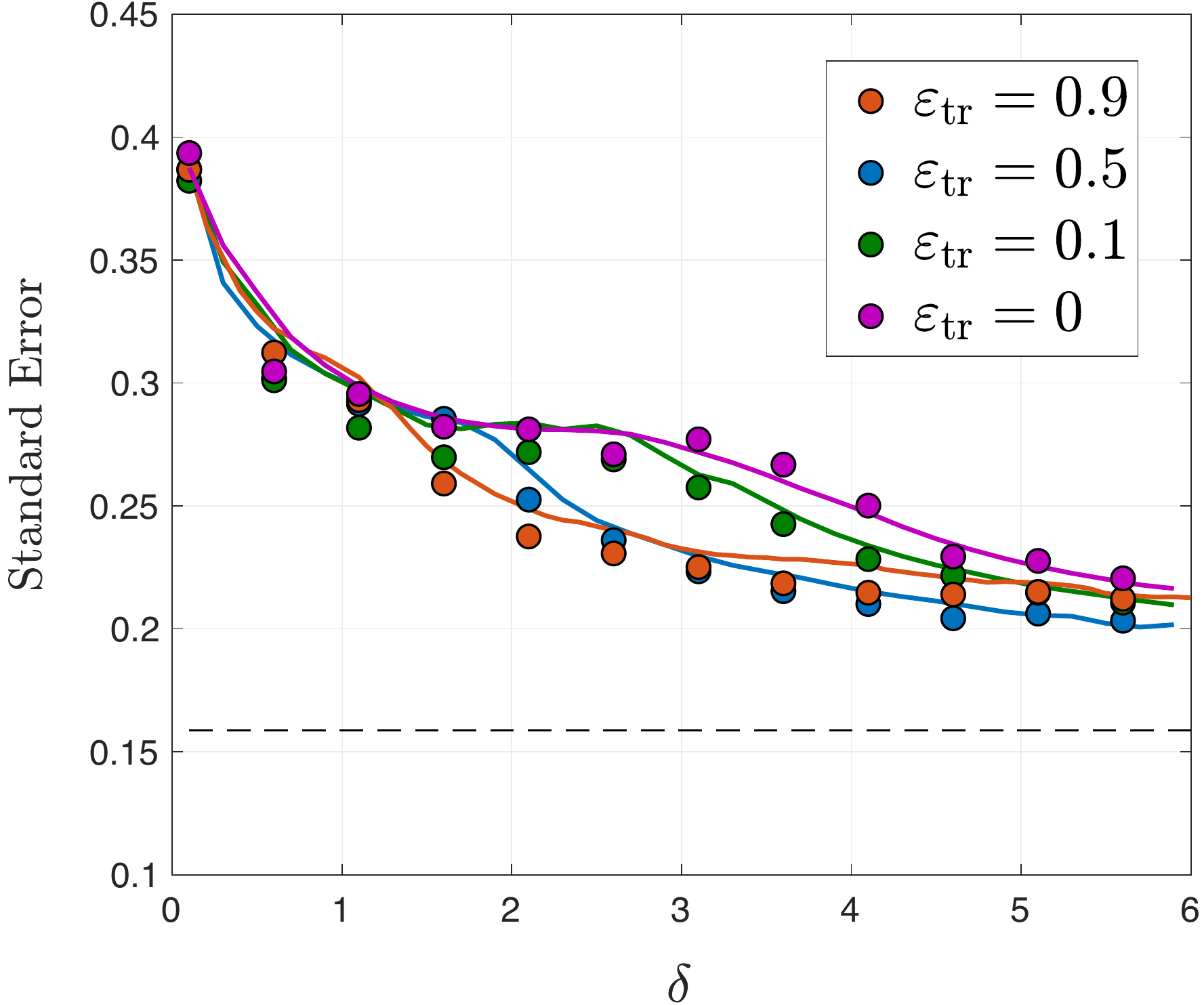}
  \label{fig:4}
\end{subfigure}%
\caption{Adversarial/Standard test error based on $\delta:=m/n$. Solid lines correspond to theoretical predictions while markers denote the empirical results derived by solving ERM using gradient descent. The dashed lines denote the Bayes adversarial error (left) and the Bayes standard error (right). Note that the adversarial error of estimators obtained from adversarial training, approaches the Bayes adversarial error as $\delta$ gets larger.}
\label{fig:fig1}
\end{figure*}

\begin{figure*}[t!]
\centering
\begin{subfigure}{}
  \includegraphics[width=.32\linewidth,height= 4cm]{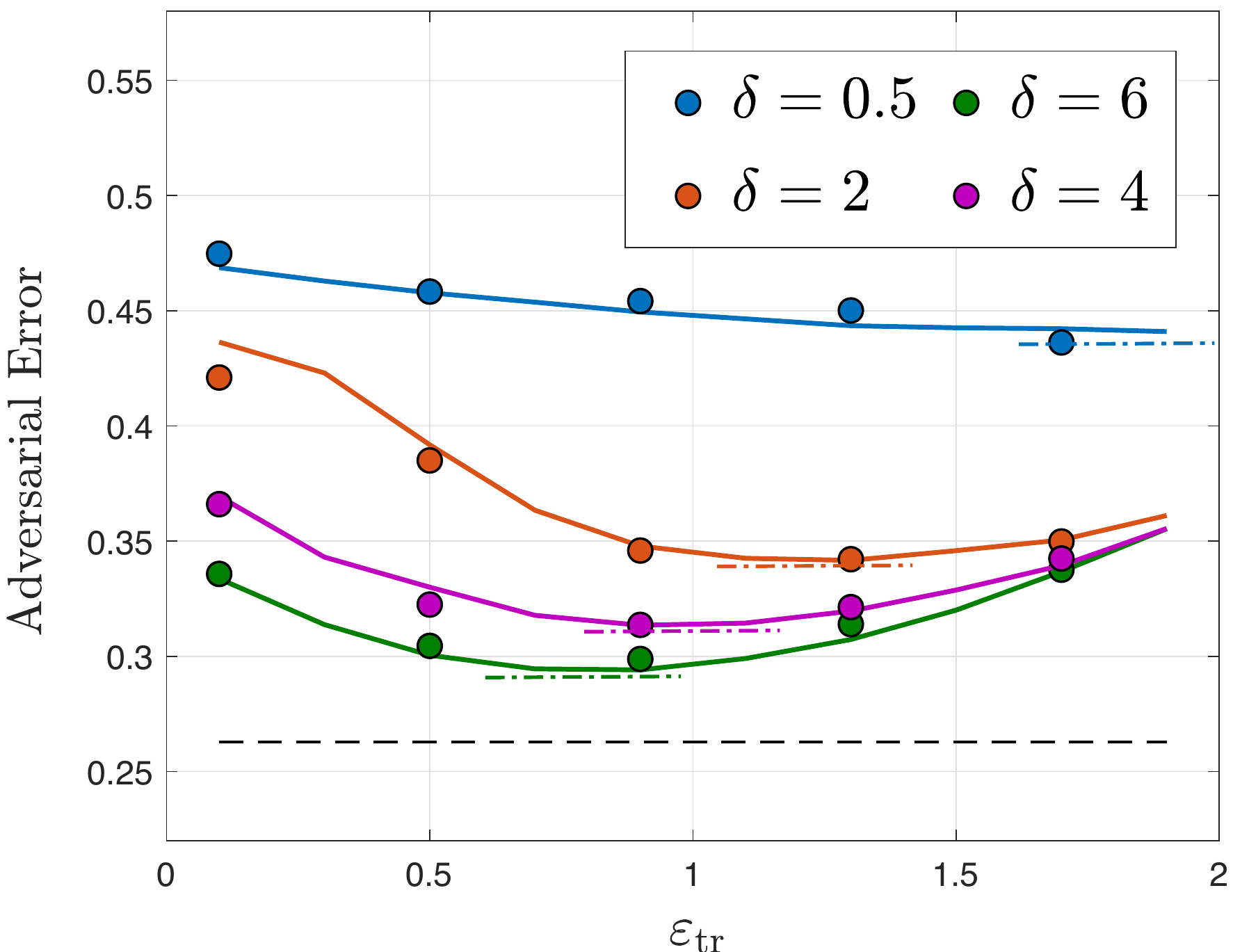}
  \label{fig:2}
\end{subfigure}%
\begin{subfigure}{}
  \includegraphics[width=.32\linewidth,height= 4cm]{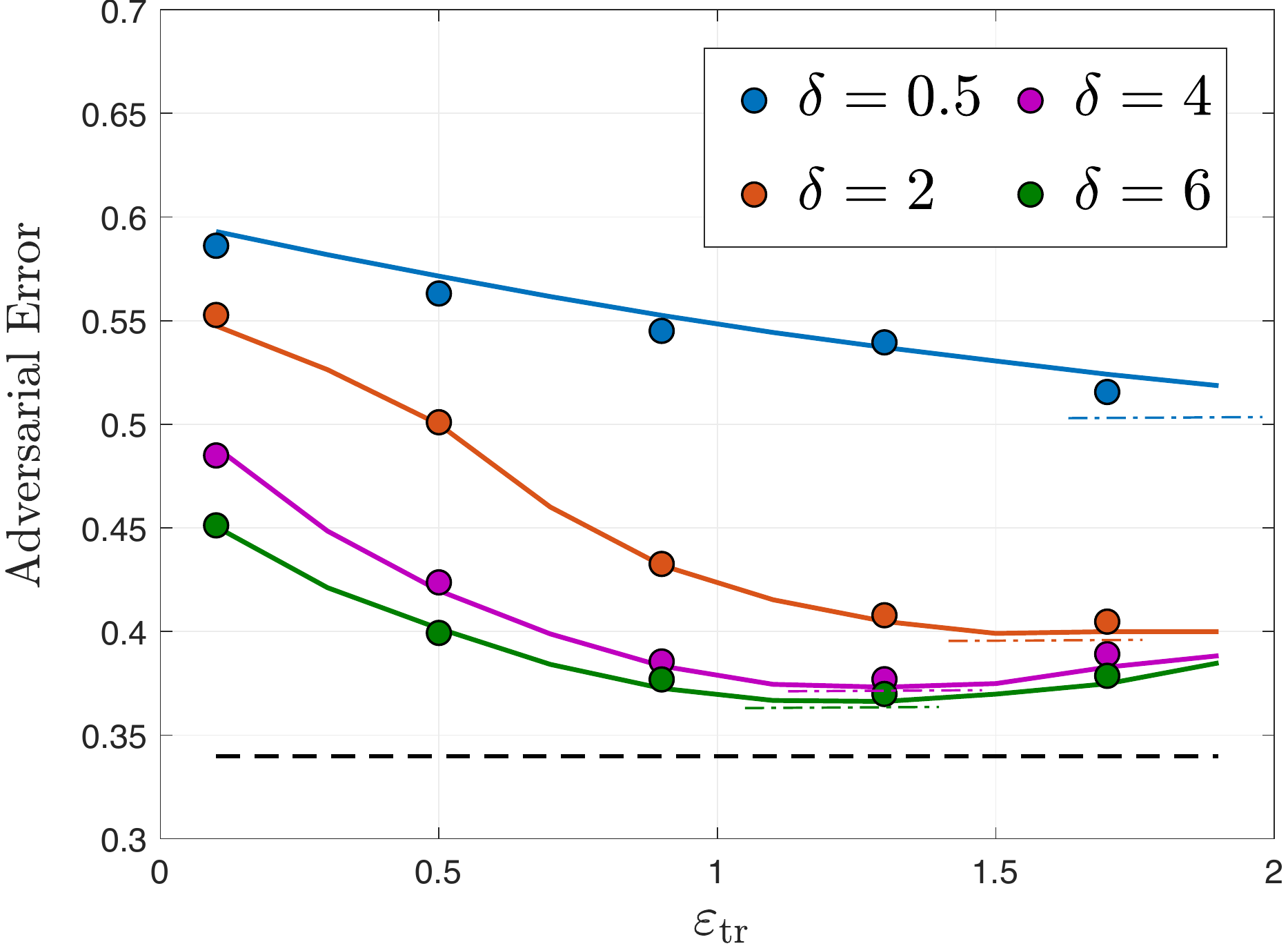}
  \label{fig:3}
\end{subfigure}%
\begin{subfigure}{}
  \includegraphics[width=.325\linewidth,height= 4cm]{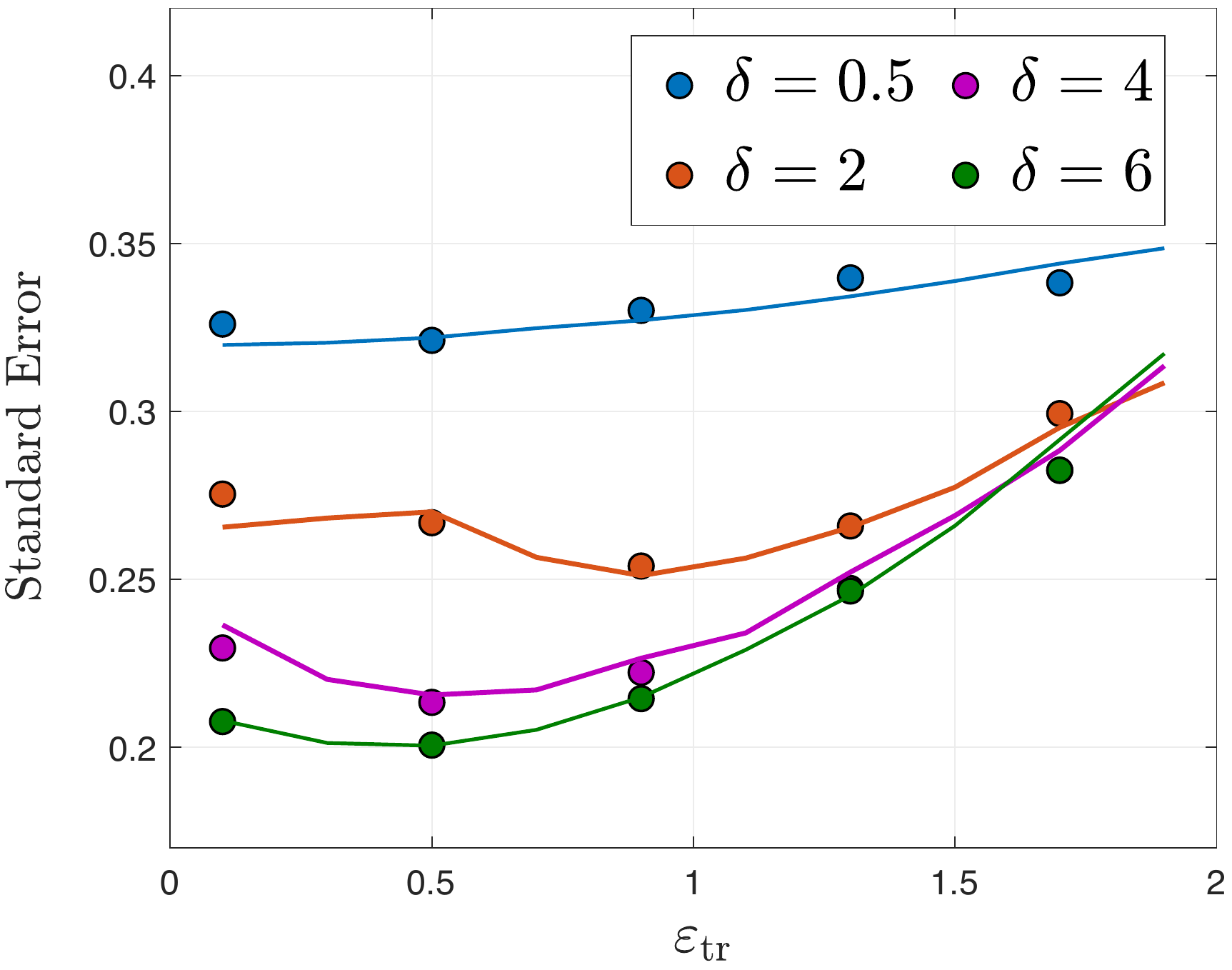}
  \label{fig:5}
\end{subfigure}%

\caption{Theoretical (solid lines) and Empirical (markers) results for the impact of adversarial training on the adversarial test error for $\eps_{\rm ts} = 0.5$ (Left) and $\eps_{\rm ts} = 0.9$ (Middle). The blacked dashed lines denote the Bayes adversarial error for the corresponding values of $\eps_{\rm ts}$. The colored dashed lines depict the optimal value of each curve. Note that the optimal value of $\eps_{\rm tr}$ decreases as $\delta$ grows. Right: Impact of adversarial training on the standard test error, illustrating that adversarial training can improve standard accuracy.}
\label{fig:fig2}
\end{figure*}
\paragraph{Impact of $\bm{\eps}_{\rm tr}$ on standard/adversarial test error.} Adversarial and Standard error curves based on the hyper-parameter $\eps_{\rm tr}$ are illustrated in Figure \ref{fig:fig2}. Note that the adversarial error behavior based on $\eps_{\rm tr}$ is informative on the role of data-set size on the optimal value of $\eps_{\rm tr}$. The top  figures show that the optimal value of $\eps_{\rm tr}$ is typically larger than $\eps_{\rm ts}$. Also note that as $\delta$ gets smaller, larger values of $\eps_{\rm tr}$ are preferred for robustness. Figure \ref{fig:fig2}(Right) illustrates the impact of $\eps_{\rm tr}$ on the standard error, where similar to Figure \ref{fig:fig1}(Right), we observe that adversarial training can help standard accuracy. In particular, we observe that in the under-parameterized regime where $\delta>\delta_{{\eps_{\rm tr}}/{\sqrt{n}},\Pi}$ (as we will define in Section \ref{sec:dis}), adversarial training with small values of $\eps_{\rm tr}$ is beneficial for accuracy. As $\delta$ increases, such gains vanish and indeed adversarial training seems to hurt standard accuracy when $\delta$ is sufficiently large. 
\subsection{Proof Sketch}\label{sec:sketch}
The complete proof of Theorem \ref{thm:main} is deferred to the appendix. Here, we provide an outline of the key steps in deriving \eqref{eq:minmax_main} and \eqref{eq:thm2.1_gen_error}. 
\paragraph{Reducing \eqref{eq:main_erm_Linfty} to a Minimization Problem.}For a decreasing loss function, the  maximization over the perturbation ${\db}$ can be derived in closed-form. In fact, it can be shown that 
$\db_i^\star \triangleq -\frac{\eps_{\rm tr}y_i\,\sign(\thc_n)}{\sqrt{n}},$ optimizes the inner maximization in \eqref{eq:main_erm_Linfty}. Therefore, \eqref{eq:main_erm_Linfty} is equivalent to,
\bea
\min_{\thc_n \in \R^n} \frac{1}{m}\sum_{i=1}^{m} \Lm \left(y_i \left\langle\x_i,\thc_n\right\rangle -\frac{\eps_{\rm tr}}{\sqrt{n}}\|\thc_n\|_1\right)\; {+\;\la\left\|\thc_n\right\|_2^2}. \label{eq:ermaftermax}
\eea
From \eqref{eq:ermaftermax}, we can also see why the normalization of $\eps_{\rm tr}$ is needed in \eqref{eq:main_erm_Linfty}. Recall that (for model \eqref{eq:binarymodel}, for instance), $\x_i\sim\Nn(\mathbf{0},\Sn)$ and $\|\thc_n^\star\|_2\rP1$. For simplicity assume here that $\Sn=\mathbb{I}_n$. For fixed $\thc$, the argument $y_i \langle\x_i,\thc\rangle$ behaves as $\|\thc\|_2 S f(S)$, where $S\sim\Nn(0,1)$. Thus, for $\thc$s that are such that $\|\thc\|_2=\Theta(1)$ (which ought to be the case for ``good" classifiers in view of $\|\thc_n^\star\|_2\approx1$), the term  $y_i \langle\x_i,\thc\rangle$ is an $\Theta(1)$-term. Now, thanks to the normalization $1/\sqrt{n}$ in \eqref{eq:main_erm_Linfty}, the second term $\frac{\eps_{\rm tr}}{\sqrt{n}}\|\thc\|_1$ in \eqref{eq:ermaftermax} is also of the same order. Here, we used again the intuition that  $\|\thc\|_1=\Theta(\sqrt{n})$, as is the case for the true $\thc^\star$. Our analysis formalizes these heuristic explanations. Finally, we remark that there is nothing specific about $q=\infty$ in the reduction \eqref{eq:ermaftermax}. The same reduction holds for any $q\geq 1$, with $\|\thc\|_1$ in \eqref{eq:ermaftermax} substituted by the dual norm $\|\thc\|_{p}$.

\paragraph{The Key Statistics for the Adversarial Error.} Our key observation is that the asymptotics of the adversarial error  of a sequence of arbitrary classifiers $\{\thc_n\}$, depend on the asymptotics of a few key statistics of $\{\thc_n\}$. Specifically, we show the following important lemma. Similar to before, there is nothing special here to $q=\infty$, so we state this result for general $q$. 
\begin{lemma}\label{lem:gen_error0}
Fix $q\geq 1$ and let $\ell_p$ denote the dual norm of $\ell_q$. Let $\widetilde{\thc_n^\star}\triangleq\Sn^{1/2}\thc_n^\star$  for data model \eqref{eq:binarymodel} and $\widetilde{\thc_n^\star}\triangleq\Sn^{-1/2}\thc_n^\star$ for data model \eqref{eq:G_mix}. Further, for both models, define projection matrices $\Th$ and $\Th^\perp$ as follows,
$ \Th\triangleq \widetilde{\thc_n^\star}\widetilde{\thc_n^\star}^\top / \|\widetilde{\thc_n^\star}\|_2,\;\Thp\triangleq \mathbb{I}_{n} - \Th.$
 Further, let $\eps$ and $\eps'$ (possibly scaling with the problem dimensions) be the upper-bounds on norm of the adversarial perturbation during training and test time, respectively. 
With this notation, assume that the sequence of $\{\thc_n\}$ is such that the following limits are true for the statistics $\left\|\thc_n\right\|_p$ , $\|\Th\Sn^{1/2}\thc_n\|_2$ and $\|\Th^\perp\Sn^{1/2}\thc_n\|_2$,
$$
\{\left\|\thc_n\right\|_p\}\stackrel{P}{\rightarrow} u, ~~~~\{\|\Th\Sn^{1/2}\thc_n\|_2\} \stackrel{P}{\rightarrow} \mu,~~~~ \{\|\Th^\perp\Sn^{1/2}\thc_n\|_2\} \stackrel{P}{\rightarrow} \alpha,
$$
where $\mathcal{C}=\widetilde\zeta,\zeta$, for GMM and GLM, respectively. Then, the adversarial test error satisfies,
\bea\label{eq:gen_err_adv0}\hspace{-.1in}
\left\{\mathcal{E}_{\ell_q, {\eps'}} (\thc_n)\right\}\rP\begin{cases} Q \left(\frac{\mu\widetilde\zeta^2-u\,\eps' }{\sqrt{{\mu}^2\widetilde\zeta^2+{\alpha}^2}}\right) ~~{\text {for  model }\eqref{eq:G_mix}},\\[6pt]\Pro \Big( \mu \zeta\,S\,  \psi(\zeta S) + \alpha G - u {\eps'} < 0 \Big) ~~ {\text {for model }\eqref{eq:binarymodel}}.\end{cases}
\eea
\end{lemma}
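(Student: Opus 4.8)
The plan is to reduce the worst-case perturbation inside the indicator to a closed form, then rewrite the adversarial error as the probability of a scalar event whose statistics are exactly the three quantities $\|\thc_n\|_p$, $\|\Th\Sn^{1/2}\thc_n\|_2$, $\|\Th^\perp\Sn^{1/2}\thc_n\|_2$. First I would handle the inner maximization: since the decision is based on $\sign(\langle\x+\db,\thc_n\rangle)$ and $y\in\{\pm1\}$, a misclassification happens iff $y\langle\x+\db,\thc_n\rangle<0$ for some admissible $\db$; by Hölder the adversary can decrease $y\langle\x,\thc_n\rangle$ by exactly $\eps'\|\thc_n\|_p$ (the optimal $\db$ aligned against $\sign(\thc_n)$ in the $\ell_q$ ball, just as in the reduction \eqref{eq:ermaftermax}). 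Hence
\bea
\mathcal{E}_{\ell_q,\eps'}(\thc_n)=\Pro_{\x,y}\big(y\langle\x,\thc_n\rangle < \eps'\|\thc_n\|_p\big).\nn
\eea

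Next I would compute the distribution of $y\langle\x,\thc_n\rangle$ under each data model, conditionally on the (high-probability) event that the three statistics are near their limits. For the GLM \eqref{eq:binarymodel}, write $\x=\Sn^{1/2}\z$ with $\z\sim\Nn(0,\I_n)$, so $y\langle\x,\thc_n\rangle = \psi(\langle\thc_n^\star,\x\rangle)\,\langle\Sn^{1/2}\thc_n,\z\rangle$. Decomposing $\z$ along $\widetilde{\thc_n^\star}$ and its orthogonal complement, the component of $\Sn^{1/2}\thc_n$ in the direction of $\widetilde{\thc_n^\star}=\Sn^{1/2}\thc_n^\star$ has norm $\|\Th\Sn^{1/2}\thc_n\|_2\to\mu$ and pairs with $S:=\langle \widetilde{\thc_n^\star},\z\rangle/\|\widetilde{\thc_n^\star}\|_2\sim\Nn(0,1)$, while $\langle\thc_n^\star,\x\rangle = \langle\widetilde{\thc_n^\star},\z\rangle = \zeta_n S$ (using $\zeta_n=\|\widetilde{\thc_n^\star}\|_2\to\zeta$); the orthogonal component contributes $\|\Th^\perp\Sn^{1/2}\thc_n\|_2\, G\to\alpha G$ with $G\sim\Nn(0,1)$ independent of $S$. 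This gives $y\langle\x,\thc_n\rangle \approx \mu\,\zeta S\,\psi(\zeta S) + \alpha G$, and substituting into the probability above yields the second branch of \eqref{eq:gen_err_adv0}. For the GMM \eqref{eq:G_mix}, condition on $y$, write $\x = y\thc_n^\star + \Sn^{1/2}\z$; then $y\langle\x,\thc_n\rangle = \langle\thc_n^\star,\thc_n\rangle + y\langle\Sn^{1/2}\thc_n,\z\rangle$, which is Gaussian with mean $\langle\thc_n^\star,\thc_n\rangle=\langle\Sn^{-1/2}\thc_n^\star,\Sn^{1/2}\thc_n\rangle = \mu\,\widetilde\zeta\cdot\widetilde\zeta=\mu\widetilde\zeta^2$ (the inner product of $\Sn^{1/2}\thc_n$ with its $\Th$-component direction $\widetilde{\thc_n^\star}/\|\widetilde{\thc_n^\star}\|_2$ is $\mu$, and $\|\widetilde{\thc_n^\star}\|_2=\widetilde\zeta_n\to\widetilde\zeta$) and variance $\|\Sn^{1/2}\thc_n\|_2^2 = \mu^2\widetilde\zeta^2+\alpha^2$; the event $\{y\langle\x,\thc_n\rangle<\eps' u\}$ therefore has probability $Q\big((\mu\widetilde\zeta^2-u\eps')/\sqrt{\mu^2\widetilde\zeta^2+\alpha^2}\big)$, independent of $y$, which is the first branch.

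Finally I would make the convergence rigorous: the right-hand sides are continuous functions of $(u,\mu,\alpha)$ (the $Q$-function is Lipschitz; for the GLM branch continuity in $\alpha$ at $\alpha=0$ and joint continuity follow from dominated convergence, and one should note the probability is continuous in $u$ as long as the law of $\mu\zeta S\psi(\zeta S)+\alpha G$ has no atom at $u\eps'$, which holds when $\alpha>0$ or generically), so the convergence in probability of the statistics transfers to convergence in probability of $\mathcal{E}_{\ell_q,\eps'}(\thc_n)$ by the continuous mapping theorem, after noting that for fixed $\thc_n$ the adversarial error is a deterministic function of $(\|\thc_n\|_p,\|\Th\Sn^{1/2}\thc_n\|_2,\|\Th^\perp\Sn^{1/2}\thc_n\|_2)$ plus a term that vanishes because the expectation over the fresh $(\x,y)$ concentrates. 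The main obstacle I anticipate is the bookkeeping in the GMM case that the mean of $y\langle\x,\thc_n\rangle$ is exactly $\mu\widetilde\zeta^2$ rather than $\mu\widetilde\zeta$ — i.e. correctly tracking where the covariance $\Sn$ versus $\Sn^{-1}$ enters through the definitions of $\widetilde{\thc_n^\star}$ and $\widetilde\zeta_n$ — together with the degenerate-$\alpha$ edge cases in the continuity argument; the Gaussianity of the features makes everything else essentially a direct computation.
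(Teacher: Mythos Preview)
Your approach is exactly the paper's: solve the inner maximization by H\"older/duality to get $\Pro\big(y\langle\x,\thc_n\rangle<\eps'\|\thc_n\|_p\big)$, whiten via $\Sn^{1/2}$, and decompose $\Sn^{1/2}\thc_n$ along $\widetilde{\thc_n^\star}$ and its orthogonal complement so that the event depends only on the three scalar statistics; the paper's proofs of Lemmas~\ref{lem:gen_error} and~\ref{lem:arbit_gen_error_gmm} carry out precisely this computation.

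One genuine bookkeeping slip to fix: under the hypothesis \emph{as written} ($\|\Th\Sn^{1/2}\thc_n\|_2\to\mu$), your own derivation yields $\langle\Th\Sn^{1/2}\thc_n,\z\rangle\to\mu S$ in the GLM case and, in the GMM case, mean $\langle\widetilde{\thc_n^\star},\Sn^{1/2}\thc_n\rangle\to\mu\,\widetilde\zeta$ with variance $\mu^2+\alpha^2$. The extra factors of $\zeta$ (resp.\ $\widetilde\zeta$) you insert to land on \eqref{eq:gen_err_adv0} are not supported by your argument (your line ``$=\mu\widetilde\zeta\cdot\widetilde\zeta$'' multiplies by $\widetilde\zeta$ one time too many). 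In the paper's appendix restatements the symbol $\mu$ is the \emph{normalized} inner product $\langle\widetilde{\thc_n^\star},\widetilde{\thc_n}\rangle/\|\widetilde{\thc_n^\star}\|_2^2$, which differs from $\|\Th\Sn^{1/2}\thc_n\|_2$ by exactly a factor $\|\widetilde{\thc_n^\star}\|_2$; that is where the $\zeta$'s legitimately enter. Once you adopt that normalization, your computations go through verbatim and match the paper's.
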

The detailed proof of the lemma is deferred to the appendix. There are essentially two steps in establishing the result. The first is to exploit the decreasing nature of the 0/1-loss to explicitly optimize over $\db_i$. This optimization results in the dual norm $\|\thc_n\|_{p}$. The second step is to consider the change of variables $\thc_n\Rightarrow\widetilde{\thc_n}\triangleq\Sn^{1/2}\thc_n$ and decompose $\widetilde{\thc_n}$ on its projection on $\Sn^{1/2}\thc_n^\star$ and its complement. In the notation of the lemma, $\widetilde{\thc_n}=\Theta\widetilde{\thc_n}+\Theta^\perp\widetilde{\thc_n}$. The Gaussianity of the feature vectors together with orthogonality of the two components in the decomposition of $\thc_n$ explain the appearance of the Gaussian variables $S$ and $G$ in \eqref{eq:gen_err_adv0}. 
%
When applied to $\ell_\infty$-perturbations, Lemma \ref{lem:gen_error0} reduces the goal of computing asymptotics of the adversarial risk of $\widehat{\thc_n}$ to computing asymptotics of the corresponding statistics $\|\Sn^{-1/2}\widetilde{\thc_n}\|_1$, $\|\Th\widetilde{\thc_n}\|_2$, and $\|\Th^\perp\widetilde{\thc_n}\|_2$. 
\paragraph{Scalarizing the Objective Function.} The previous two steps set the stage for the core of the analysis, which we outline next. Thanks to step 1, we are now asked to analyze the statistical properties of a convex optimization problem. On top of that, due to step 2, the outcomes of the analysis ought to be asymptotic predictions for the quantities $\|\Sn^{-1/2}\widetilde{\thc_n}\|_1$, $\|\Th\widetilde{\thc_n}\|_2$ and $\|\Th^\perp\widetilde{\thc_n}\|_2$. However, note that the term $\|\Sn^{-1/2}\widetilde{\thc_n}\|_1$ appears inside the loss function. In particular, this is a new challenge, specific to robust optimization compared to previous analysis of standard regularized ERM. Moreover, both of the terms $\|\Sn^{-1/2}\widetilde{\thc_n}\|_1$ and $\|\Sn^{-1/2}\widetilde{\thc_n}\|_2^2$ are not decomposable based on $\|\Th\widetilde{\thc_n}\|_2$ and $\|\Th^\perp\widetilde{\thc_n}\|_2$, due to the presence of the term $\Sn^{-1/2}$. The first step to overcome these challenges is to identify the appropriate minimax Auxiliary Optimization (AO) problem that is probabilistically equivalent to \eqref{eq:ermaftermax}. The second step is to scalarize the AO based on Lagrangian equivalent formulation. Finally, we perform a probabilistic analysis of the scalar AO. This results in the deterministic minimax problem in \eqref{eq:minmax_main}. See the appendix for details.
\section{Asymptotics for Adversarial Training with $\pmb{\ell_2}$ Perturbations}\label{sec:ell_2}
%
When $q=2$, the optimization problem in \eqref{eq:main_erm} is equivalent to the following, by choosing $\db_i=-y_i\eps_{\rm tr}\thc/\|\thc\|_2$,
\bea
\min_{\thc_n} \frac{1}{m}\sum_{i=1}^{m} \Lm \left(y_i \left\langle\x_i,\thc_n\right\rangle -\eps_{\rm tr}\|\thc_n\|_2\right)\; {+\;\la\left\|\thc_n\right\|_2^2}. \label{eq:ermaftermax-q2}
\eea
 Here, we assume $\{\Sn\}$ to be a sequence of positive definite matrices. Denote $\widetilde{\vb}\triangleq(\al,\tau_1,\tau_3,w,\mu,\tau_2,\beta,\gamma,\eta)$ and define $g:\R^9\rightarrow\R$ as follows,
$$g_{_{\delta,\mathcal{C},\eps_{\rm tr}}}(\widetilde{\vb}) \triangleq -\gamma w - \frac{\mu^2\tau_2}{2\alpha}\mathcal{C}^2-\frac{\alpha\beta^2}{2\delta\tau_2} - \frac{\alpha\tau_2}{2}+ \frac{\beta\tau_1}{2} + \eta \mu- \frac{\eta^2\alpha}{2\tau_2\mathcal{C}^2}+\frac{\eps_{\rm tr}\gamma\tau_3}{2},
$$
where recall that $\mathcal{C} \triangleq \widetilde\zeta$ and $\zeta$ for models \eqref{eq:G_mix} and \eqref{eq:binarymodel}, respectively. With this notation, we introduce the following min-max problem,
\bea
&\min_{\substack{{\alpha,\tau_1,\tau_3,w\in\R_+,}\\ {\mu\in\R}}}\;\;\max_{\substack{{\tau_2,\beta,\gamma\in\R_+,}\\ {\eta\in\R}}} \,   
\;\;\;g_{_{\delta,\mathcal{C},\eps_{\rm tr}}}(\widetilde\vb) \;\;+\;\;\E_{Z_{\al,\mu}}\left[\env{\Lm}{Z_{\al,\mu}-w}{\frac{\tau_1}{\beta}}\right]  \nn\\
&\hspace{.5in}+ \frac{\eta^2\al^2}{\tau_2^2\mathcal{C}^4}\left(\frac{\eps_{\rm tr}\gamma}{2\tau_3}+r\right){\E}_{L}\left[\frac{\frac{\mathcal{C}^4\beta^2}{\eta^2\delta}+ \widetilde L}{\frac{\eps_{\rm tr}\gamma\alpha+2\tau_3 r\alpha}{\tau_2\tau_3}+L} \right],\label{eq:minimax_final_ell2_main}
\eea
where we define $\widetilde L \triangleq 1/L$ and $L$ for models \eqref{eq:G_mix} and \eqref{eq:binarymodel}, respectively and the random variables $L$ and $Z_{\al,\mu}$ are defined same as in \eqref{eq:minmax_main}. 
\begin{theorem}\label{cor:GLM_ell2}
Consider the same setting as in Theorem \ref{thm:main}, only here assume that $q=2$ and $\{\Sn\}$ is a sequence of positive definite matrices satisfying Assumptions \ref{ass:1}-\ref{ass:3}. Let $(\alpha^\star,\mu^\star,w^\star)$ be the unique solution to the minimax problem \eqref{eq:minimax_final_ell2_main}.
 Then, the high-dimensional limit for the adversarial test error ($\mathcal{E}_{\ell_2,\eps_{\rm ts}}$) satisfies the following,
\bea\label{eq:thm_gen_error_ell2}
\left\{\mathcal{E}_{\ell_2, \eps_{\rm ts}}\left(\widehat{\thc_n}\right) \right\}\rP  \begin{cases} Q \Big(\frac{\mu^\star\widetilde{\zeta}^2-w^\star\,\eps_{\rm ts}/{\eps_{\rm tr}} }{\sqrt{{\mu^\star}^2\widetilde{\zeta}^2+{\alpha^\star}^2}}\Big)  ~~{\text{for data model}~\eqref{eq:G_mix}}, \\[7pt]\Pro \Big( \mu^\star \zeta S \psi(\zeta S) + \alpha^\star G < w^\star {\eps_{\rm ts}}/{\eps_{\rm tr}} \Big) ~~{\text{for data model }~\eqref{eq:binarymodel}}.\end{cases}
\eea
\end{theorem}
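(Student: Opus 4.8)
The plan is to follow exactly the same pipeline used for the $\ell_\infty$ case in Theorem \ref{thm:main}, with the only structural change being that the adversarial penalty inside the loss is now the $\ell_2$-norm $\eps_{\rm tr}\|\thc_n\|_2$ rather than the weighted $\ell_1$-norm. First I would invoke the reduction already recorded in \eqref{eq:ermaftermax-q2}: since $\Lm$ is convex and decreasing, the worst-case perturbation is $\db_i^\star = -y_i\eps_{\rm tr}\thc_n/\|\thc_n\|_2$, so adversarial training is equivalent to the (still convex) minimization $\min_{\thc_n}\frac1m\sum_i \Lm(y_i\langle\x_i,\thc_n\rangle - \eps_{\rm tr}\|\thc_n\|_2) + \la\|\thc_n\|_2^2$. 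Next, Lemma \ref{lem:gen_error0} (applied with $q=2$, hence $p=2$, $\eps=\eps_{\rm tr}$, $\eps'=\eps_{\rm ts}$) reduces the problem of computing the adversarial test error to determining the high-dimensional limits of the three statistics $\|\thc_n\|_2$, $\|\Th\Sn^{1/2}\thc_n\|_2$, and $\|\Th^\perp\Sn^{1/2}\thc_n\|_2$ of the minimizer $\widehat{\thc_n}$. Note that here the dual norm appearing in the loss ($\|\thc_n\|_2$) coincides with the ridge norm, which is a genuine simplification relative to the $\ell_\infty$ analysis.

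The core of the argument is the CGMT-based scalarization. I would perform the change of variables $\thc_n \mapsto \widetilde{\thc_n} = \Sn^{1/2}\thc_n$ and split $\widetilde{\thc_n} = \Th\widetilde{\thc_n} + \Th^\perp\widetilde{\thc_n}$ along $\widetilde{\thc_n^\star}$ and its orthogonal complement, writing $\mu = \|\Th\widetilde{\thc_n}\|_2$ and $\alpha = \|\Th^\perp\widetilde{\thc_n}\|_2$. Introducing the feature Gaussian matrix and applying the CGMT yields the Auxiliary Optimization; the quadratic-in-$\thc_n$ terms that are not separable along the $\Th/\Th^\perp$ decomposition (namely $\|\Sn^{-1/2}\widetilde{\thc_n}\|_2^2$ from the ridge penalty and the term $\|\thc_n\|_2^2 = \|\Sn^{-1/2}\widetilde{\thc_n}\|_2^2$ now also entering through $\eps_{\rm tr}\|\thc_n\|_2$) are handled by introducing Lagrange/auxiliary scalars — this is where the extra variable $\tau_3$ (relative to the eight-variable problem \eqref{eq:minmax_main}) and the term $\eps_{\rm tr}\gamma\tau_3/2$ in $g_{_{\delta,\mathcal{C},\eps_{\rm tr}}}$ arise: one writes $\eps_{\rm tr}\|\thc_n\|_2 = \min_{\tau_3>0}\big(\tfrac{\eps_{\rm tr}}{2\tau_3}\|\thc_n\|_2^2 + \tfrac{\eps_{\rm tr}\tau_3}{2}\big)$ so that everything becomes quadratic and can be resolved in closed form against the Gaussian vectors. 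Concentration of the resulting sums (using Assumptions \ref{ass:1}--\ref{ass:3}, in particular the $W_2$-convergence of the spectral triple to $\Pi$) collapses the AO to the deterministic nine-variable minimax problem \eqref{eq:minimax_final_ell2_main}; the $\E_L[\cdot]$ term there is precisely the limit of the quadratic form $\thc_n^\top(\text{diagonal in the }\Sn\text{-eigenbasis})\thc_n$ after optimizing out the per-coordinate variables. Strong convexity from $\la>0$ gives uniqueness of $(\alpha^\star,\mu^\star,w^\star)$ and the usual CGMT deviation argument transfers the optimizers of the AO back to the statistics of $\widehat{\thc_n}$.

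Finally I would substitute $u = \|\widehat{\thc_n}\|_2 \rP w^\star/\eps_{\rm tr}$ — this identification comes from the stationarity of the scalarized problem, exactly as $\|\thc_n\|_1/\sqrt n$ was identified with $w^\star/\eps_{\rm tr}$ in Theorem \ref{thm:main} — together with $\mu,\alpha \rP \mu^\star,\alpha^\star$ into \eqref{eq:gen_err_adv0} of Lemma \ref{lem:gen_error0}, which produces \eqref{eq:thm_gen_error_ell2} verbatim. The main obstacle is the non-separability step: unlike the isotropic case, $\|\Sn^{-1/2}\widetilde{\thc_n}\|_2^2$ does not decompose along $\Th\widetilde{\thc_n}$ and $\Th^\perp\widetilde{\thc_n}$, so one must carry a full coordinate-wise (in the $\Sn$-eigenbasis) description of $\widehat{\thc_n}$ through the AO and show that, after introducing $\tau_3$ and optimizing, the limiting object depends on $\Pi$ only through the single scalar expectation $\E_L$ appearing in \eqref{eq:minimax_final_ell2_main}; verifying that this scalarization is exact (no loss of information about the three target statistics) and that the min-max order can be exchanged is the delicate part. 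The remaining steps — the closed-form inner maximization, the CGMT application, and the back-substitution into Lemma \ref{lem:gen_error0} — are essentially identical to the $\ell_\infty$ case and should go through with only notational changes.
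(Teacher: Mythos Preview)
Your proposal is correct and follows essentially the same route as the paper's own proof: the paper also starts from \eqref{eq:ermaftermax-q2}, reuses the entire CGMT pipeline from the $\ell_\infty$ analysis up to the analogue of \eqref{eq:mimax_8}, introduces the extra scalar $\tau_3$ via the square-root trick $\|\Sn^{-1/2}\widetilde{\rhoc_n}\|_2=\min_{\tau_3>0}\tfrac{1}{2\tau_3}\|\Sn^{-1/2}\widetilde{\rhoc_n}\|_2^2+\tfrac{\tau_3}{2}$, and then diagonalizes in the $\Sn$-eigenbasis (using $\mathbf{U}_n$) so that the per-coordinate quadratic minimization over $\widetilde{\rhoc_n}$ becomes separable and its limit, via Assumption~\ref{ass:3}, is exactly the $\E_L[\cdot]$ term in \eqref{eq:minimax_final_ell2_main}. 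Your identification of the non-separability of $\|\Sn^{-1/2}\widetilde{\thc_n}\|_2^2$ along $\Th/\Thp$ as the key obstacle, and of the eigenbasis scalarization as the resolution, matches the paper exactly.
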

Compared to Theorem \ref{thm:main}, note here that the asymptotic prediction only depends on the total energy of $\thc^\star_n$(which was assumed to be 1 in Assumption \ref{ass:2}) and not its empirical distribution $T$. We present numerical illustrations on $\ell_2$-attacks in Appendix \ref{sec:add_num_exp}, where we also discuss how the data-set size and attack budgets, affect the adversarial and standard test errors. 
\section{Further Discussion on the Asymptotic Results}\label{sec:dis}

\begin{remark}[Training with no Regularization and \emph{Robust Separability}]
 An instance of special interest in practice is solving the \emph{unregularized} version of \eqref{eq:main_erm}.
\bea\label{eq:main_erm_Linfty_la0}
\min_{\thc_n } \;\frac{1}{m}\sum_{i=1}^m  \max_{\|\db_i\|_{q}\le\, \eps} \Lm \left(y_i \left\langle\x_i+\db_i,\thc_n\right\rangle\right).
\eea
Following the same proof techniques as above, we can show that the formulas predicting the statistical behavior of this unconstrained version are given by the same formulas as in Theorem \ref{thm:main} with $r=0$ and also provided that  the sampling ration $\delta$ is large enough so that a certain robust separability condition holds. 
In what follows, we describe this condition. 
We start with some background on (standard) data separability. Recall, that training data $\{(\x_i,y_i)\}$ are linearly separable if  and only if
$$
\exists\thc\in\R^n\quad\text{s.t.}\quad y_i\langle\x_i,\thc\rangle \geq 1,~\forall i\in[m].
$$
Now, we say that data are $(\ell_q,\eps)$-separable if and only if
$$
\exists\thc\in\R^n\quad\text{s.t.}\quad y_i\langle\x_i,\thc\rangle - \eps\|\thc\|_p \;\geq 1,~\forall i\in[m].
$$
Note that (standard) linear separability is equivalent to $(\ell_q,0)$-separability as defined above. Moreover, it is clear that $(\ell_q,\eps)$-separability implies $(\ell_q,0)$-separability for any $\eps\geq 0$. Recent works have shown that in the proportional limit data from the GLM are $(\ell_q,0)$-separable if and only if the sampling ratio satisfies $\delta<\delta_{\psi}$ \cite{candes2018phase, sur2019modern,montanari2019generalization,deng2019model} for some $\delta_{\psi}>2$.  Here, the subscript $\psi$ denotes dependence of the phase-transition threshold $\delta_{\psi}$ on the link function $\psi$ of the GLM. We conjecture that there is a threshold  $\delta_{\psi,\eps,\Pi}$, depending on $\eps$, the link function $\psi$ and the probability distribution $\Pi$ such that data are  $(\ell_q,\eps)$-separable if and only if  $\delta<\delta_{\psi,\eps,\Pi}$. We believe that our techniques can be used to prove this conjecture and determine $\delta_{\psi,\eps,\Pi}$, but we leave this interesting question to future work. Instead here, we simply note that based on the above discussion, if such a threshold exists, then it must satisfy $\delta_{\psi,\eps,\Pi}\leq \delta_{\psi,0,\Pi}$, for all values of $\eps$, and in fact it is a decreasing function of $\eps$.
Now let us see how this notion relates to solving \eqref{eq:main_erm_Linfty} and to our asymptotic characterization of its performance. Recall from \eqref{eq:ermaftermax} that the robust ERM for decreasing losses reduces to the minimization
$
\min_{\thc} \sum_{i=1}^{m} \Lm (y_i \langle\x_i,\thc\rangle - \eps\|\thc\|_p). 
$
Thus, using again the decreasing nature of the loss, it can be checked that the solution to the objective function above becomes unbounded for $\thc$ such that the argument of the loss is positive for any $i\in[m]$. This is equivalent to the condition of  $(\ell_q,\eps)$-separability. In other words, when data are $(\ell_q,\eps)$-separable, the robust estimator is unbounded. Recall from Section \ref{sec:sketch} that the minimax optimization variables $w,\mu,\alpha$ represent the limits of $\|\widehat{\thc_n}\|_p$, $\|\Th\Sn^{1/2}\widehat{\thc_n}\|_2$, and $\|\Th^\perp\Sn^{1/2}\widehat{\thc_n}\|_2$. Thus, if $\widehat{\thc_n}$ is unbounded, then $w^\star,\mu^\star,\alpha^\star$ are not well defined. In accordance with this, we conjecture that the minimax problem \eqref{eq:minmax_main} for $r=0$ (corresponding to \eqref{eq:main_erm_Linfty_la0}) has a solution if and only if the data are \emph{not} $(\ell_q,\eps)$-separable, equivalently, iff $\delta>\delta_{\psi,\eps,\Pi}$. Equivalent results are applicable to the Gaussian-Mixture models.
\end{remark} 

\begin{remark}[On {Statistical} Limits in Adversarial Training] The asymptotics in \eqref{eq:thm_gen_error_ell2} imply that for $\ell_2$ perturbations and isotropic features, the adversarial error depends on the ratio $\alpha^\star/\mu^\star$. In fact, it can be seen that smaller values of the ratio lead to decreased adversarial error. This leads to an interesting conclusion: \emph{In order to find the hyper-parameter $\eps_{\rm tr}$ that  minimizes the adversarial error, it suffices to tune  $\eps_{\rm tr}$ to minimize the ratio $\alpha^\star/\mu^\star$}. A similar conclusion can be made for the case  of $\ell_\infty$ perturbations, by noting from \eqref{eq:thm2.1_gen_error} that the adversarial error is characterized in a closed form in terms of $(\alpha^\star,\mu^\star,w^\star)$. In view of these observations, our sharp guarantees for the performance of adversarial training open the way to answering questions on the statistical limits and optimality of adversarial training, e.g. \emph{how to optimally tune $\eps_{\rm tr}$? How to optimally choose the loss function and what is the best minimum values of adversarial error achieved by the family of robust estimators in \eqref{eq:main_erm}? How do these answers depend on the adversary budget and/or the sampling ratio $\delta$?} Fundamental questions of this nature have been recently addressed in the non-adversarial case  based on the corresponding saddle-point equations for standard ERM, e.g.,  \cite{bean2013optimal,celentano2019fundamental,mai2019large,mai2019high,taheri2020sharp,taheri2021fundamental}. Theorems \ref{thm:main} and \ref{cor:GLM_ell2} are the first steps towards such extensions to the adversarial settings. 
\end{remark}

\section{Conclusions and Future Directions}
In this paper, we studied the generalization behavior of adversarial training in a binary classification setting. Our results included the adversarial error and standard error of estimators obtained by $\ell_q$-norm bounded perturbations, in the high-dimensional setting. In particular, we derived precise theoretical predictions for the performance of adversarial training for the GLM and GMM. Numerical simulations validate theoretical predictions even for relatively small problem dimensions and demonstrate the role of sampling ratio $\delta$, data model and the values of $q$, $\eps_{\rm tr},\eps_{\rm ts}$ on adversarial robustness. Finally, we remark that there is nothing special about the choice of regularization (ridge-regularization is considered in this paper) and the current analysis can be easily extended to general convex regularization functions. There are a number of directions left for future work. Going beyond Gaussian data distributions or proving universality results are among the most interesting future directions. Our results do not perfectly demonstrate the recently emerged benign overfitting phenomenon, as we are considering linear models. In this regard, despite the fact that considering a mismatched linear model in our setting can be insightful(e.g., similar to \cite{hastie2019surprises,montanari2019generalization,deng2019model}), it is preferable to study the asymptotic behavior of adversarial training for other models such as Neural Tangent or Random Features. Such extensions are considered challenging, since the inner maximization of ERM does not take a closed-form solution. However, we believe that by approximating the inner maximization, e.g., by using a linear approximation of loss as used in FGSM, our analysis is applicable for such extensions. One other natural question is considering attacks other than $\ell_q$-norm attacks considered in the present paper.


\bibliographystyle{alpha}

\clearpage
\bibliography{main_adv}

\newcommand{\etalchar}[1]{$^{#1}$}
\begin{thebibliography}{BBEKY13}

\bibitem[ASH19]{abbasi2019universality}
Ehsan Abbasi, Fariborz Salehi, and Babak Hassibi.
\newblock Universality in learning from linear measurements.
\newblock {\em arXiv preprint arXiv:1906.08396}, 2019.

\bibitem[AZL20]{allen2020feature}
Zeyuan Allen-Zhu and Yuanzhi Li.
\newblock Feature purification: How adversarial training performs robust deep
  learning.
\newblock {\em arXiv preprint arXiv:2005.10190}, 2020.

\bibitem[BBEKY13]{bean2013optimal}
Derek Bean, Peter~J Bickel, Noureddine El~Karoui, and Bin Yu.
\newblock Optimal m-estimation in high-dimensional regression.
\newblock {\em Proceedings of the National Academy of Sciences},
  110(36):14563--14568, 2013.

\bibitem[BCM19]{bhagoji2019lower}
Arjun~Nitin Bhagoji, Daniel Cullina, and Prateek Mittal.
\newblock Lower bounds on adversarial robustness from optimal transport.
\newblock In {\em Advances in Neural Information Processing Systems}, pages
  7498--7510, 2019.

\bibitem[BHMM18]{belkin2018reconciling}
Mikhail Belkin, Daniel Hsu, Siyuan Ma, and Soumik Mandal.
\newblock Reconciling modern machine learning and the bias-variance trade-off.
\newblock {\em arXiv preprint arXiv:1812.11118}, 2018.

\bibitem[BHX19]{belkin2019two}
Mikhail Belkin, Daniel Hsu, and Ji~Xu.
\newblock Two models of double descent for weak features.
\newblock {\em arXiv preprint arXiv:1903.07571}, 2019.

\bibitem[BLM{\etalchar{+}}15]{bayati2015universality}
Mohsen Bayati, Marc Lelarge, Andrea Montanari, et~al.
\newblock Universality in polytope phase transitions and message passing
  algorithms.
\newblock {\em Annals of Applied Probability}, 25(2):753--822, 2015.

\bibitem[CAD{\etalchar{+}}18]{chakraborty2018adversarial}
Anirban Chakraborty, Manaar Alam, Vishal Dey, Anupam Chattopadhyay, and Debdeep
  Mukhopadhyay.
\newblock Adversarial attacks and defences: A survey.
\newblock {\em arXiv preprint arXiv:1810.00069}, 2018.

\bibitem[CM19]{celentano2019fundamental}
Michael Celentano and Andrea Montanari.
\newblock Fundamental barriers to high-dimensional regression with convex
  penalties.
\newblock {\em arXiv preprint arXiv:1903.10603}, 2019.

\bibitem[CRS{\etalchar{+}}19]{carmon2019unlabeled}
Yair Carmon, Aditi Raghunathan, Ludwig Schmidt, Percy Liang, and John~C Duchi.
\newblock Unlabeled data improves adversarial robustness.
\newblock {\em arXiv preprint arXiv:1905.13736}, 2019.

\bibitem[CRWP19]{charles2019convergence}
Zachary Charles, Shashank Rajput, Stephen Wright, and Dimitris Papailiopoulos.
\newblock Convergence and margin of adversarial training on separable data.
\newblock {\em arXiv preprint arXiv:1905.09209}, 2019.

\bibitem[CS18]{candes2018phase}
Emmanuel~J Cand{\`e}s and Pragya Sur.
\newblock The phase transition for the existence of the maximum likelihood
  estimate in high-dimensional logistic regression.
\newblock {\em arXiv preprint arXiv:1804.09753}, 2018.

\bibitem[DHHR20]{dobriban2020provable}
Edgar Dobriban, Hamed Hassani, David Hong, and Alexander Robey.
\newblock Provable tradeoffs in adversarially robust classification.
\newblock {\em arXiv preprint arXiv:2006.05161}, 2020.

\bibitem[DKT19]{deng2019model}
Zeyu Deng, Abla Kammoun, and Christos Thrampoulidis.
\newblock A model of double descent for high-dimensional binary linear
  classification.
\newblock {\em arXiv preprint arXiv:1911.05822}, 2019.

\bibitem[DL20]{dhifallah2020precise}
Oussama Dhifallah and Yue~M Lu.
\newblock A precise performance analysis of learning with random features.
\newblock {\em arXiv preprint arXiv:2008.11904}, 2020.

\bibitem[DL21]{dhifallah2021inherent}
Oussama Dhifallah and Yue~M Lu.
\newblock On the inherent regularization effects of noise injection during
  training.
\newblock {\em arXiv preprint arXiv:2102.07379}, 2021.

\bibitem[DWR20]{dan2020sharp}
Chen Dan, Yuting Wei, and Pradeep Ravikumar.
\newblock Sharp statistical guarantees for adversarially robust gaussian
  classification.
\newblock {\em arXiv preprint arXiv:2006.16384}, 2020.

\bibitem[GMKZ20]{goldt2020modeling}
Sebastian Goldt, Marc M{\'e}zard, Florent Krzakala, and Lenka Zdeborov{\'a}.
\newblock Modeling the influence of data structure on learning in neural
  networks: The hidden manifold model.
\newblock {\em Physical Review X}, 10(4):041044, 2020.

\bibitem[GMMM19]{ghorbani2019limitations}
Behrooz Ghorbani, Song Mei, Theodor Misiakiewicz, and Andrea Montanari.
\newblock Limitations of lazy training of two-layers neural network.
\newblock In {\em NeurIPS}, 2019.

\bibitem[Gor85]{gorThm}
Yehoram Gordon.
\newblock Some inequalities for gaussian processes and applications.
\newblock {\em Israel Journal of Mathematics}, 50(4):265--289, 1985.

\bibitem[GSS14]{goodfellow2014explaining}
Ian~J Goodfellow, Jonathon Shlens, and Christian Szegedy.
\newblock Explaining and harnessing adversarial examples.
\newblock {\em arXiv preprint arXiv:1412.6572}, 2014.

\bibitem[HMRT19]{hastie2019surprises}
Trevor Hastie, Andrea Montanari, Saharon Rosset, and Ryan~J Tibshirani.
\newblock Surprises in high-dimensional ridgeless least squares interpolation.
\newblock {\em arXiv preprint arXiv:1903.08560}, 2019.

\bibitem[JS20]{javanmard2020precise}
Adel Javanmard and Mahdi Soltanolkotabi.
\newblock Precise statistical analysis of classification accuracies for
  adversarial training.
\newblock {\em arXiv preprint arXiv:2010.11213}, 2020.

\bibitem[JSH20]{javanmard2020reg}
Adel Javanmard, Mahdi Soltanolkotabi, and Hamed Hassani.
\newblock Precise tradeoffs in adversarial training for linear regression.
\newblock {\em arXiv preprint arXiv:2002.10477}, 2020.

\bibitem[MCK20]{min2020curious}
Yifei Min, Lin Chen, and Amin Karbasi.
\newblock The curious case of adversarially robust models: More data can help,
  double descend, or hurt generalization.
\newblock {\em arXiv preprint arXiv:2002.11080}, 2020.

\bibitem[MDFF16]{moosavi2016deepfool}
Seyed-Mohsen Moosavi-Dezfooli, Alhussein Fawzi, and Pascal Frossard.
\newblock Deepfool: a simple and accurate method to fool deep neural networks.
\newblock In {\em Proceedings of the IEEE conference on computer vision and
  pattern recognition}, pages 2574--2582, 2016.

\bibitem[ML19]{mai2019high}
Xiaoyi Mai and Zhenyu Liao.
\newblock High dimensional classification via regularized and unregularized
  empirical risk minimization: Precise error and optimal loss.
\newblock {\em arXiv preprint arXiv:1905.13742}, 2019.

\bibitem[MLC19]{mai2019large}
Xiaoyi Mai, Zhenyu Liao, and Romain Couillet.
\newblock A large scale analysis of logistic regression: Asymptotic performance
  and new insights.
\newblock In {\em ICASSP 2019-2019 IEEE International Conference on Acoustics,
  Speech and Signal Processing (ICASSP)}, pages 3357--3361. IEEE, 2019.

\bibitem[MM19]{mei2019generalization}
Song Mei and Andrea Montanari.
\newblock The generalization error of random features regression: Precise
  asymptotics and double descent curve.
\newblock {\em arXiv preprint arXiv:1908.05355}, 2019.

\bibitem[MMS{\etalchar{+}}17]{madry2017towards}
Aleksander Madry, Aleksandar Makelov, Ludwig Schmidt, Dimitris Tsipras, and
  Adrian Vladu.
\newblock Towards deep learning models resistant to adversarial attacks.
\newblock {\em arXiv preprint arXiv:1706.06083}, 2017.

\bibitem[MRSY19]{montanari2019generalization}
Andrea Montanari, Feng Ruan, Youngtak Sohn, and Jun Yan.
\newblock The generalization error of max-margin linear classifiers:
  High-dimensional asymptotics in the overparametrized regime.
\newblock {\em arXiv preprint arXiv:1911.01544}, 2019.

\bibitem[OT18]{oymak2018universality}
Samet Oymak and Joel~A Tropp.
\newblock Universality laws for randomized dimension reduction, with
  applications.
\newblock {\em Information and Inference: A Journal of the IMA}, 7(3):337--446,
  2018.

\bibitem[RW09]{rockafellar2009variational}
R~Tyrrell Rockafellar and Roger J-B Wets.
\newblock {\em Variational analysis}, volume 317.
\newblock Springer Science \& Business Media, 2009.

\bibitem[RXY{\etalchar{+}}19]{raghunathan2019adversarial}
Aditi Raghunathan, Sang~Michael Xie, Fanny Yang, John~C Duchi, and Percy Liang.
\newblock Adversarial training can hurt generalization.
\newblock {\em arXiv preprint arXiv:1906.06032}, 2019.

\bibitem[RXY{\etalchar{+}}20]{raghunathan2020understanding}
Aditi Raghunathan, Sang~Michael Xie, Fanny Yang, John Duchi, and Percy Liang.
\newblock Understanding and mitigating the tradeoff between robustness and
  accuracy.
\newblock {\em arXiv preprint arXiv:2002.10716}, 2020.

\bibitem[SAH19]{salehi2019impact}
Fariborz Salehi, Ehsan Abbasi, and Babak Hassibi.
\newblock The impact of regularization on high-dimensional logistic regression.
\newblock {\em arXiv preprint arXiv:1906.03761}, 2019.

\bibitem[SC19]{sur2019modern}
Pragya Sur and Emmanuel~J Cand{\`e}s.
\newblock A modern maximum-likelihood theory for high-dimensional logistic
  regression.
\newblock {\em Proceedings of the National Academy of Sciences}, page
  201810420, 2019.

\bibitem[SN20]{silva2020opportunities}
Samuel~Henrique Silva and Peyman Najafirad.
\newblock Opportunities and challenges in deep learning adversarial robustness:
  A survey.
\newblock {\em arXiv preprint arXiv:2007.00753}, 2020.

\bibitem[SST{\etalchar{+}}18]{schmidt2018adversarially}
Ludwig Schmidt, Shibani Santurkar, Dimitris Tsipras, Kunal Talwar, and
  Aleksander Madry.
\newblock Adversarially robust generalization requires more data.
\newblock In {\em Advances in Neural Information Processing Systems}, pages
  5014--5026, 2018.

\bibitem[Sto09]{sto}
Mihailo Stojnic.
\newblock Various thresholds for $\ell_1$-optimization in compressed sensing.
\newblock {\em arXiv preprint arXiv:0907.3666}, 2009.

\bibitem[Sto13]{stoLASSO}
Mihailo Stojnic.
\newblock A framework to characterize performance of lasso algorithms.
\newblock {\em arXiv preprint arXiv:1303.7291}, 2013.

\bibitem[SZS{\etalchar{+}}13]{szegedy2013intriguing}
Christian Szegedy, Wojciech Zaremba, Ilya Sutskever, Joan Bruna, Dumitru Erhan,
  Ian Goodfellow, and Rob Fergus.
\newblock Intriguing properties of neural networks. arxiv 2013.
\newblock {\em arXiv preprint arXiv:1312.6199}, 2013.

\bibitem[TAH18]{master}
Christos Thrampoulidis, Ehsan Abbasi, and Babak Hassibi.
\newblock Precise error analysis of regularized $ m $-estimators in high
  dimensions.
\newblock {\em IEEE Transactions on Information Theory}, 64(8):5592--5628,
  2018.

\bibitem[TOH15]{COLT}
Christos Thrampoulidis, Samet Oymak, and Babak Hassibi.
\newblock Regularized linear regression: A precise analysis of the estimation
  error.
\newblock In {\em Proceedings of The 28th Conference on Learning Theory}, pages
  1683--1709, 2015.

\bibitem[TPT20]{taheri2020sharp}
Hossein Taheri, Ramtin Pedarsani, and Christos Thrampoulidis.
\newblock Sharp asymptotics and optimal performance for inference in binary
  models.
\newblock In {\em International Conference on Artificial Intelligence and
  Statistics}, pages 3739--3749. PMLR, 2020.

\bibitem[TPT21]{taheri2021fundamental}
Hossein Taheri, Ramtin Pedarsani, and Christos Thrampoulidis.
\newblock Fundamental limits of ridge-regularized empirical risk minimization
  in high dimensions.
\newblock In {\em International Conference on Artificial Intelligence and
  Statistics}, pages 2773--2781. PMLR, 2021.

\bibitem[TSE{\etalchar{+}}18]{tsipras2018robustness}
Dimitris Tsipras, Shibani Santurkar, Logan Engstrom, Alexander Turner, and
  Aleksander Madry.
\newblock Robustness may be at odds with accuracy.
\newblock {\em arXiv preprint arXiv:1805.12152}, 2018.

\bibitem[XSC20]{xing2020generalization}
Yue Xing, Qifan Song, and Guang Cheng.
\newblock On the generalization properties of adversarial training.
\newblock {\em arXiv preprint arXiv:2008.06631}, 2020.

\bibitem[ZYJ{\etalchar{+}}19]{zhang2019theoretically}
Hongyang Zhang, Yaodong Yu, Jiantao Jiao, Eric~P Xing, Laurent~El Ghaoui, and
  Michael~I Jordan.
\newblock Theoretically principled trade-off between robustness and accuracy.
\newblock {\em arXiv preprint arXiv:1901.08573}, 2019.

\end{thebibliography}
\appendix

%
%




%

%

\clearpage
\section*{Appendix}
  \hypersetup{linkcolor=black}

\tableofcontents
  \hypersetup{linkcolor=darkred}

%
\addtocontents{toc}{\protect\setcounter{tocdepth}{3}}
\section{Additional Numerical Experiments}\label{sec:add_num_exp}
\subsection{Experiments on $\ell_2$ perturbations and GLM}
In this section, we complement the numerical illustrations of Section \ref{sec:num}, by considering the case of Signed measurements as well as extending to the $\ell_2$-perturbations case. We focus on the Hinge-loss and for simulation results we set $n=200,\la=0,\Sn=\mathbb{I}_n$ and average the results over 20 experiments. Figures \ref{fig:fig3}(Top) depict the adversarial/standard errors for the signed measurements. Notably, based on Figure \ref{fig:fig3}(Top left), one can observe that adversarial attacks are successful in GLM, as for a fixed $\delta$, adversarial training does not seem to improve noticeably the adversarial error (the error bars are obtained by 10 experiments). However, note the critical role of data-set size on both standard and adversarial errors as depicted in Figure \ref{fig:fig3}(Top right). Similar to the GMM, here we also observe that both adversarial and standard errors are decreasing based on $\delta$ in both cases of $q=2,\infty$. 
\begin{figure*}[h]
\centering
\begin{subfigure}{}
\centering
  \includegraphics[width=.42\linewidth,height= 5.3cm]{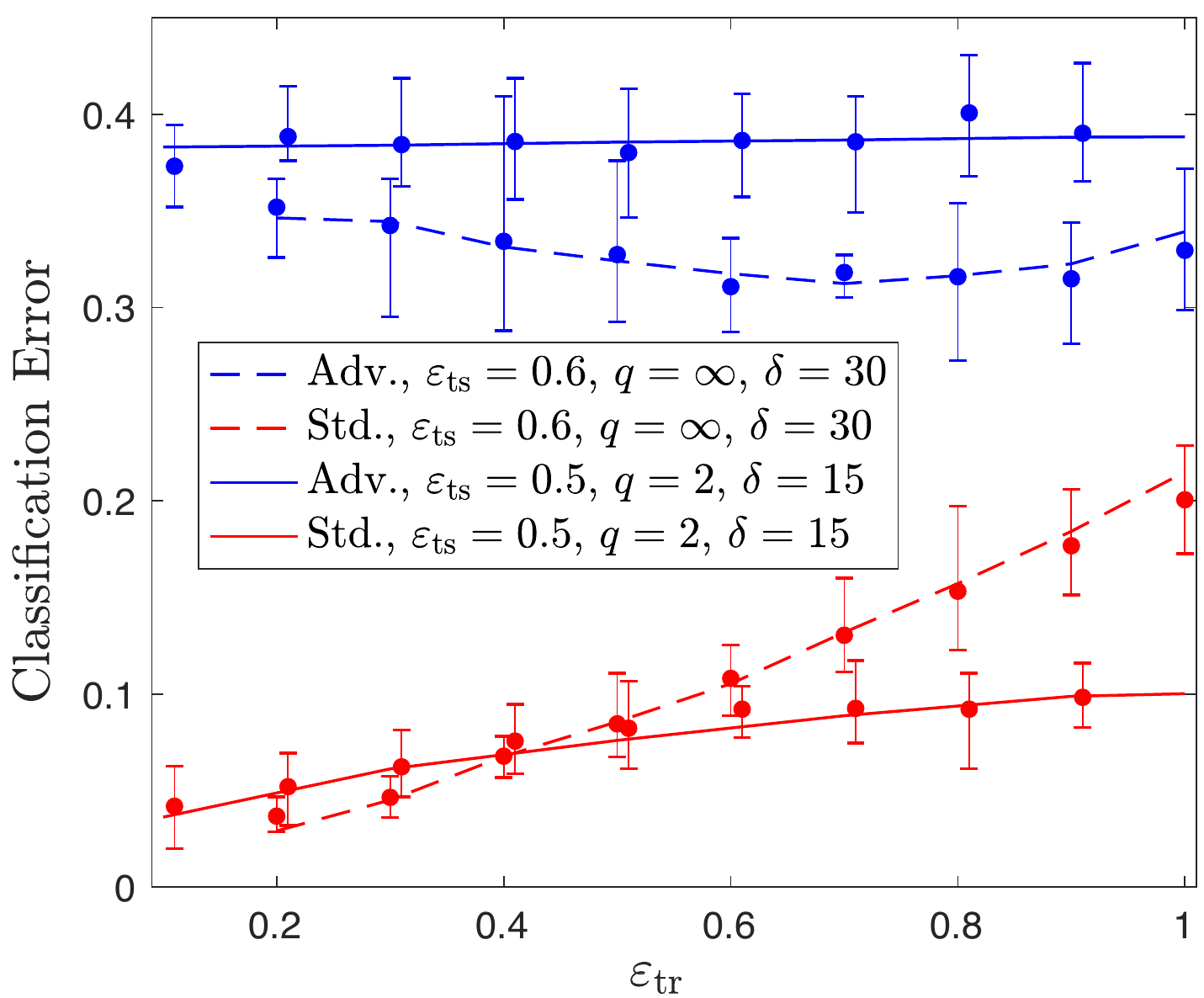}
  \label{fig:6}
\end{subfigure}%
\;\;
\begin{subfigure}{}
  \includegraphics[width=.42\linewidth,height= 5.7cm]{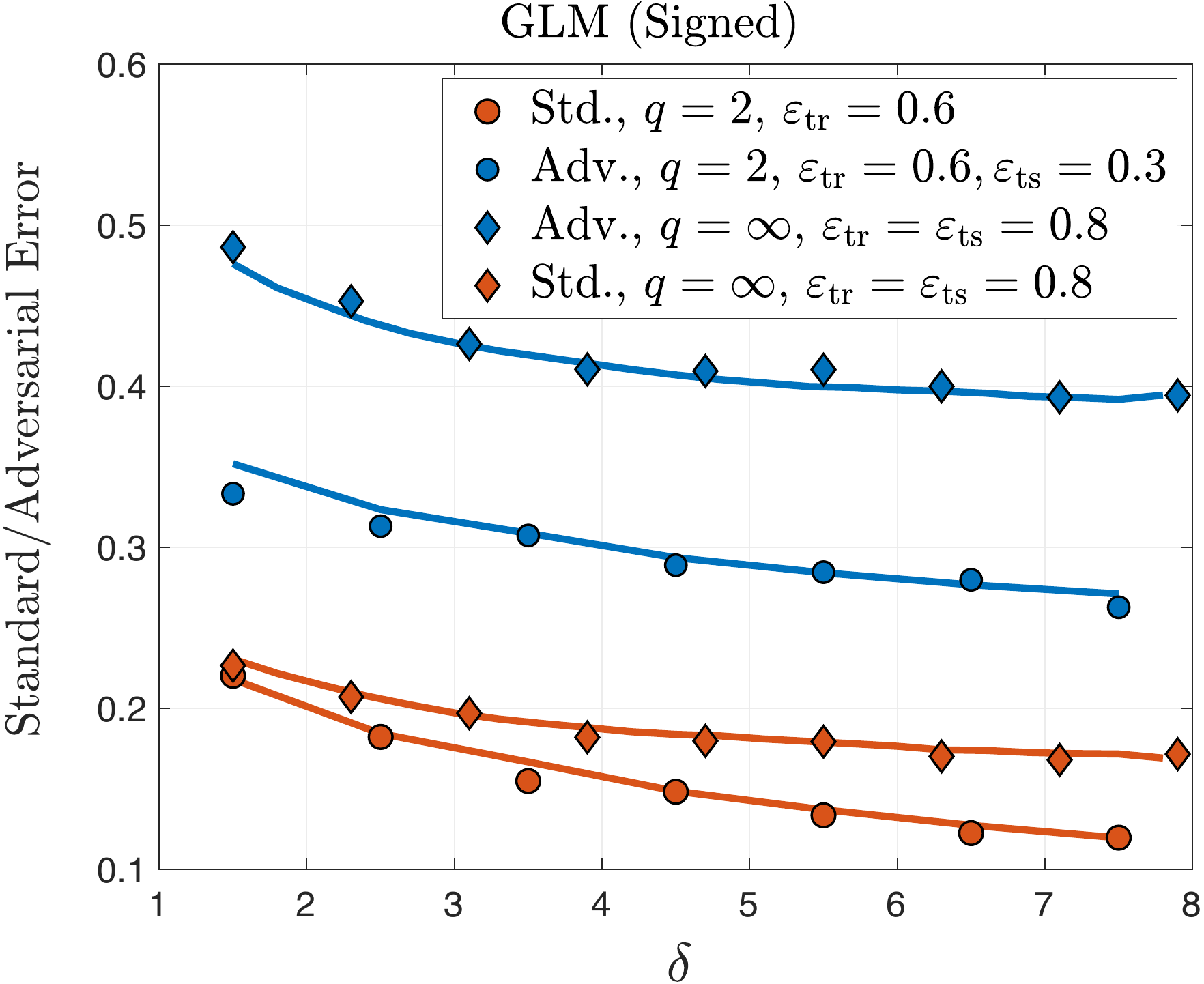}
  \label{fig:7}
\end{subfigure}%
\\
\begin{subfigure}{}
\centering
  \includegraphics[width=.43\linewidth,height= 6.2cm]{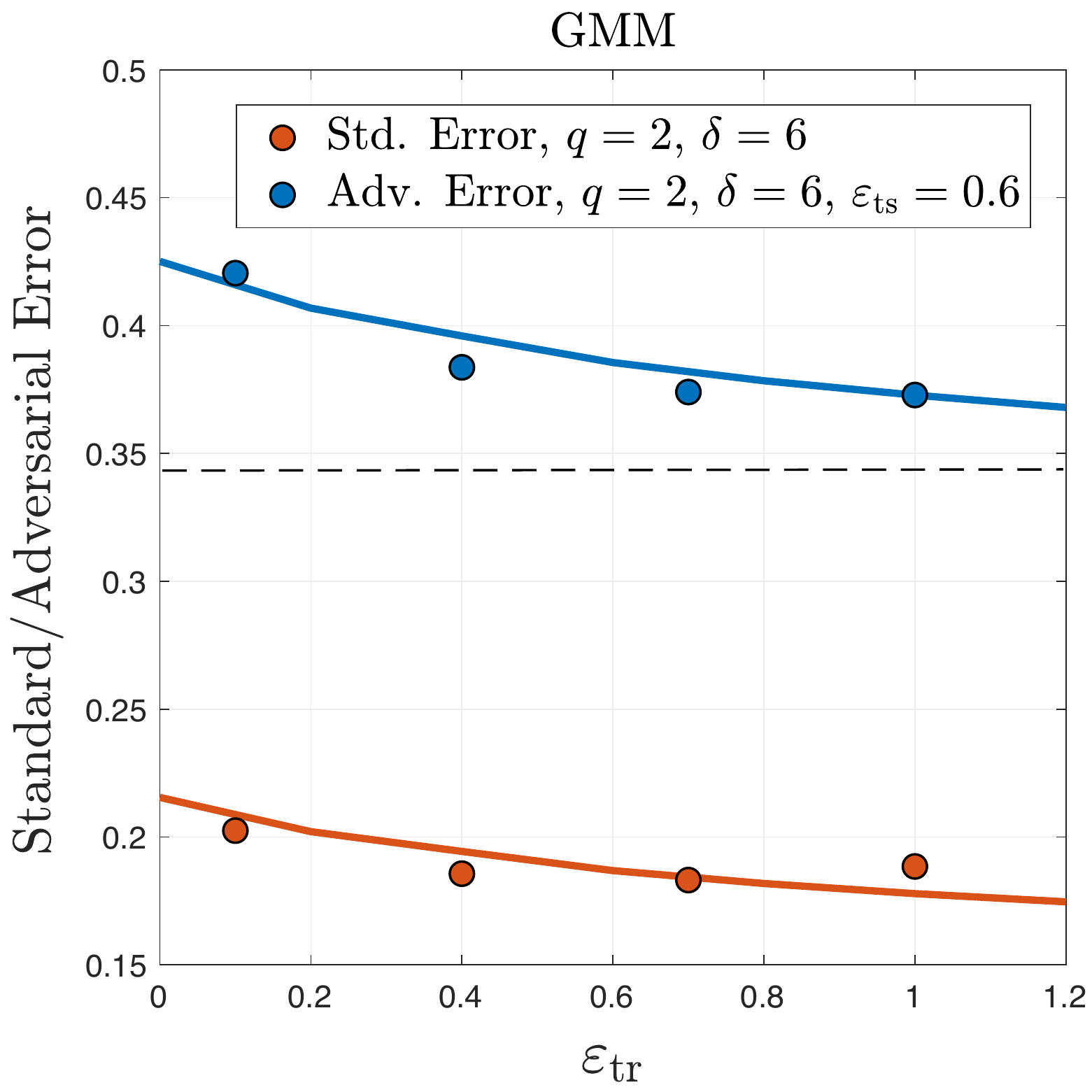}
  \label{fig:8}
\end{subfigure}%
\;
\begin{subfigure}{}
  \includegraphics[width=.43\linewidth,height= 6.2cm]{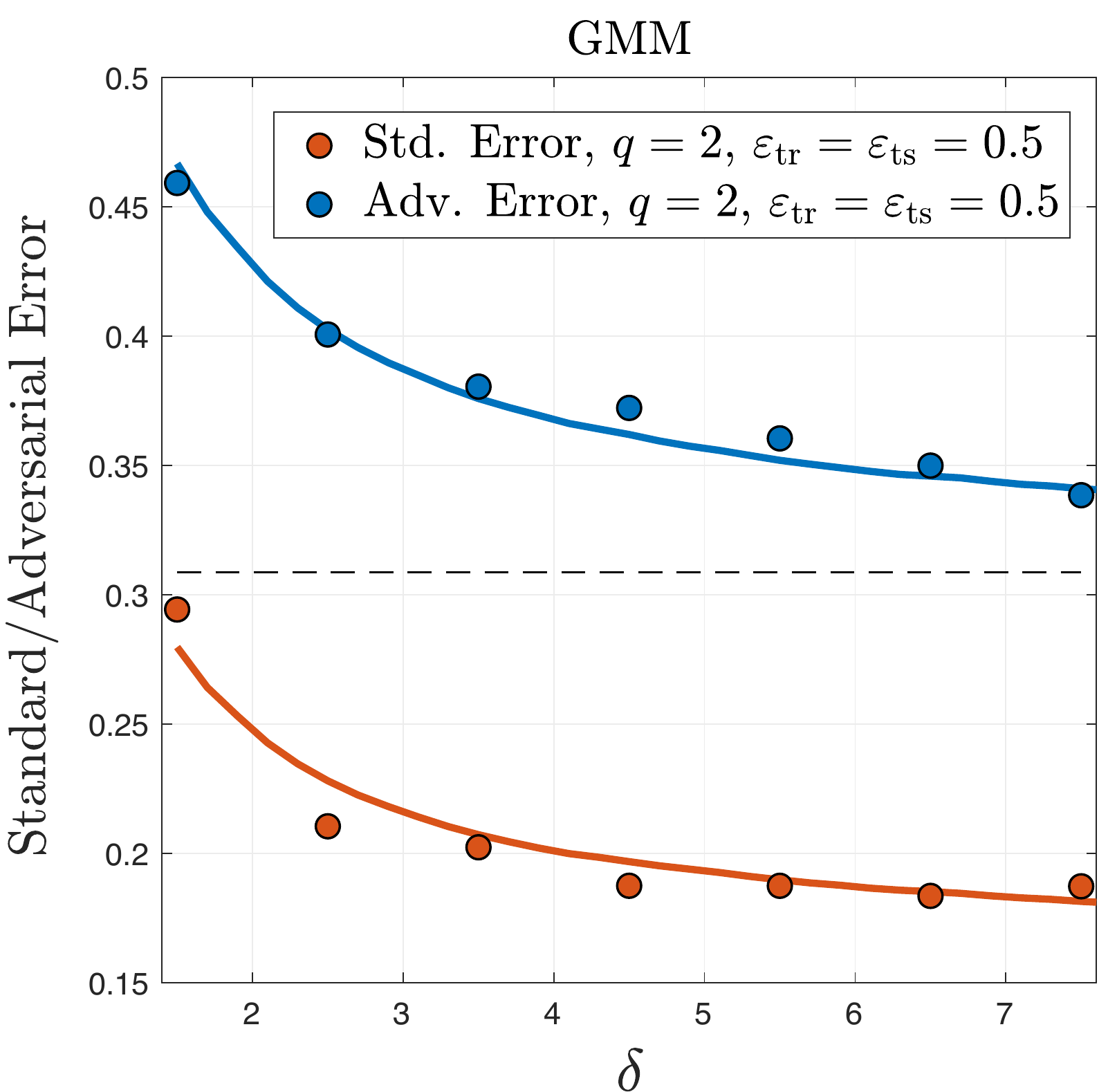}
  \label{fig:9}
\end{subfigure}%
\caption{Adversarial and Standard Errors for the Signed model (Top) and the Gaussian-mixture model (Bottom). The dashed lines denote the Bayes adversarial error for the corresponding values of $\eps_{\rm ts}$.}
\label{fig:fig3}
\end{figure*}
\begin{figure*}[h]
\centering
  \includegraphics[width=.5\linewidth,height= 6.2cm]{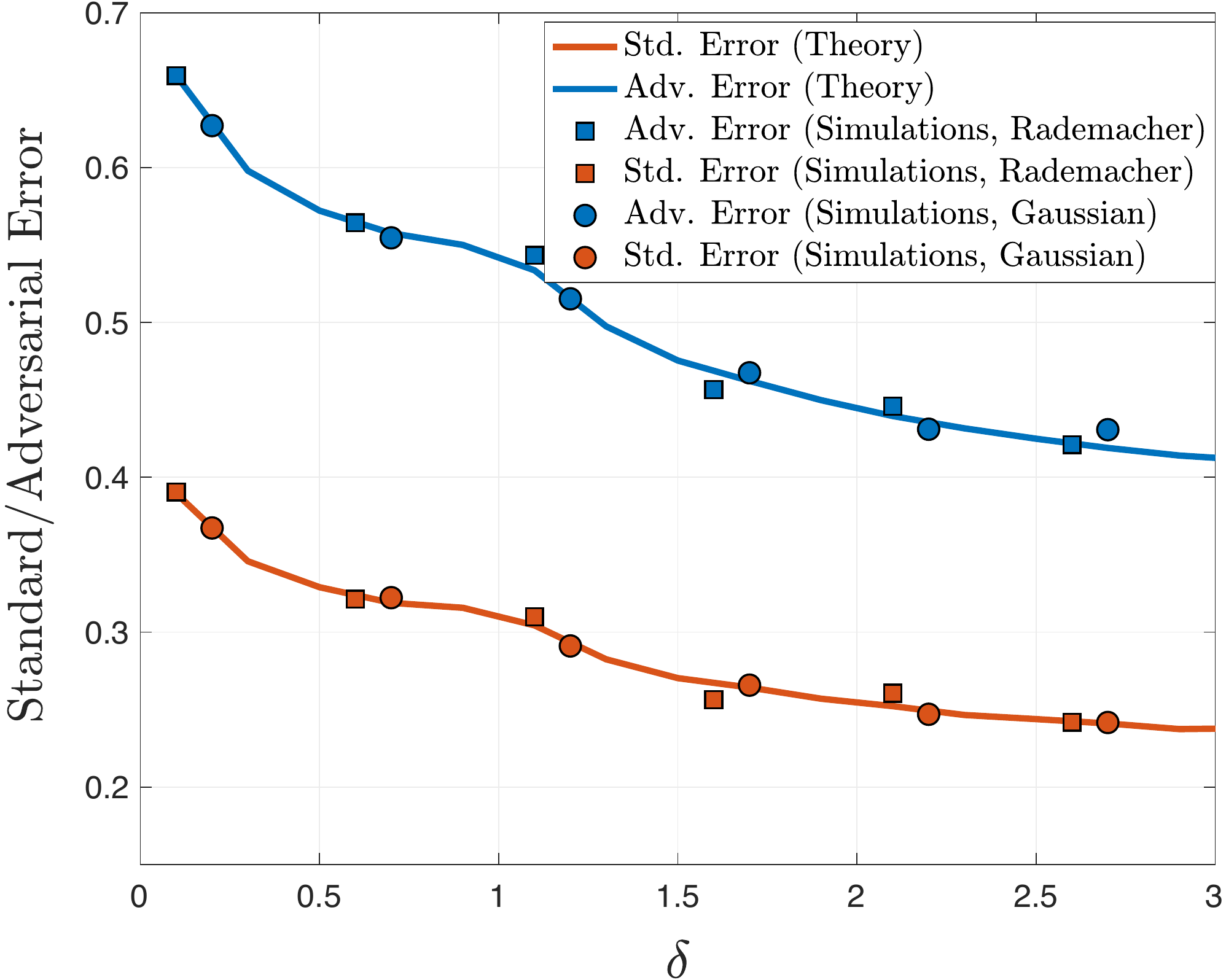}
  \caption{Empirical results for Rademacher(squares) and Gaussian(circles) data distributions in a generative data model alongside the theoretical curves. Here $q=\infty$ and $\eps_{\rm tr} = \eps_{\rm ts} =1$. The perfect match between theory and experiments supports the conjectured universality property in adversarial training.}
  \label{fig:fig4}
\end{figure*}%
Figures \ref{fig:fig3}(Bottom) depict the error curves for the GMM and $q=2$. Perhaps surprisingly, here we see that more aggressive adversarial training improves the standard error as the error curve is strictly decreasing with respect to $\eps_{\rm tr}.$ We also highlight that unlike the $q=\infty$ case where there was a finite optimal choice of $\eps_{\rm tr}$, here increasing $\eps_{\rm tr}$, always helps the robust accuracy. Note also the role of $\delta$ on error curves, especially by increasing $\delta$, both errors decrease and notably the adversarial error approaches the Bayes optimal error. For a formal proof of this phenomenon, see also the discussion in Appendix \ref{sec:large_sample_size}.
 
\subsection{Universality in Adversarial Robustness}
Thus far, we focused on Gaussian data. One may wonder whether our theoretical results extend to other data distributions. We conjecture that our results enjoy the \emph{universality} property, i.e., the same asymptotic formulas in Theorem \ref{thm:main} and Theorem \ref{cor:GLM_ell2}, hold when data is sampled from a \emph{sub-Gaussian} distribution. Figure \ref{fig:fig4} illustrates the empirical results for the adversarial and standard error of Gaussian-mixture model as well as a model obtained by the mixture of Rademacher distributions, i.e.,
$$ 
\x_i | y_i \;\sim\; y_i\thc_{n}^\star+ {{\rhoc}_i}, \;\;\;\;\Pro(y_i=1) = \pi\in[0,1],
$$
where each entry of $\rhoc_i\in\R^n$ is distributed iid from Rademacher distribution.
Note the perfect agreement between theory and simulation for both standard and adversarial errors, which supports the universality conjecture.
For standard ERM, the universality property has been studied in numerous recent works e.g., see \cite{bayati2015universality,oymak2018universality,abbasi2019universality}. Extending such results to the adversarial training case is left for future work. 
\section{Asymptotic Analysis of Adversarial Training for $\pmb{{\ell}_\infty}$ Perturbations}\label{sec:proof_thm_1}
In this section, we provide an asymptotic analysis for adversarial training with $\ell_\infty$ perturbations. First, we consider the case of general $\Sn$ and then show how our theoretical results simplify when $\Sn$ is diagonal and when $\Sn=\mathbb{I}_n$. We focus on Generalized linear models \eqref{eq:binarymodel}. The corresponding analysis for Gaussian-Mixture models \eqref{eq:G_mix} is deferred to Section \ref{sec:GMM_analysis}.

We begin with proving the key statistics required for the high-dimensional asymptotics. 
\subsection{Adversarial Error of an Arbitrary Estimator}
In the following lemma, we characterize the asymptotic adversarial error under $\ell_q, q\geq 1$ perturbations of an arbitrary sequence of estimators $\{\thc_{n}\}_{n=1}^{\infty}$ (where $\thc_n\in\R^n$), in terms of the high-dimensional limits for the key statistics $\|\thc_{n}\|_p, \|\Th\Sn^{1/2}\thc_{n}\|_2$ and $\|\Thp\Sn^{1/2}\thc_{n}\|_2$, where $p$ is such that $1/p+1/q=1$. We assume that the adversary has budget $\eps$.


First, we formalize the adversarial test error in the next lemma, which is a restatement of Lemma \ref{lem:gen_error0} specialized to GLM.

\begin{lemma}\label{lem:gen_error}
The high-dimensional limit of the adversarial test error for the Generalized Linear models with a given sequence of classifiers $\{\thc_{n}\}$ is given as follows,
\bea\label{eq:gen_err_adv}
\{\mathcal{E}_{\ell_q, {\eps}}^{\text{GLM}}(\thc_n)\} \rP \Pro \Big( \mu \zeta\, S \cdot\psi(\zeta S) + \alpha G - u {\eps}  < 0 \Big)
\eea
where $G,S \simiid \mathcal{N}(0,1)$ and provided that 
$$\left\|\Sn^{-1/2}\widetilde{\thc_{n}}\right\|_p\rP u,\;\;\left\langle\widetilde{\thc_n^\star},\widetilde{\thc_{n}}\right\rangle\Big/\left\|\widetilde{\thc_n^\star}\right\|_2^2\rP \mu ,\;\;\left\|\Th^\perp\widetilde{\thc_{n}}\right\|_2\rP \alpha,$$
 for $\ell_p$-norm denoting the dual of the $\ell_q$-norm, $\widetilde{{\thc}_n}\triangleq \Sn^{1/2}\thc_n,\widetilde{\thc_{n}^\star} \triangleq \Sn^{1/2}\thc_{n}^\star$ and $\Thp \in \R^{n\times n}$ defined as follows:
 $$ \Thp\triangleq \mathbb{I}_{n} - \Th, \;\;\Th\triangleq\frac{ \widetilde{\thc_{n}^\star}\widetilde{\thc_{n}^\star}^{\top}}{\left\|\widetilde{\thc_n^\star}\right\|_2^2}.$$
Moreover, in the special case of $q=2$ and $\Sn=\mathbb{I}_n$, by denoting $\sigma\triangleq\alpha/\mu$, \eqref{eq:gen_err_adv} simplifies to, 
\bea\label{eq:gen_err_adv_2}
\{\mathcal{E}_{\ell_2, \eps}^{\text{GLM}}(\thc_n)\} \rP \Pro \left( \frac{S\psi(S) + \sigma G}{\sqrt{\sigma^2+1}}  < \eps \right)  .
\eea
\end{lemma}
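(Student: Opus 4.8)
\textbf{Proof plan for Lemma \ref{lem:gen_error}.}

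The plan is to compute the adversarial test error directly from its definition \eqref{eq:adv_error_lin} with $q$ general, first performing the inner maximization over $\db$ in closed form, and then identifying the limiting distribution of the relevant scalar quantity using the joint Gaussianity of the feature vector $\x$. First I would fix a classifier $\thc_n$ and a fresh sample $(\x,y)$ with $\x\sim\Nn(\mathbf{0},\Sn)$ and $y=\psi(\langle\thc_n^\star,\x\rangle)$. The event $y\neq\sign(\langle\x+\db,\thc_n\rangle)$ occurs for some admissible $\db$ with $\|\db\|_q\le\eps$ exactly when $y\,\langle\x+\db,\thc_n\rangle<0$ is achievable, i.e. when $\min_{\|\db\|_q\le\eps} y\langle\x+\db,\thc_n\rangle < 0$. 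By Hölder duality, $\min_{\|\db\|_q\le\eps} \langle\db,\thc_n\rangle = -\eps\|\thc_n\|_p$, so the adversarial misclassification event is precisely $\{y\langle\x,\thc_n\rangle - \eps\|\thc_n\|_p < 0\}$. This is the step that converts the worst-case perturbation into the dual-norm statistic $u$.

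Next I would carry out the change of variables $\widetilde\x\triangleq\Sn^{-1/2}\x\sim\Nn(\mathbf{0},\Id_n)$ and $\widetilde{\thc_n}\triangleq\Sn^{1/2}\thc_n$, so that $\langle\x,\thc_n\rangle=\langle\widetilde\x,\widetilde{\thc_n}\rangle$ and also $\langle\thc_n^\star,\x\rangle = \langle\widetilde{\thc_n^\star},\widetilde\x\rangle$ with $\widetilde{\thc_n^\star}=\Sn^{1/2}\thc_n^\star$. Decompose $\widetilde{\thc_n} = \Th\widetilde{\thc_n} + \Thp\widetilde{\thc_n}$ along the direction of $\widetilde{\thc_n^\star}$ and its orthogonal complement. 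Then $\langle\widetilde\x,\widetilde{\thc_n}\rangle = \langle\widetilde\x,\Th\widetilde{\thc_n}\rangle + \langle\widetilde\x,\Thp\widetilde{\thc_n}\rangle$, where the first term equals $\big(\langle\widetilde{\thc_n^\star},\widetilde{\thc_n}\rangle/\|\widetilde{\thc_n^\star}\|_2^2\big)\cdot\langle\widetilde{\thc_n^\star},\widetilde\x\rangle$ and, by isotropy of $\widetilde\x$, the two terms are independent Gaussians: $\langle\widetilde{\thc_n^\star},\widetilde\x\rangle/\|\widetilde{\thc_n^\star}\|_2 =: S_n$ and $\langle\widetilde\x,\Thp\widetilde{\thc_n}\rangle/\|\Thp\widetilde{\thc_n}\|_2 =: G_n$ are independent standard normals. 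Writing $\zeta_n = \|\widetilde{\thc_n^\star}\|_2$ (consistent with Assumption \ref{ass:2}), the label is $y = \psi(\zeta_n S_n)$ and the misclassification event becomes
\bea
\left\{ \frac{\langle\widetilde{\thc_n^\star},\widetilde{\thc_n}\rangle}{\|\widetilde{\thc_n^\star}\|_2^2}\,\zeta_n\, S_n\,\psi(\zeta_n S_n) + \|\Thp\widetilde{\thc_n}\|_2\, G_n - \|\Sn^{-1/2}\widetilde{\thc_n}\|_p\,\eps < 0 \right\}. \nn
\eea
Taking the expectation of the indicator of this event gives the finite-$n$ adversarial error exactly.

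Finally I would pass to the limit. By hypothesis $\|\Sn^{-1/2}\widetilde{\thc_n}\|_p\rP u$, $\langle\widetilde{\thc_n^\star},\widetilde{\thc_n}\rangle/\|\widetilde{\thc_n^\star}\|_2^2\rP\mu$, $\|\Thp\widetilde{\thc_n}\|_2\rP\alpha$, and $\zeta_n\rP\zeta$ (Assumption \ref{ass:2}). Since $(S_n,G_n)$ are exactly $\Nn(0,1)$ i.i.d.\ for every $n$, the probability of the event converges to $\Pro(\mu\zeta S\psi(\zeta S) + \alpha G - u\eps < 0)$ with $G,S\simiid\Nn(0,1)$, which is \eqref{eq:gen_err_adv}; the only mild technical point is to argue that the discontinuity of the sign/indicator does not cause trouble at the limit, which holds because the limiting random variable $\mu\zeta S\psi(\zeta S)+\alpha G - u\eps$ has a density near $0$ as long as $\alpha>0$ (the continuous-mapping/Portmanteau argument then applies, using convergence in probability of the deterministic coefficients). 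The special case $q=2$, $\Sn=\Id_n$ follows by substituting $p=2$, noting $\zeta=1$ under Assumption \ref{ass:2} in the isotropic case, dividing through by $\mu$, and writing $\sigma=\alpha/\mu$, which yields \eqref{eq:gen_err_adv_2}. The main obstacle is purely bookkeeping: correctly tracking the $\Sn^{1/2}$ versus $\Sn^{-1/2}$ factors through the change of variables so that the dual norm lands on $\Sn^{-1/2}\widetilde{\thc_n}=\thc_n$ while the projections act on $\widetilde{\thc_n}$, and ensuring the independence and standard-normality of $S_n,G_n$ is stated cleanly; there is no deep difficulty beyond this.
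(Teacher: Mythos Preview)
Your proposal is correct and follows essentially the same route as the paper: solve the inner maximization via H\"older duality, whiten $\x\mapsto\Sn^{-1/2}\x$, decompose $\widetilde{\thc_n}$ orthogonally along $\widetilde{\thc_n^\star}$ to obtain two independent standard normals (using that $y=\psi(\zeta_n S_n)$ is independent of $G_n$ so $yG_n\stackrel{d}{=}G_n$), and pass to the limit using the assumed convergences. The only point to tighten is the special case \eqref{eq:gen_err_adv_2}: you must observe that when $q=2$ and $\Sn=\Id_n$ one has $u=\|\thc_n\|_2=\|\widetilde{\thc_n}\|_2\to\sqrt{\mu^2+\alpha^2}$ by Pythagoras on the orthogonal decomposition, since merely ``dividing through by $\mu$'' does not by itself produce the $\sqrt{\sigma^2+1}$ denominator.
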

\begin{proof}
First, for $\thc_{n}\neq 0$ note the following chain of equalities:
\begin{align*}
\max_{\|\db\|_q\leq \eps} \one_{\{y\neq \sign\left\langle\x+\db,\thc_{n}\right\rangle\}} 
&=\max_{\|\db\|_q\leq \eps} \one_{\{y\left\langle\x,\thc_{n}\right\rangle + y \left\langle\db,\thc_{n}\right\rangle <0 \}} \\
&=\one_{\{ y\left\langle\x,\thc_{n}\right\rangle + \min_{\|\db\|_q\leq \eps}y \left\langle\db,\thc_{n}\right\rangle <0 \}}\\
&=\one_{\{ y\left\langle\x,\thc_{n}\right\rangle - \eps \|\thc_{n}\|_p<0 \}},
\end{align*}
where in the last line we used the fact that $\ell_p$ is the dual norm of $\ell_q$ norm.  Thus, we can write
\bea
\E_{\x,y}\left[\max_{\|\db\|_q<\eps} \one_{\{y\neq \sign\left\langle\x+\delta,\thc_{n}\right\rangle\}}\right] &= \Pro \Big(y\neq\sign\left(\langle\x,\thc_{n}\rangle-y\eps\|\thc_{n}\|_p\right)\Big) \nn\\[3pt]
&=  \Pro \Big(y\langle\x,\thc_{n}\rangle - \eps\|\thc_{n}\|_p <0\Big)\nn \\[3pt]
&= \Pro \Big (y\left\langle\bar{\x},\Sn^{1/2}\thc_{n}\right\rangle - \eps\|\thc_{n}\|_p <0\Big)\nn\\[3pt]
&=  \Pro \Big(y\left \langle\bar{\x},\Th\Sn^{1/2}\thc_{n}\right\rangle + y\left\langle \bar\x, \Th^\perp\Sn^{1/2}\thc_{n}\right\rangle - \eps\|\thc_{n}\|_p <0\Big)\nn\\[3pt]
&= \Pro \Big(y \left\langle\bar{\x},\Th\widetilde{\thc_{n}}\right\rangle + y\left\langle \bar\x, \Th^\perp\widetilde{\thc_{n}}\right\rangle - \eps\|\Sn^{-1/2}\widetilde{\thc_{n}}\|_p <0\Big),
\label{eq:ge_limit}
\eea
Where $\bar{\x}$ is a standard Gaussian vector. Also, for the labels $y$ we have,
$$y=\psi\left(\left\langle\x,\thc_n^\star\right\rangle\right)=\psi(\langle\bar{\x},\widetilde{\thc_{n}^\star}\rangle) = \psi(\langle\bar{\x},\Th\widetilde{\thc_{n}^\star}\rangle).$$
Now, by Gaussianity of $\bar{\x}$ and since $\Th\Th^\perp = \mathbf{0}_n, \Th+\Th^\perp = \mathbb{I}_n$, we find that $\langle\bar{\x},\Th\widetilde{\thc_{n}}\rangle$ and $y$ are both independent of $\langle\bar\x,\Th^\perp\widetilde{\thc_{n}}\rangle$. Therefore, we can replace $\langle\bar{\x},\Th^\perp\widetilde{\thc_{n}}\rangle$ by $\langle\bar{\bar{\x}},\Th^\perp\widetilde{\thc_{n}}\rangle$ for some standard Gaussian vector $\bar{\bar\x}$ \emph{independent} of $\bar\x$. Then, by rotational invariance of $\bar{\bar\x}$ and since $y$ is independent of it and takes values $\pm1$,  $y\bar{\bar\x}^{\top}\Th^\perp\widetilde{\thc_{n}}$ is distributed as $\bar{\bar\x}^{\top}\Th^\perp\widetilde{\thc_{n}}$. 
But, again by rotational invariance of the gaussian distribution, we have that for $G,S\simiid\Nn(0,1),$ 
\bea
&\left\langle\bar{\x},\Theta\widetilde{\thc_{n}^\star}\right\rangle \sim \|\widetilde{\thc_n^\star}\|_2\,S,\nn\\[4pt]
 &\left\langle\bar{\bar\x},\Th^\perp\widetilde{\thc_{n}}\right\rangle\sim\|\Th^\perp\widetilde{\thc_{n}}\|_2 G,\nn\\[4pt]
 &\left\langle\bar{\x},\Theta\widetilde{\thc_{n}}\right\rangle=\frac{\left\langle\widetilde{\thc_{n}},\widetilde{\thc_{n}^\star}\right\rangle}{\|\widetilde{\thc_n^\star}\|_2^2}\left\langle\bar\x,\widetilde{\thc_{n}^\star}\right\rangle \sim  \frac{\left\langle\widetilde{\thc_{n}},\widetilde{\thc_{n}^\star}\right\rangle}{\|\widetilde{\thc_n^\star}\|_2^2} \|\widetilde{\thc_n^\star}\|_2\,S. \nn
 \eea

Next, recall that $ \left\|\widetilde{\thc_n^\star}\right\|_2 = {\thc_n^\star}^\top\Sn{\thc_n^\star} \rightarrow \zeta$ based on Assumption \ref{ass:2} and note the lemma's assumptions on convergence of $\|\Sn^{-1/2}\widetilde{\thc_{n}}\|_p, \;\langle\widetilde{\thc_{n}},\widetilde{\thc_{n}^\star}\rangle/\|\widetilde{\thc_n^\star}\|_2^2$ and $\|\Theta^\perp\widetilde{\thc_{n}}\|_2$. Combining with the above, we deduce that, 
\bea
y\rP \psi(\zeta\,S),\;\;\; \left\langle\bar{\x},\Th\widetilde{\thc_{n}}\right\rangle \rP \mu \zeta\, S, \;\;\;\left\langle\bar{\x},\Th^\perp\widetilde{\thc_{n}}\right\rangle \rP \alpha G.
\eea 

Putting this together with \eqref{eq:ge_limit} gives the limit in \eqref{eq:gen_err_adv} for GLM. To derive \eqref{eq:gen_err_adv_2}, note that when $q=2$ and $\Sn=\mathbb{I}_n$, it holds that $\zeta=1$ and $u=\sqrt{\alpha^2+\mu^2}$ due to
$$
\|\widetilde{\thc_{n}}\|_2 = \|\Th\widetilde{\thc_{n}} + \Th^\perp\widetilde{\thc_{n}}\|_2 =  \sqrt{\|\Th\widetilde{\thc_{n}}\|_2^2 + \|\Th^\perp\widetilde{\thc_{n}}\|_2^2 } \rP \sqrt{\alpha^2+\mu^2}. 
$$
This concludes the proof. 
\end{proof}

\subsection{Case I: Correlated Features with General Covariance Matrix}\label{sec:proof_GLM}
For all $\x\in\R^n,\tau,C\in\R_+$ and a PD matrix $\mathbf{S}\in\R^{n\times n}$, we define, 
\bea
\env{\left(\small{\ellb_1}+C\, \ellb_2^2,\,\mathbf{S}\right)}{\x}{\tau} \triangleq \min_{\y\in\R^n}\;\; \frac{1}{2\tau}\left\|\mathbf{S}^{1/2}\left(\x-\y\right)\right\|^2_2 \;+\; \|\y\|_1 \;+\; C\|\y\|_2^2.\label{eq:moreau_sigma}.
\eea
Assume that the PD covariance matrix $\Sn$ and the true vector $\thc^\star_n$ satisfy the following limit for all constants $c_1,c_2,c_3,c_4\in\R^+\times\R^+\times\R\times\R^+$, and the standard Gaussian vector $\h\in\R^n$,
\bea
\frac{1}{n}\env{\left(\small{\ellb_1}+c_1\ellb_2^2,\,\Sn\right)}{c_2\Sn^{-1/2}\h + c_3 {\sqrt{n}\thc_{n}^\star}}{c_4} \rP \bar{M}_{c_1}\left(c_2,c_3;c_4\right).
\eea
Following the same notation as in \eqref{eq:minmax_main}, we introduce the following min-max objective based on eight scalars,
\bea
&\min_{\substack{{\alpha,\tau_1,w\in\R_+,}\\ {\mu\in\R}}}\;\max_{\substack{{\tau_2,\beta,\gamma\in\R_+,}\\ {\eta\in\R}}} \,   
f_{_{\delta,\mathcal{C}}}(\bar\vb) \;+\;\;\E_{G,S}\left[\env{\Lm}{\al G + \mu \zeta\, S \,\psi(\zeta S)-w}{\frac{\tau_1}{\beta}}\right] \nn\\&\hspace{2in} +\eps_{\rm tr}\gamma\, \bar{M}_{\frac{r}{\gamma\eps_{\rm tr}}}\left(\frac{\alpha\beta}{\tau_2\sqrt{\delta}},\frac{\alpha\eta}{\tau_2\zeta^2};\frac{\alpha\gamma\eps_{\rm tr}}{\tau_2}\right).
\label{eq:minmax_app_thm5}
\eea
\begin{theorem}\label{thm:5}
Assume that the training dataset $\{(\x_i,y_i)\}_{i=1}^{m}$, is generated according to Generalized Linear models \eqref{eq:binarymodel} with PD covariance matrices satisfying Assumptions \ref{ass:1}-\ref{ass:3}. Consider the sequence of robust classifiers $\{\widehat{\thc_n}\}$, obtained by adversarial training in \eqref{eq:main_erm_Linfty} with a convex decreasing loss function $\Lm:\R\rightarrow\R$. Then, the high-dimensional limit for the adversarial test error $(\mathcal{E}_{\ell_\infty,\frac{\eps_{\rm ts}}{\sqrt{n}}})$ is derived as follows,
\bea\label{eq:thm5_gen_error}
\left\{\mathcal{E}_{\ell_\infty, \frac{\eps_{\rm ts}}{\sqrt{n}}}^{\tiny{GLM}}\left(\widehat{\thc_n}\right)\right\} \rP  \Pro \Big( \mu^\star \zeta\,S\,\psi(\zeta S) + \alpha^\star G < w^\star {\eps_{\rm ts}}/{\eps_{\rm tr}} \Big),
\eea
where $(\alpha^\star,\mu^\star,w^\star)$ is the unique solution to the scalar minimax problem \eqref{eq:minmax_app_thm5}.
\end{theorem}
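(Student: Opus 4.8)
The plan is to establish \eqref{eq:thm5_gen_error} via the Convex Gaussian Min-max Theorem (CGMT) applied to the reduced robust ERM. First I would invoke the reduction already recorded in \eqref{eq:ermaftermax}: using the decreasing nature of $\Lm$, the inner maximization over $\db_i$ is solved in closed form by $\db_i^\star = -\eps_{\rm tr}y_i\,\sign(\thc_n)/\sqrt n$, so that \eqref{eq:main_erm_Linfty} becomes the convex minimization $\min_{\thc_n}\frac1m\sum_i\Lm(y_i\langle\x_i,\thc_n\rangle - \tfrac{\eps_{\rm tr}}{\sqrt n}\|\thc_n\|_1) + r\|\thc_n\|_2^2$. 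By Lemma \ref{lem:gen_error} (= Lemma \ref{lem:gen_error0} specialized to GLM), it then suffices to compute the high-dimensional limits of the three statistics $\|\Sn^{-1/2}\widetilde{\thc_n}\|_1$, $\langle\widetilde{\thc_n^\star},\widetilde{\thc_n}\rangle/\|\widetilde{\thc_n^\star}\|_2^2$, and $\|\Th^\perp\widetilde{\thc_n}\|_2$ for the minimizer $\widehat{\thc_n}$, where $\widetilde{\thc_n}=\Sn^{1/2}\thc_n$. The target quantities $w^\star,\mu^\star,\alpha^\star$ are precisely the claimed limits of these.

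Next I would set up the CGMT. Writing $\X$ for the feature matrix with i.i.d.\ standard Gaussian rows after the whitening $\x_i = \Sn^{1/2}\bar\x_i$, the loss term involves $\X\Sn^{1/2}\thc_n$; introducing the standard auxiliary variable and a dual (Fenchel) representation of $\sum_i\Lm(\cdot)$ turns the primary optimization (PO) into a bilinear Gaussian form in $\X$, whose auxiliary optimization (AO) replaces $\X$ by two independent Gaussian vectors $\g,\h$. The essential new difficulty, which the paper flags, is that the penalty term $\tfrac{\eps_{\rm tr}}{\sqrt n}\|\Sn^{-1/2}\widetilde{\thc_n}\|_1$ sits \emph{inside} the loss (coupled to $y_i\langle\x_i,\thc_n\rangle$ through the same summand) and that neither $\|\Sn^{-1/2}\widetilde{\thc_n}\|_1$ nor $\|\Sn^{-1/2}\widetilde{\thc_n}\|_2^2$ decomposes along the $\Th,\Th^\perp$ split because of the $\Sn^{-1/2}$ factor. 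My approach is to linearize the $\ell_1$ term by writing $\|\v\|_1 = \max_{\|\s\|_\infty\le 1}\langle\s,\v\rangle$ and to introduce a scalar Lagrange/auxiliary variable $w$ that "pulls out" the value $\tfrac{\eps_{\rm tr}}{\sqrt n}\|\Sn^{-1/2}\widetilde{\thc_n}\|_1$ as an equality constraint enforced by a multiplier $\gamma$; this decouples the loss (which now depends on $\thc_n$ only through $\langle\x_i,\thc_n\rangle$ and the scalar $w$) from the geometry of the $\ell_1$ ball. After the Gaussian comparison step, the AO over $\thc_n$ separates: the part aligned with $\sqrt n\,\thc_n^\star$ contributes the variables $\mu,\eta$, and the orthogonal bulk, together with the $\Sn^{-1/2}$-weighting and the $\ell_1$ penalty, collapses — for fixed $(\alpha,\beta,\gamma,\tau_2,r)$ — into the $n$-separable convex program defining the weighted Moreau envelope \eqref{eq:moreau_sigma}, whose normalized value converges to $\bar M_{c_1}(c_2,c_3;c_4)$ by the assumed empirical-distribution limit (Assumption \ref{ass:3}) and a standard Wasserstein-continuity argument for separable convex functionals.

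The remaining steps are: (i) perform the scalar reductions — eliminate the inner vector minimizations/maximizations in the AO to arrive at the eight-scalar objective $f_{\delta,\mathcal C}(\bar\vb) + \E[\Mc_{\Lm}(\cdot)] + \eps_{\rm tr}\gamma\,\bar M_{r/(\gamma\eps_{\rm tr})}(\cdot)$ of \eqref{eq:minmax_app_thm5}; (ii) verify the CGMT hypotheses: convexity in $\thc_n$ (from convexity of $\Lm$ and of both penalties) and concavity in the dual variable, compactness after localization (the localization being justified exactly by non-robust-separability, i.e.\ the regularizer $r>0$ keeping $\widehat{\thc_n}$ bounded), and uniform convergence of the AO objective to the deterministic minimax on compacts; (iii) show the deterministic minimax has a unique saddle point $(\alpha^\star,\mu^\star,w^\star,\dots)$ — strict convexity-concavity is the natural route, using $r>0$ and properties of the Moreau envelope — so that the minimizer's statistics concentrate at $(w^\star,\mu^\star,\alpha^\star)$; (iv) feed these into Lemma \ref{lem:gen_error} with $\eps=\eps_{\rm ts}/\sqrt n$, noting the normalization makes $\eps\|\Sn^{-1/2}\widetilde{\thc_n}\|_1$ converge to $w^\star\eps_{\rm ts}/\eps_{\rm tr}$, which yields \eqref{eq:thm5_gen_error}. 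I expect the main obstacle to be step (iii) together with the non-decomposability issue in the AO scalarization: controlling the $\Sn^{-1/2}$-weighted $\ell_1$-plus-$\ell_2^2$ minimization uniformly and identifying its limit with $\bar M$ in a way compatible with the Gaussian-width/concentration estimates the CGMT requires — and then proving uniqueness of the saddle point despite the several coupled variables — is the technically delicate core; the closed-form perturbation reduction and Lemma \ref{lem:gen_error} itself are routine by comparison.
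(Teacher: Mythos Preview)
Your plan is essentially correct and follows the paper's route: closed-form inner maximization, CGMT after whitening and the $\Th/\Thp$ split, the $w$--$\gamma$ Lagrangian to extract the $\ell_1$ term from the loss, reduction to the weighted Moreau envelope \eqref{eq:moreau_sigma}, and finally Lemma~\ref{lem:gen_error}. Two points where your plan diverges from the paper are worth flagging.

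First, the device you propose for the non-decomposability --- the dual linearization $\|\vb\|_1=\max_{\|\s\|_\infty\le1}\langle\s,\vb\rangle$ --- is not what the paper uses, and it is not clear it scalarizes cleanly here (it adds a vector \emph{max} variable that must still interact with $\Sn^{-1/2}$). The paper instead introduces a \emph{slack vector} $\widetilde{\rhoc_n}$ duplicating $\widetilde{\thc_n}$, enforced by a Lagrange multiplier $\lac\in\R^n$; both $\|\Sn^{-1/2}\widetilde{\thc_n}\|_1$ and $r\|\Sn^{-1/2}\widetilde{\thc_n}\|_2^2$ are transferred to $\widetilde{\rhoc_n}$, after which one can optimize the direction of $\Thp\widetilde{\thc_n}$ freely and then eliminate $\lac$ by completion of squares. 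It is this $(\widetilde{\rhoc_n},\lac)$ mechanism that produces exactly the Moreau envelope $\Mc_{(\ellb_1+C\ellb_2^2,\Sn)}$; your plan jumps to that endpoint without the bridge.

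Second, your claim that the resulting program is ``$n$-separable'' and converges by Assumption~\ref{ass:3} plus Wasserstein continuity is only correct for diagonal $\Sn$ (that is the content of Theorem~\ref{thm:main}, Case~II). For the general PD case of Theorem~\ref{thm:5}, the envelope \eqref{eq:moreau_sigma} is \emph{not} separable, and the paper simply \emph{assumes} the normalized limit $\bar M_{c_1}(c_2,c_3;c_4)$ exists; your Wasserstein argument would not establish it. This does not break your proof of Theorem~\ref{thm:5} --- you just have to invoke that standing assumption rather than prove convergence --- but be careful not to conflate the two cases.
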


\begin{proof}
Recall that for the GLM we have $\x_i\simiid \mathcal{N}(\mathbf{0},\Sn)$. Therefore, the decreasing nature of the loss leads to the following simplification in \eqref{eq:ermaftermax}:
\bea
\widehat{\thc_{n}}:&=
\min_{\thc_{n} \in \R^n}\; \max_{\substack{{\|\db_i\|_{\infty}\le \eps}\\[2pt]{i\in[m]}}} \;\frac{1}{m}\sum_{i=1}^m \Lm \left(y_i \left\langle\x_i+\db_i,\thc_{n}\right\rangle\right)  + \la\|\thc_{n}\|_2^2 \label{eq:mm_app}\\[4pt] 
&=\min_{\thc_{n} \in \R^n} \frac{1}{m}\sum_{i=1}^m \Lm(y_i\left\langle\x_i,\thc_{n}\right\rangle-\eps\|\thc_{n}\|_1) + \la\|\thc_{n}\|_2^2\;\nn\\[4pt]&= \min_{\substack{{\thc_{n}\in \R^n,\vb\in \R^m}\\ {v_i=y_i\x_i^{\top}\thc_{n}}}}  \frac{1}{m}\sum_{i=1}^m \Lm(v_i-\eps\|\thc_{n}\|_1) +\la\|\thc_{n}\|_2^2. \label{eq:dual}
\eea
In the last expression above, we have introduced additional variables $v_i$. This redundancy will allow us to write again the optimization as a minimax problem, but this time in a different ---more convenient in terms of analysis--- form compared to \eqref{eq:mm_app}. Specifically, the minimization in \eqref{eq:dual} is equivalent to the following:
\bea
 \min_{\thc_{n}\in \R^n,\vb\in \R^m}\max_{\ub\in\R^m}  \;\;\;\frac{1}{m}\sum_{i=1}^m \Lm(v_i-\eps\|\thc_{n}\|_1) + \frac{1}{m}\sum_{i=1}^m u_i\left(y_i\left\langle\x_i,\thc_{n}\right\rangle-v_i\right)+ \la\|\thc_{n}\|_2^2 \label{eq:afterdual0}.
\eea
We introduce the variable $\widetilde{\thc_{n}}\triangleq\Sn^{1/2}\thc_{n}$ and $\bar{\x}_i\triangleq\Sn^{-1/2}\x_i$  thus $\bar{\x}_i\simiid\mathcal{N}(\mathbf{0},\mathbb{I}_n)$. Based on the new notation, \eqref{eq:afterdual0} can be rewritten as:

\bea
\min_{\widetilde{\thc_{n}}\in \R^n,\vb\in \R^m}\max_{\ub\in\R^m}\;\;\;  \frac{1}{m}\sum_{i=1}^m \Lm\left(v_i-\eps\left\|\Sn^{-1/2}\widetilde{\thc_{n}}\right\|_1\right) + \frac{1}{m}\sum_{i=1}^m &u_i\left(y_i\left\langle\bar{\x}_i,\widetilde{\thc_{n}}\right\rangle-v_i\right)\nn\\&+\la\left\|\Sn^{-1/2}\widetilde{\thc_{n}}\right\|_2^2 \label{eq:afterdual}.
\eea
Next, we define the projection matrices $\Th, \Thp \in \R^{n\times n}$ based on $\widetilde{\thc_{n}^\star}\triangleq\Sn^{1/2}\thc_{n}^\star$ as follows,  
$$\Th\triangleq \frac{\widetilde{\thc_{n}^\star}\widetilde{{\thc_{n}^\star}}^\top}{\left\|\widetilde{\thc_n^\star}\right\|_2^2},\;\; \Thp\triangleq \mathbb{I}_{n} - \Th.$$
Since $\Th + \Thp = \mathbb{I}_{n}$, we deduce that \eqref{eq:afterdual} is equivalent to, 
\bea\label{eq:beforevector}
\min_{\widetilde{\thc_{n}}\in \R^n,\vb\in \R^m}\max_{\ub\in\R^m} \;\;\; &\frac{1}{m}\sum_{i=1}^m \Lm\left(v_i-\eps\left\|\Sn^{-1/2}\widetilde{\thc_{n}}\right\|_1\right) - \frac{1}{m}\sum_{i=1}^m u_iv_i + \frac{1}{m}\sum_{i=1}^m u_iy_i\left\langle\bar{\x}_i,\Th\widetilde{\thc_{n}}\right\rangle \\
&+ \frac{1}{m}\sum_{i=1}^m u_iy_i\left\langle\bar{\x}_i,\Thp\widetilde{\thc_{n}}\right\rangle
+\la\left\|\Sn^{-1/2}\widetilde{\thc_{n}}\right\|_2^2.\nn
\eea
Splitting $\widetilde{\thc_{n}}$ based on $\Th, \Thp$ has two purposes. First it immediately reveals the two terms $\|\Th\widetilde{\thc_{n}}\|_2$ and $\|\Thp\widetilde{\thc_{n}}\|_2$ of interest to us in view of Lemma \ref{lem:gen_error} . Second, as we will see, it allows the use of the CGMT.  

For compactness we write \eqref{eq:beforevector} in vector notation,
\bea
\min_{\widetilde{\thc_{n}}\in \R^n,\vb\in \R^m}\max_{\ub\in\R^m} \;\;\; \frac{\mathbf{1}_m^{\top}}{m} \Ellb\left(\vb-\eps\left\|\Sn^{-1/2}\widetilde{\thc_{n}}\right\|_1\mathbf{1}_m\right) - &\frac{\langle\ub,\vb\rangle}{m} + \frac{\left\langle\ub, Y\bar{X}\Th\widetilde{\thc_{n}}\right\rangle}{m} + \frac{\left\langle\ub, Y \bar{X}\Thp\widetilde{\thc_{n}}\right\rangle}{m} \nn\\&+  \la\left\|\Sn^{-1/2}\widetilde{\thc_{n}}\right\|_2^2,\label{eq:vectoront}
\eea
where
\begin{align}
\Ellb (\vb)&\triangleq [\Lm(v_1);\;\Lm(v_2);\;\cdots;\;\Lm(v_m)] \in \R^{m\times 1},\nn\\
Y &\triangleq {\rm{diag}}(y_1,\;y_2,\cdots,\;y_m) \in \R^{m\times m},\nn\\
 \bar{X} &\triangleq [\bar{\x}_1^{\top};\;\bar{\x}_2^{\top};\cdots;\;\bar{\x}_m^{\top}] \in \R^{m\times n}.
\end{align}

Before proceeding, we recall our main tool the Convex Gaussian Min-max Theorem \cite{sto,stoLASSO,COLT} which relies on Gordon's Gaussian Min-max theorem. The Gordon's Gaussian comparison inequality \cite{gorThm} compares the min-max value of two doubly indexed Gaussian processes $\mathcal{X}_{\w,\ub}, \mathcal{Y}_{\w,\ub}$ based on how their autocorrelation functions compare,
\begin{subequations}\label{eq:cgmt_PO_AO}
\begin{align}
\mathcal{X}_{\w,\ub} &\triangleq \ub^{\top} \G \w + \Gamma(\w,\ub),\\
\mathcal{Y}_{\w,\ub} &\triangleq \norm{\w}_2 \g^{\top} \ub + \norm{\ub}_2 \h^{\top} \w + \Gamma(\w,\ub),
\end{align}
\end{subequations}
where: $\G\in\mathbb{R}^{m\times n}$, $\g \in \mathbb{R}^m$, $\h\in\mathbb{R}^n$, they all have entries iid Gaussian; the sets $\mathcal{S}_{\w}\subset\R^n$ and $\mathcal{S}_{\ub}\subset\R^m$ are compact; and, $\Gamma: \mathbb{R}^n\times \mathbb{R}^m \to \mathbb{R}$. For these two processes, define the following (random) min-max optimization programs, which we refer to as the \emph{primary optimization} (PO) problem and the \emph{auxiliary optimization} (AO). 
\begin{subequations}
\begin{align}\label{eq:PO_loc}
\Phi(\G)&=\min\limits_{\w \in \mathcal{S}_{\w}} \max\limits_{\ub\in\mathcal{S}_{\ub}}\mathcal{X}_{\w,\ub},\\
\label{eq:AO_loc}
\phi(\g,\h)&=\min\limits_{\w \in \mathcal{S}_{\w}} \max\limits_{\ub\in\mathcal{S}_{\ub}} \mathcal{Y}_{\w,\ub}.
\end{align}
\end{subequations}
According to the version of the CGMT in Theorem 6.1 in \cite{master}, if the sets $\mathcal{S}_{\w}$ and $\mathcal{S}_{\ub}$ are convex and $\psi$ is continuous \emph{convex-concave} on $\mathcal{S}_{\w}\times \mathcal{S}_{\ub}$, then, for any $\nu \in \mathbb{R}$ and $t>0$, it holds
\begin{equation}\label{eq:cgmt}
\mathbb{P}\left( \abs{\Phi(\G)-\nu} > t\right) \leq 2 \mathbb{P}\left(  \abs{\phi(\g,\h)-\nu} > t \right).
\end{equation}
In words, concentration of the optimal cost of the AO problem around $\mu$ implies concentration of the optimal cost of the corresponding PO problem around the same value $\mu$.  Moreover, starting from \eqref{eq:cgmt} and under strict convexity conditions, the CGMT shows that concentration of the optimal solution of the AO problem implies concentration of the optimal solution of the PO to the same value. For example, if minimizers of \eqref{eq:AO_loc} satisfy $\norm{\w^\ast(\g,\h)}_2 \to \zeta^\ast$ for some $\zeta^\ast>0$, then, the same holds true for the minimizers of \eqref{eq:PO_loc}: $\norm{\w^\ast(\G)}_2 \to \zeta^\ast$ (Theorem 6.1(iii) in \cite{master}). Thus, one can analyze the AO to infer corresponding properties of the PO, the premise being of course that the former is simpler to handle than the latter. \\
Returning to our minimax problem \eqref{eq:vectoront}, we observe that the objective is convex in $(\widetilde{\thc_{n}},\vb)$ and concave in $\ub$. 
Also note that the term $Y\bar{X}\Th\widetilde{\thc_{n}}$ is independent of $Y\bar{X}\Thp\widetilde{\thc_{n}}$ as the entries of $Y$ depend only on $\bar{X}\Th$ which is orthogonal to $\bar{X}\Thp$, i.e., based on the definition of $\Th$ we have
$$y_i=\psi(\x_i^{\top}\thc_{n}^\star) = \psi(\bar{\x}_i^{\top}\widetilde{\thc_{n}^\star}) = \psi(\bar{\x}_i^{\top}\Th\widetilde{\thc_{n}^\star}).$$ 

Therefore, along the same lines as in the proof of Lemma \ref{lem:gen_error}, we can substitute $\bar{X}\Thp$ by $\hat{X}\Thp$ for a standard Gaussian matrix $\hat{X}$ that is independent of $\bar{X}$ and everything else in the objective of \eqref{eq:vectoront}.  Thus, we can use CGMT for PO in \eqref{eq:vectoront} with the choice
$$
\Gamma\left(\{\widetilde{\thc_{n}},\vb\},\ub\right)\triangleq \frac{\mathbf{1}_m^{\top}}{m} \Ellb\left(\vb-\eps\left\|\Sn^{-1/2}\widetilde{\thc_{n}}\right\|_1\mathbf{1}_m\right) - \frac{\langle\ub,\vb\rangle}{m} + \frac{\left\langle\ub, Y\bar{X}\Th\widetilde{\thc_{n}}\right\rangle}{m} + \la\left\|\Sn^{-1/2}\widetilde{\thc_{n}}\right\|_2^2.
$$
With this, we derive the following AO for \eqref{eq:vectoront}, 
\bea
\min_{\widetilde{\thc_{n}}\in \R^n,\vb\in \R^m}\max_{\ub\in\R^m}  \;\;&\frac{\mathbf{1}_m^{\top}}{m} \Ellb\left(\vb-\eps\left\|\Sn^{-1/2}\widetilde{\thc_{n}}\right\|_1\mathbf{1}_m\right) -\frac{\langle\ub,\vb\rangle}{m} + \frac{\left\langle\ub, Y\bar{X}\Th\widetilde{\thc_{n}}\right\rangle}{m} \nn\\ &+ \frac{\ub^{\top}Y\g \left\|\Thp\widetilde{\thc_{n}}\right\|_2}{m} + \frac{\|\ub^{\top}Y\|_2 \left\langle\h,\Thp\widetilde{\thc_{n}}\right\rangle}{m}
+\la\left\|\Sn^{-1/2}\widetilde{\thc_{n}}\right\|_2^2\,,\label{eq:afterCGMT}
\eea
where $\g\in \R^m$, $\h\in \R^n$ have entries i.i.d. standard normal.  Note that similar to \cite{master} and despite the fact that we are working with finite dimensional matrices now, we will consider the asymptotic limit at the end of the approach. Thus as the final optimization has a bounded solution in the high-dimensional limit, we can relax the assumption of compactness of the domain of optimization which is needed for CGMT.

To proceed, we observe that $Y\g \sim \mathcal{N}(0,1)$ and $\|\ub^{\top} Y\|_2 = \|\ub\|_2$. So, next we can optimize w.r.t $\ub$ to find that: 
\begin{align*}
&\max_{\ub\in\R^m} -\frac{\langle\ub,\vb\rangle}{m} + \frac{\left\langle\ub, Y\bar{X}\Th\widetilde{\thc_{n}}\right\rangle}{m} + \frac{\langle\ub,\g\rangle \left\|\Thp\widetilde{\thc_{n}}\right\|_2}{m} + \frac{\|\ub\|_2 \left\langle\h,\Thp\widetilde{\thc_{n}}\right\rangle}{m}  \\=
&\max_{\ub\in\R^m, \frac{\|\ub\|_2}{\sqrt{m}} = \beta}  \frac{1}{m}\left\langle\ub,-\vb + Y\bar{X}\Th\widetilde{\thc_{n}} + \g \left\|\Thp\widetilde{\thc_{n}}\right\|_2\right\rangle + \frac{\beta \left\langle\h,\Thp\widetilde{\thc_{n}}\right\rangle}{\sqrt{m}}  \\=
& \max_{\beta\in \R_{+}} \frac{\beta}{\sqrt{m}} \left\|-\vb + Y\bar{X}\Th\widetilde{\thc_{n}} + \g \left\|\Thp\widetilde{\thc_{n}}\right\|_2 \right\|_2 +  \frac{\beta \left\langle\h,\Thp\widetilde{\thc_{n}}\right\rangle}{\sqrt{m}}.
\end{align*}
Hence we replace this in \eqref{eq:afterCGMT} to simplify the objective as follows, 
\bea\label{eq:minmax}
\min_{\widetilde{\thc_{n}}\in \R^n,\vb\in \R^m}\max_{\beta\in\R_+} \;\;\; &\frac{\mathbf{1}_m^{\top}}{m} \Ellb\left(\vb-\eps\left\|\Sn^{-1/2}\widetilde{\thc_{n}}\right\|_1\mathbf{1}_m\right)  + \frac{\beta}{\sqrt{m}} \left\|-\vb + Y\bar{X}\Th\widetilde{\thc_{n}}+ \g \left\|\Thp\widetilde{\thc_{n}}\right\|_2 \right\|_2  \nn\\
&\hspace{1.5in}\;+\;  \frac{\beta \left\langle\h,\Thp\widetilde{\thc_{n}}\right\rangle}{\sqrt{m}}+ \la\left\|\Sn^{-1/2}\widetilde{\thc_{n}}\right\|_2^2.
\eea
Next, our trick is to dualize the the term $\eps\left\|\Sn^{-1/2}\widetilde{\thc_{n}}\right\|_1$ inside the loss function. For this, we first introduce an extra optimization variable $w>0$ along with the constraint $w = \eps\left\|\Sn^{-1/2}\widetilde{\thc_{n}}\right\|_1$ and then turn this into an unconstrained min-max problem. This yields the following equivalent formulation of \eqref{eq:minmax},
\bea
&\min_{\substack{{\widetilde{\thc_{n}}\in \R^n,\vb\in \R^m}\\ { w=\eps\left\|\Sn^{-1/2}\widetilde{\thc_{n}}\right\|_1}}}\max_{\beta\in\R_+} \;\; \frac{\mathbf{1}_m^{\top}}{m} \Ellb\left(\vb-w\mathbf{1}_m\right)  + \frac{\beta}{\sqrt{m}} \left\|-\vb + Y\bar{X}\Th\widetilde{\thc_{n}}+ \g \left\|\Thp\widetilde{\thc_{n}}\right\|_2 \right\|_2\nn\\&\hspace{1.5in} \;+\;  \frac{\beta \left\langle\h,\Thp\widetilde{\thc_{n}}\right\rangle}{\sqrt{m}}  +
 \la\left\|\Sn^{-1/2}\widetilde{\thc_{n}}\right\|_2^2  \label{eq:wdual}\\
&=\min_{\widetilde{\thc_{n}}\in \R^n,\vb\in \R^m,w\in\R_+}\;\max_{\beta,\gamma\in\R_+}\;\; \frac{\mathbf{1}_m^{\top}}{m} \Ellb(\vb - w\mathbf{1}_m) + \gamma\left(\eps\left\|\Sn^{-1/2}\widetilde{\thc_{n}}\right\|_1-w\right)  +\la\left\|\Sn^{-1/2}\widetilde{\thc_{n}}\right\|_2^2\nn\\
&\hspace{1.5in}+ \frac{\beta}{\sqrt{m}} \left\|-\vb + Y\bar{X}\Th\widetilde{\thc_{n}} + \g \left\|\Thp\widetilde{\thc_{n}}\right\|_2 \right\|_2 +  \frac{\beta \left\langle\h,\Thp\widetilde{\thc_{n}}\right\rangle}{\sqrt{m}}\,.\label{eq:minmax_2}
\eea

The key reason behind this reformulation is to allow optimization with respect to $\widetilde{\thc_{n}}$ which is the primary variable of interest in the objective function. As we will see, our goal is optimizing with respect to the direction of $\Th^\perp\widetilde{\thc_{n}}$ and $\Th\widetilde{\thc_{n}}$, which according to Lemma \ref{lem:gen_error} comprise the terms parametrizing the adversarial error of the estimator $\widetilde{\thc_{n}}$. To do this, we introduce the slack variable $\widetilde{\rhoc_{n}}$ for $\widetilde{\thc_{n}}$ (equivalently $\rhoc_{n}$ for $\thc_{n}$ where $\rhoc_{n}\triangleq\Sn^{-1/2}\widetilde{\rhoc_{n}}$) and rewrite the optimization problem \eqref{eq:minmax_2}, 
\begin{align}
&\min_{\substack{{\widetilde{\thc_{n}}\in \R^n,\vb\in \R^m,w\in\R_+}\\ {{\rm s.t.\;}\; \Sn^{-1/2}\widetilde{\thc_{n}}=\rhoc_{n}}}} \;\max_{\beta,\gamma\in\R_+} \frac{\mathbf{1}_m^{\top}}{m} \Ellb(\vb - w\mathbf{1}_m) + \gamma\left(\eps\left\|\Sn^{-1/2}\widetilde{\thc_{n}}\right\|_1-w\right) +  \la\left\|\Sn^{-1/2}\widetilde{\thc_{n}}\right\|_2^2\nn\\
&\qquad\qquad\qquad+ \frac{\beta}{\sqrt{m}} \left\|-\vb + Y\bar{X}\Th\widetilde{\thc_{n}} + \g \left\|\Thp\widetilde{\thc_{n}}\right\|_2 \right\|_2 +  \frac{\beta \left\langle\h,\Thp\widetilde{\thc_{n}}\right\rangle}{\sqrt{m}}\nn\\
& = \min_{\widetilde{\rhoc_{n}},\widetilde{\thc_{n}}\in \R^n,\vb\in \R^m,w\in\R}\;\max_{\beta,\gamma\in\R_+,\lac\in\R^n} \frac{\mathbf{1}_m^{\top}}{m} \Ellb(\vb - w\mathbf{1}_m)  - \gamma w + \eps\gamma \left\|\Sn^{-1/2}\widetilde{\rhoc_{n}}\right\|_1 + r\left\|\Sn^{-1/2}\widetilde{\rhoc_n}\right\|_2^2\nn\\& \qquad\qquad\;\;\;\;\;\; + \left\langle\frac{\lac}{\sqrt{n}},\widetilde{\rhoc_{n}}-\widetilde{\thc_{n}}\right\rangle\;+\frac{\beta}{\sqrt{m}} \left\|-\vb + Y\bar{X}\Th\widetilde{\thc_{n}} + \g \left\|\Thp\widetilde{\thc_{n}}\right\|_2\right\|_2 +  \frac{\beta \left\langle\h,\Thp\widetilde{\thc_{n}}\right\rangle}{\sqrt{m}}  \label{eq:minmax_3}\,.
\end{align}
In \eqref{eq:minmax_3}, we applied the Lagrangian method to both of terms $\left\|\Sn^{-1/2}\widetilde{\thc_{n}}\right\|_1$ and $\left\|\Sn^{-1/2}\widetilde{\thc_{n}}\right\|_2^2$. This is essential to scalarizing the objective function based on $\Th\widetilde{\thc_n}$ and $\Thp\widetilde{\thc_n}$, which is our next step. As a remark and as we will see in Section \ref{sec:GLM,infty,caseIII}, only in the special case of $\Sn=\mathbb{I}_n$, it is possible to apply the Lagrangian to the $\ell_1$ norm and simply decompose $\|\widetilde{\thc_n}\|_2^2$ as 
$$\|\widetilde{\thc_{n}}\|_2^2\;\; = \|\Th\widetilde{\thc_{n}}\|_2^2 \;\;+\;\; \|\Thp\widetilde{\thc_{n}}\|_2^2\, .$$ 
Now, we  can finally optimize w.r.t the direction of $\Thp\widetilde{\thc_{n}}$. First, note that
$$\left\langle\lac,\widetilde{\rhoc_{n}}-\widetilde{\thc_{n}}\right\rangle = \left\langle\lac, \Th\left(\widetilde{\rhoc_{n}}-\widetilde{\thc_{n}}\right)\right\rangle + \Big\langle\lac, \Thp \widetilde{\rhoc_{n}}\Big\rangle - \left\langle\lac, \Thp \widetilde{\thc_{n}}\right\rangle.$$ 
With this decomposition, we can optimize w.r.t. $\Thp\widetilde{\thc_{n}}$ as follows,
\bea
&\min_{\Thp\widetilde{\thc_{n}}\in\R^n}  - \left\langle\frac{\lac}{\sqrt{n}},\Th^\perp\widetilde{\thc_{n}}\right\rangle+ \frac{\beta}{\sqrt{m}} \left\|-\vb + Y\bar{X}\Th\widetilde{\thc_{n}} + \g \left\|\Thp\widetilde{\thc_{n}}\right\|_2 \right\|_2 +  \frac{\beta \left\langle\h,\Thp\widetilde{\thc_{n}}\right\rangle}{\sqrt{m}} \nn\\= 
& \min_{\Thp\widetilde{\thc_{n}}\in\R^n, \|\Thp \widetilde{\thc_{n}}\|_2=\alpha}     \left\langle -\frac{\lac}{\sqrt{n}} + \frac{\beta \h}{\sqrt{m}},\Thp\widetilde{\thc_{n}}\right\rangle +  \frac{\beta}{\sqrt{m}} \left\|-\vb + Y\bar{X}\Th\widetilde{\thc_{n}} + \alpha\g  \right\|_2\nn\\[4pt]=
&\min_{\alpha\in\R_+} -\alpha\left\|-\frac{\Th^\perp\lac}{\sqrt{n}}+\frac{\beta }{\sqrt{m}}\Th^\perp\h \right\|_2 + \frac{\beta}{\sqrt{m}} \Big\|-\vb + Y\bar{X}\Th\widetilde{\thc_{n}} + \alpha\g \Big\|_2  \label{eq:minmax_4}\,.
\eea

By replacing \eqref{eq:minmax_4} in \eqref{eq:minmax_3} we have,
\bea
&\min_{\widetilde{\rhoc_{n}},\Th\widetilde{\thc_{n}}\in \R^n,\vb\in \R^m,w, \alpha\in\R_+}\;\;\max_{\beta, \gamma\in\R_+,\lac\in\R^n} \;\; \frac{\mathbf{1}_m^{\top}}{m} \Ellb(\vb - w\mathbf{1}_m)  - \gamma w + \eps\gamma \left\|\Sn^{-1/2}\widetilde{\rhoc_{n}}\right\|_1 \nn
\\  &\hspace{1in}+ \left\langle\frac{\lac}{\sqrt{n}}, \Th\left(\widetilde{\rhoc_{n}}-\widetilde{\thc_{n}}\right)\right\rangle+ \; r\left\|\Sn^{-1/2}\widetilde{\rhoc_n}\right\|_2^2 + \left\langle\frac{\lac}{\sqrt{n}},\Th^\perp\widetilde{\rhoc_{n}}\right\rangle \nn\\&\hspace{1in}-\alpha\left\|-\frac{\Th^\perp\lac}{\sqrt{n}}+\frac{\beta }{\sqrt{m}}\Th^\perp\h \right\|_2 + \frac{\beta}{\sqrt{m}} \Big\|-\vb + Y\bar{X}\Th\widetilde{\thc_{n}} + \alpha\g \Big\|_2.\,\label{eq:beforeChristos'trick}
\eea


We replace $\eps$ with $\eps_{\rm tr}/\sqrt{n}$ specialized to the case of $q=\infty$. Such normalization is necessary to guarantee the boundedness of the solutions to \eqref{eq:beforeChristos'trick} when $\eps_{\rm tr} = \mathcal{O}(1)$. To continue, we will use the same trick as in \cite{COLT} that $x= \min_{\tau\in\R_+}  \frac{x^2}{2\tau} + \frac{\tau}{2}$ for every $x\in\R_+$. Thus we may rewrite the last two terms based on the squared $\ell_2$ norm by introducing two new variables $\tau_1,\tau_2\in \R_+$ to obtain the following new objective, 
\bea
&\min_{\widetilde{\rhoc_{n}},\Th\widetilde{\thc_{n}}\in \R^n,\vb\in \R^m,w,\alpha,\tau_1\in\R_+}\;\;\max_{\tau_2,\beta,\gamma\in\R_+,\lac\in\R^n} \;\; \frac{\mathbf{1}_m^{\top}}{m} \Ellb(\vb - w\mathbf{1}_m)  - \gamma w +\frac{\eps_{\rm tr}\gamma}{\sqrt{n}} \left\|\Sn^{-1/2}\widetilde{\rhoc_{n}}\right\|_1 \nn\\[5pt]
&\hspace{.4in}+\left\langle\frac{\lac}{\sqrt{n}}, \Th\left(\widetilde{\rhoc_{n}}-\widetilde{\thc_{n}}\right)\right\rangle \;\;+\;\; r\left\|\Sn^{-1/2}\widetilde{\rhoc_n}\right\|_2^2 \;\; + \;\;\left\langle\frac{\lac}{\sqrt{n}},\Th^\perp\widetilde{\rhoc_{n}}\right\rangle\nn\\&\hspace{.4in}-\frac{\alpha}{2\tau_2 n}\left\|-\Th^\perp\lac+\frac{\beta}{\sqrt{\delta}}\Th^\perp\h \right\|_2^2 - \frac{\alpha\tau_2}{2}+ \frac{\beta}{2\tau_1 m}\Big\|-\vb + Y\bar{X}\Th\widetilde{\thc_{n}} + \alpha\g \Big\|_2^2 + \frac{\beta\tau_1}{2} \label{eq:minmax_5}\,,
\eea
where we also used the fact that $m/n=\delta$. By the following chain of equations, we simplify the maximization with respect to $\lac$, 
\bea
&\;\;\;\max _{\lac\in\R^n}\;\;\;\;\;\left\langle\frac{\lac}{\sqrt{n}},\Th^\perp\widetilde{\rhoc_{n}}\right\rangle -\frac{\alpha}{2\tau_2 n}\left\|-\Th^\perp\lac+\frac{\beta}{\sqrt{\delta}}\Th^\perp\h \right\|_2^2   +\left\langle\frac{\lac}{\sqrt{n}}, \Th\left(\widetilde{\rhoc_{n}}-\widetilde{\thc_{n}}\right)\right\rangle\nn\\[7pt]
&\hspace{.02in}=\max _{\lac\in\R^n} \;\;\;\;\;-\frac{\alpha}{2n\tau_2} \left\| \Th^\perp\left(\frac{\beta}{\sqrt{\delta}}\h-\lac+\frac{\tau_2\widetilde{\rhoc_{n}}\sqrt{n}}{\alpha}\right)\right\|_2^2 + \frac{\tau_2}{2n\alpha}\left\|\Th^\perp\left(\widetilde{\rhoc_{n}}\sqrt{n}+ \frac{\alpha\beta}{\tau_2\sqrt{\delta}}\h\right) \right\|_2^2 \nn \\[7pt]
&\hspace{1in}- \frac{\alpha\beta^2}{2m\tau_2}\left\|\Th^\perp\h\right\|_2^2  + \left\langle\frac{\lac}{\sqrt{n}}, \Th\left(\widetilde{\rhoc_{n}}-\widetilde{\thc_{n}}\right)\right\rangle \label{eq:lambda_first}\\[7pt]
&\hspace{.02in}=\max _{\Th\lac\in\R^n} \max_{\Thp\lac\in\R^n}\;\;\;\; -\frac{\alpha}{2n\tau_2} \left\| \Th^\perp\left(\frac{\beta}{\sqrt{\delta}}\h-\lac+\frac{\tau_2\widetilde{\rhoc_{n}}\sqrt{n}}{\alpha}\right)\right\|_2^2 + \frac{\tau_2}{2n\alpha}\left\|\Th^\perp\left(\widetilde{\rhoc_{n}}\sqrt{n}+ \frac{\alpha\beta}{\tau_2\sqrt{\delta}}\h\right) \right\|_2^2 \nn \\[7pt]
&\hspace{1in}- \frac{\alpha\beta^2}{2m\tau_2}\left\|\Th^\perp\h\right\|_2^2  + \left\langle\frac{\Th\lac}{\sqrt{n}}, \Th\left(\widetilde{\rhoc_{n}}-\widetilde{\thc_{n}}\right)\right\rangle \label{eq:lambda_second}\\[7pt]
&\hspace{.02in}=\max_{\Th\lac\in\R^n}\;\;\; \frac{\tau_2}{2n\alpha}\left\|\Th^\perp\left(\widetilde{\rhoc_{n}}\sqrt{n}+ \frac{\alpha\beta}{\tau_2\sqrt{\delta}}\h\right) \right\|_2^2 - \frac{\alpha\beta^2}{2m\tau_2}\left\|\Th^\perp\h\right\|_2^2  + \left\langle\frac{\Th\lac}{\sqrt{n}}, \Th\left(\widetilde{\rhoc_{n}}-\widetilde{\thc_{n}}\right)\right\rangle \nn\\[7pt]
&\hspace{.02in}=\frac{\tau_2}{2n\alpha}\left\|\Th^\perp\left(\widetilde{\rhoc_{n}}\sqrt{n}+ \frac{\alpha\beta}{\tau_2\sqrt{\delta}}\h\right) \right\|_2^2 - \frac{\alpha\beta^2}{2\delta\tau_2}.\label{eq:minmax_6}
\eea
In deriving \eqref{eq:lambda_first} we used completion of squares. In \eqref{eq:lambda_second}, we decompose maximization of $\lac$ into $\Th\lac$ and $\Thp\lac$ and used the fact that $\Thp+\Th=\mathbb{I}_n$ and $\Thp\Th=\mathbf{0}_n$. In the last line we used the fact that $\|\Thp\h\|_2^2\rightarrow n$. We note that the last line is true subject to the constraint $\Th\widetilde{\rhoc_{n}} = \Th\widetilde{\thc_{n}}$, which ensures boundedness of the min-max objective. We include this constraint in the next step of the proof. Therefore, inserting \eqref{eq:minmax_6} back in \eqref{eq:minmax_5} we derive, 
\bea\label{eq:minmax_7}
&\min_{\substack{{\widetilde{\rhoc_{n}},\Th\widetilde{\thc_{n}}\in \R^n,\vb\in \R^m,w, \alpha,\tau_1\in\R_+}\\ {\text{s.t.}\;\; \Th\widetilde{\rhoc_{n}}=\Th\widetilde{\thc_{n}}}}}\;\;\ \max_{\gamma, \tau_2,\beta\in\R_+} \;\; \frac{\mathbf{1}_m^{\top}}{m} \Ellb(\vb - w\mathbf{1}_m)  - \gamma w +  \frac{\eps_{\rm tr}\gamma}{\sqrt{n}} \left\|\Sn^{-1/2}\widetilde{\rhoc_{n}}\right\|_1  \nn\\&\hspace{1in}\;+ \; r\left\|\Sn^{-1/2}\widetilde{\rhoc_n}\right\|_2^2+ \frac{\tau_2}{2n\alpha}\left\|\Th^\perp\left(\widetilde{\rhoc_{n}}\sqrt{n}+ \frac{\alpha\beta}{\tau_2\sqrt{\delta}}\h\right) \right\|_2^2   -\frac{\alpha\beta^2}{2\delta\tau_2} - \frac{\alpha\tau_2}{2}\nn\\&\hspace{1in}+ \frac{\beta}{2\tau_1 m} \Big\|-\vb +Y\bar{X}\Th\widetilde{\thc_{n}} + \alpha\g \Big\|_2^2 + \frac{\beta\tau_1}{2}.
\eea
Recalling $\Th^\perp\triangleq\mathbb{I}-\Th$, we can deduce
\bea
\frac{1}{n}\left\|\Th^\perp\left(\widetilde{\rhoc_{n}}\sqrt{n}+ \frac{\alpha\beta}{\tau_2\sqrt{\delta}}\h\right) \right\|_2^2 = \frac{1}{n}&\left\|\widetilde{\rhoc_{n}}\sqrt{n} + \frac{\alpha\beta}{\tau_2\sqrt{\delta}}\h\right\|_2^2 - \left\|\Th\widetilde{\rhoc_{n}}\right\|_2^2 - \frac{\alpha^2\beta^2}{n\tau_2^2\delta}\left\|\Th\h\right\|_2^2 \nn\\[4pt]
&- 4\frac{\alpha\beta}{\tau_2\sqrt{m}}\h^{\top}\Th\widetilde{\rhoc_{n}}. \label{eq:beforeVani}
\eea
Since $\| \widetilde{\thc_{n}^\star} \|_2 \rP \zeta$ where $\zeta=\mathcal{O}(1)$ by Assumption \ref{ass:2}, we can see that $$\|\Th\h\|^2_2 = \mathcal{O}(1) ,\;\;\; \h^{\top}\Th\widetilde{\rhoc_{n}} = \h^{\top}\Th\widetilde{\thc_{n}}= \mu \h^\top\widetilde{\thc_{n}^\star}=\mathcal{O}(1),$$ 
which implies that the last two terms in \eqref{eq:beforeVani} vanish asymptotically and we have that,
\bea
\frac{1}{n}\left\|\Th^\perp\left(\widetilde{\rhoc_{n}}\sqrt{n}+ \frac{\alpha\beta}{\tau_2\sqrt{\delta}}\h\right) \right\|_2^2 &= \frac{1}{n}\left\|\widetilde{\rhoc_{n}}\sqrt{n} + \frac{\alpha\beta}{\tau_2\sqrt{\delta}}\h\right\|_2^2 - \left\|\Th\widetilde{\rhoc_{n}}\right\|_2^2 \nn\\
& =  \frac{1}{n}\left\|\widetilde{\rhoc_{n}}\sqrt{n} + \frac{\alpha\beta}{\tau_2\sqrt{\delta}}\h\right\|_2^2 - \mu^2\left\|\widetilde{\thc_{n}^\star}\right\|_2^2.\nn
\eea
The last line is due to the constraint in \eqref{eq:minmax_7} i.e., $\Th\widetilde{\rhoc_{n}} = \Th\widetilde{\thc_{n}}$ (or equivalently $\frac{\langle\widetilde{\thc_{n}^\star},\widetilde{\rhoc_{n}}\rangle}{\left\|\widetilde{\thc_n^\star}\right\|_2^2} = \mu$, based on the definition of $\Th$ and $\mu$).
Therefore by plugging this in \eqref{eq:minmax_7} and introducing the Lagrangian multiplier $\eta\in\R$, \eqref{eq:minmax_7} can be equivalently rewritten as follows,
\bea
&\min_{\substack{{\widetilde{\rhoc_{n}},\Th\widetilde{\thc_{n}}\in \R^n,\vb\in \R^m,w, \alpha,\tau_1\in\R_+}\\ {\text{s.t.}\;\; \left\langle\widetilde{\thc_{n}^\star},\widetilde{\rhoc_n}\right\rangle\Big/{\left\|\widetilde{\thc_n^\star}\right\|_2^2}} = \mu}}\;\;\ \max_{\gamma, \tau_2,\beta\in\R_+} \, \frac{\mathbf{1}_m^{\top}}{m} \Ellb(\vb - w\mathbf{1}_m)  - \gamma w +\frac{\eps_{\rm tr}\gamma}{\sqrt{n}} \left\|\Sn^{-1/2}\widetilde{\rhoc_{n}}\right\|_1 \; \nn
\\&\hspace{.9in}+ r\left\|\Sn^{-1/2}\widetilde{\rhoc_n}\right\|_2^2+ 
\frac{\tau_2}{2n\alpha}\left\|\widetilde{\rhoc_{n}}\sqrt{n}+ \frac{\alpha\beta}{\tau_2\sqrt{\delta}}\h \right\|_2^2 - \frac{\mu^2\tau_2\left\|\widetilde{\thc_n^\star}\right\|_2^2}{2\alpha}
-\frac{\alpha\beta^2}{2\delta\tau_2} - \frac{\alpha\tau_2}{2}\nn\\[4pt]&\hspace{.9in}+ \frac{\beta}{2\tau_1 m} \left\|-\vb +Y\bar{X}\Th\widetilde{\thc_{n}} + \alpha\g \right\|_2^2  + \frac{\beta\tau_1}{2}  \nn\\= 
&\min_{\substack{{\widetilde{\rhoc_{n}}\in \R^n,\vb\in \R^m},\\ {w, \alpha,\tau_1\in\R_+,\mu\in\R}}}\;\;\ \max_{\gamma, \tau_2,\beta\in\R_+,\eta\in\R} \, \frac{\mathbf{1}_m^{\top}}{m} \Ellb(\vb - w\mathbf{1}_m)  - \gamma w +\frac{\eps_{\rm tr}\gamma}{\sqrt{n}}\left\|\Sn^{-1/2}\widetilde{\rhoc_{n}}\right\|_1 + r\left\|\Sn^{-1/2}\widetilde{\rhoc_n}\right\|_2^2  \nn\\ 
&\;\;+\frac{\tau_2}{2n\alpha}\left\|\widetilde{\rhoc_{n}}\sqrt{n}+ \frac{\alpha\beta}{\tau_2\sqrt{\delta}}\h \right\|_2^2 - \frac{\mu^2\tau_2\left\|\widetilde{\thc_n^\star}\right\|_2^2}{2\alpha}
-\frac{\alpha\beta^2}{2\delta\tau_2} - \frac{\alpha\tau_2}{2}+ \frac{\beta}{2\tau_1 m} \left\|-\vb + \mu Y\bar{X}\widetilde{\thc_{n}^\star} + \alpha\g \right\|_2^2  \nn \\ &\;\; + \frac{\beta\tau_1}{2}   + \eta\left(\mu-\frac{\langle\widetilde{\thc_{n}^\star},\widetilde{\rhoc_{n}}\rangle}{\big\|\widetilde{\thc_n^\star}\big\|_2^2}\right).
\label{eq:mimax_8}
\eea
Minimization w.r.t $\vb$ can be written based on the moreau-envelope of $\Ellb$:
\bea
&\min_{\vb\in \R^m}  \frac{\mathbf{1}_m^{\top}}{m} \Ellb\left(\vb - w\mathbf{1}_m\right) + \frac{\beta}{2\tau_1 m} \left\|-\vb + \mu Y\bar{X}\widetilde{\thc_{n}^\star} + \alpha\g \right\|_2^2 \nn\\&\hspace{2in}= \frac{1}{m}\env{\small{\Ellb}}{\mu Y\bar{X}\widetilde{\thc_{n}^\star} + \alpha\g-w\mathbf{1}_m}{\frac{\tau_1}{\beta}}.
\eea
Our final key step is to write the minimization with respect to $\widetilde{\rhoc_{n}}\in\R^n$ based on the Moreau-envelope of the $\ellb_1 + \ellb_2^2$ norms. To this end, we rewrite the terms in \eqref{eq:mimax_8} consisting of $\widetilde{\rhoc_{n}}$ as following,
\begin{align}
&\min_{\widetilde{\rhoc_{n}}\in\R^n}  \frac{\eps_{\rm tr}\gamma}{\sqrt{n}}\left\|\Sn^{-1/2}\widetilde{\rhoc_{n}}\right\|_1 +  r\left\|\Sn^{-1/2}\widetilde{\rhoc_n}\right\|_2^2 + \frac{\tau_2}{2n\alpha}\left\|\widetilde{\rhoc_{n}}\sqrt{n}+ \frac{\alpha\beta}{\tau_2\sqrt{\delta}}\h \right\|_2^2 -\eta\frac{\left\langle\widetilde{\thc_{n}^\star},\widetilde{\rhoc_{n}}\right\rangle}{\left\|\widetilde{\thc_n^\star}\right\|_2^2}\nn \\[4pt] =
&\min_{\widetilde{\rhoc_{n}}\in\R^n} \frac{\eps_{\rm tr}\gamma}{\sqrt{n}}\left\|\Sn^{-1/2}\widetilde{\rhoc_{n}}\right\|_1 + r\left\|\Sn^{-1/2}\widetilde{\rhoc_n}\right\|_2^2+ \frac{\tau_2}{2\alpha n} \left\|\widetilde{\rhoc_{n}} \sqrt{n}+ \frac{\alpha\beta}{\tau_2\sqrt{\delta}}\h - \frac{\eta\alpha\sqrt{n}}{\tau_2\left\|\widetilde{\thc_n^\star}\right\|_2^2}\widetilde{\thc_{n}^\star}\right\|^2_2 \nn\\&\hspace{1.5in}-\frac{\eta^2\alpha}{2\tau_2\left\|\widetilde{\thc_n^\star}\right\|^2_2} - \frac{\alpha\beta\eta}{\sqrt{m}{\tau_2}\left\|\widetilde{\thc_n^\star}\right\|_2^2}\left\langle\widetilde{{\thc_{n}^\star}},\h \right\rangle \nn\\[4pt]=
&\min_{\widetilde{\rhoc_{n}}\in\R^n} \frac{\eps_{\rm tr}\gamma}{\sqrt{n}}\left\|\Sn^{-1/2}\widetilde{\rhoc_{n}}\right\|_1 + r\left\|\Sn^{-1/2}\widetilde{\rhoc_n}\right\|_2^2+ \frac{\tau_2}{2\alpha n} \left\|\widetilde{\rhoc_{n}} \sqrt{n}+ \frac{\alpha\beta}{\tau_2\sqrt{\delta}}\h - \frac{\eta\alpha\sqrt{n}}{\tau_2\left\|\widetilde{\thc_n^\star}\right\|^2_2}\widetilde{\thc_n^\star}\right\|^2_2 \nn\\&\hspace{1.5in}- \frac{\eta^2\alpha}{2\tau_2\left\|\widetilde{\thc_n^\star}\right\|^2_2}  \label{eq:beforeME1}\\[4pt]=
& \min_{\widetilde{\rhoc_{n}}\in\R^n} \frac{\eps_{\rm tr}\gamma}{n}\left\|\widetilde{\rhoc_{n}}\sqrt{n}\right\|_1 +\frac{r}{n}\left\|\widetilde{\rhoc_n}\sqrt{n}\right\|_2^2\;\nn\\
&\;\;\;\;+ \frac{\tau_2}{2\alpha n} \left\|\Sn^{1/2}\left(\widetilde {\rhoc_{n}} \sqrt{n}+ \frac{\alpha\beta}{\tau_2\sqrt{\delta}}\Sn^{-1/2}\h - \frac{\eta\alpha\sqrt{n}}{\tau_2\left\|\widetilde{\thc_n^\star}\right\|^2_2}\Sn^{-1/2}\widetilde{\thc_{n}^\star}\right)\right\|^2_2 - \frac{\eta^2\alpha}{2\tau_2\left\|\widetilde{\thc_{n}^\star}\right\|^2_2}.
\label{eq:beforeME2}
\end{align}
Here, the first step follows from the completion of squares while the second step follows from the fact that $\widetilde{\thc_{n}^\star}^\top\h=\mathcal{O}(\|\widetilde{\thc_n^\star}\|)=\mathcal{O}(1)$ and thus the last term asymptotically vanishes. Now, note that the minimization w.r.t. $\widetilde{\rhoc_n}$ in \eqref{eq:beforeME2} is equivalent to the following Moreau-Envelope function:
\bea
\frac{ \eps_{\rm tr}\gamma}{n}\env{\left(\small{\ellb_1}+\frac{r}{\Large{\eps}_{ \tiny{\rm tr}}\gamma}\ellb_2^2,\,\Sn\right)}{\frac{\alpha\beta}{\tau_2\sqrt{\delta}}\Sn^{-1/2}\h + \frac{\alpha\eta\sqrt{n}}{\tau_2\left\|\widetilde{\thc_{n}^\star}\right\|^2_2}{\thc_{n}^\star}}{\frac{\alpha\eps_{\rm tr}\gamma}{\tau_2}},\nn
\eea
where recall the definition of $\env{\left(\small{\ellb_1}+C\, \ellb_2^2,\,\mathbf{S}\right)}{\x}{\tau}$ in \eqref{eq:moreau_sigma}.
Thus, the following objective function is derived by replacing the Moreau-envelopes in \eqref{eq:mimax_8},
\bea
&\min_{\substack{{\alpha,\tau_1,w\in\R_+,}\\ {\mu\in\R}}}\;\;\max_{\substack{{\tau_2,\beta,\gamma\in\R_+,}\\ {\eta\in\R}}} \,   
-\gamma w - \frac{\mu^2\tau_2}{2\alpha}\left\|\widetilde{\thc_{n}^\star}\right\|^2_2-\frac{\alpha\beta^2}{2\delta\tau_2} - \frac{\alpha\tau_2}{2}+ \frac{\beta\tau_1}{2} + \eta \mu- \frac{\eta^2\alpha}{2\tau_2\left\|\widetilde{\thc_{n}^\star}\right\|^2_2} \nn \\
&\hspace{1.1in}+\frac{1}{m}\env{{\small{\Ellb}}}{ \mu Y\bar{X}\widetilde{\thc_{n}^\star}+ \alpha\g-w\mathbf{1}_m}{\frac{\tau_1}{\beta}} \nn\\&\hspace{1.1in}+ \frac{ \eps_{\rm tr}\gamma}{n}\env{\left(\small{\ellb_1}+\frac{r}{\Large{\eps}_{ \tiny{\rm tr}}\gamma}\ellb_2^2,\,\Sn\right)}{\frac{\alpha\beta}{\tau_2\sqrt{\delta}}\Sn^{-1/2}\h + \frac{\alpha\eta\sqrt{n}}{\tau_2\left\|\widetilde{\thc_{n}^\star}\right\|^2_2}{\thc_{n}^\star}}{\frac{\alpha\eps_{\rm tr}\gamma}{\tau_2}}.\label{eq:minmax_9}
\eea

We note that based on the definition of $\Th$ the entry $i$ on the diagonal of $Y$, denoted by $y_i$ is derived as $y_i = \psi\left(\left\langle\x_i,\thc_{n}^\star\right\rangle\right) = \psi(\langle\bar{\x}_i,\widetilde{\thc_{n}^\star}\rangle) $, where
$$
\left\langle\bar{\x}_i,\widetilde{\thc_{n}^\star}\right\rangle \sim \mathcal{N}\left(0,{\thc_{n}^\star}^\top\Sn{\thc_{n}^\star}\right).
$$
Therefore it yields that 
 $$
 \mu Y\bar{X}\widetilde{\thc_{n}^\star}\rP \mu\zeta\, \left(\textbf{s} \odot{\Psi}(\zeta\textbf{s})\right),
 $$
  where $\Psi(\zeta\textbf{s})\triangleq \left[\psi(\zeta s_1);\cdots;\psi(\zeta s_m)\right]$ for the vector $\textbf{s}\in\R^m$ with i.i.d standard normal entries $s_i$ and by Assumption \ref{ass:2}, $\zeta$ denotes the high-dimensional limit of ${\thc_{n}^\star}^\top\Sn{\thc_{n}^\star}$.
  Therefore based on the separability of the Moreau-envelope $\mathcal{M}_{\Ellb}$ we have,
  $$
   \frac{1}{m}\env{{\small{\Ellb}}}{ \mu Y\bar{X}\widetilde{\thc_{n}^\star}+ \alpha\g-w\mathbf{1}_m}{\frac{\tau_1}{\beta}} \rP \E_{S,G}\left[\env{\Lm}{\alpha G+\mu \zeta S\cdot \psi(\zeta S)-w}{\frac{\tau_1}{\beta}}\right],
  $$
  for $S,G\simiid\mathcal{N}(0,1)$.
%
Also, it holds that,
\bea
\frac{1}{n}\env{\left(\small{\ellb_1}+\frac{r}{\Large{\eps}_{ \tiny{\rm tr}}\gamma}\ellb_2^2,\,\Sn\right)}{\frac{\alpha\beta}{\tau_2\sqrt{\delta}}\Sn^{-1/2}\h + \frac{\alpha\eta\sqrt{n}}{\tau_2\left\|\widetilde{\thc_{n}^\star}\right\|^2_2}{\thc_{n}^\star}}{\frac{\alpha\eps_{\rm tr}\gamma}{\tau_2}} \rP \bar{M}\left(\frac{\alpha\beta}{\tau_2\sqrt{\delta}},\frac{\alpha\eta}{\tau_2\zeta^2};\frac{\alpha\gamma\eps_{\rm tr}}{\tau_2}\right).\label{eq:melimit}
\eea
Putting these back in \eqref{eq:minmax_9}, we conclude with the objective in \eqref{eq:minmax_app_thm5}. This completes the proof.
\end{proof}
\subsection{Case II: Correlated Features with Diagonal Covariance Matrix (Proof of Theorem \ref{thm:main})}
Note that the Moreau-envelope in \eqref{eq:melimit} is not separable in general and thus the computation of $\mathcal{M}_{\left(\ellb_1+\frac{r}{\eps_{\rm tr}\gamma}\ellb_2^2,\Sn\right)}$ may not be simplified further. By assuming $\Sn$ to be diagonal i.e., $\Sn=\mathbf{\Lambda}_n$ with diagonal entries $\lambda_{n,i}$, $i\in[n]$, it is concluded from \eqref{eq:beforeME2} that the minimization becomes separable over the entires of $\widetilde{\rhoc_n}$. In fact, it is inferred that in this case:
\bea\nn
&\frac{1}{n}\env{\left(\small{\ellb_1}+\frac{r}{\Large{\eps}_{ \tiny{\rm tr}}\gamma}\ellb_2^2,\,\Sn\right)}{\frac{\alpha\beta}{\tau_2\sqrt{\delta}}\Sn^{-1/2}\h + \frac{\alpha\eta\sqrt{n}}{\tau_2\left\|\widetilde{\thc_{n}^\star}\right\|^2_2}{\thc_{n}^\star}}{\frac{\alpha\eps_{\rm tr}\gamma}{\tau_2}} \\&\hspace{1in}
= \frac{1}{n}\sum_{i=1}^{n}\env{\ell_1+\frac{r}{\eps_{\rm tr}\gamma}\ell_2^2}{\frac{\alpha\beta}{\tau_2\sqrt{\delta\lambda_{n,i}}}\h_i + \frac{\alpha\eta\sqrt{n}}{\tau_2\left\|\widetilde{\thc_{n}^\star}\right\|^2_2}{\thc_{n,i}^\star}}{\frac{\alpha\eps_{\rm tr}\gamma}{\tau_2\lambda_{n,i}}}.\label{eq:beforeLips}
\eea
By Assumption \ref{ass:1}, we know that $\lambda_{n,i}\in (c,C)$, for all $i\in[n]$ and all $n\in \mathbb{N}$, where $c>0$. This results in  $\env{\ell_1+\ell_2^2}{\cdot}{\cdot}$ being Pseudo-Lipschitz of order 2. Thus by Assumption \ref{ass:3}, the expression in \eqref{eq:beforeLips} converges in probability to
\bea
\E_{L,H,T}\left[\env{\ell_1+\frac{r}{\eps_{\rm tr}\gamma}\ell_2^2}{\frac{\alpha\beta}{\tau_2\sqrt{\delta L}}H + \frac{\alpha\eta}{\tau_2\zeta^2}T}{\frac{\alpha\eps_{\rm tr}\gamma}{\tau_2 L}} \right],
\eea
for the standard Gaussian random variable $H$ and $(L,T)$ drawn according to distribution $\Pi$. Thus in this case, \eqref{eq:minmax_9} converges to the min-max problem in \eqref{eq:minmax_main}. The proof of the Gaussian-Mixture model is deferred to Appendix \ref{sec:GMM_analysis}. This completes the proof of Theorem \ref{thm:main}. 

\subsection{Case III: Isotropic Features}\label{sec:GLM,infty,caseIII}
When $\Sn=\mathbb{I}_n$, the final expressions can be further simplified, as the term $\|\Sn\widetilde{\thc_n}\|_2^2$ becomes decomposable into $\|\Th\widetilde{\thc_n}\|_2^2$ and $\|\Thp\widetilde{\thc_n}\|_2^2$. Here we focus on the case of $q=\infty$ for GLM and defer the analysis of $q=2$ to Section \ref{sec:GLM_ell_2}. Proceeding with the same notation as in \eqref{eq:minmax_main}, consider the following min-max objective,
\bea
&\min_{\substack{{\alpha,\tau_1,w\in\R_+,}\\ {\mu\in\R}}}\;\;\max_{\substack{{\tau_2,\beta,\gamma\in\R_+,}\\ {\eta\in\R}}} \,   
f_{_{\delta,1}}(\bar\vb) + r \al^2 + r\mu^2 + \E_{G,S}\left[\env{\Lm}{\alpha G+\mu S\, \psi(S)-w}{\frac{\tau_1}{\beta}}\right]\nn\\
&\hspace{1.5in}+\eps_{\rm tr}\gamma\,\E_{H,T}\left[\env{\ell_1}{\frac{\alpha\beta}{\tau_2\sqrt{\delta}}H + \frac{\alpha\eta}{\tau_2}T}{\frac{\alpha\eps_{\rm tr}\gamma}{\tau_2}} \right].\label{eq:minmax_ellinfty_iso}
\eea

\begin{corollary}\label{cor:iso_ell_infty}
Consider the Generalized Linear models \eqref{eq:binarymodel}. Assume the same settings and assumptions as in Theorem \ref{thm:5}, only here assume that $\Sn=\mathbb{I}_n$. Then, the high-dimensional limit for the adversarial test error $(\mathcal{E}_{\ell_\infty,\frac{\eps_{\rm ts}}{\sqrt{n}}})$ is derived as follows,
\bea\label{eq:corr5_gen_error}
\left\{\mathcal{E}_{\ell_\infty, \frac{\eps_{\rm ts}}{\sqrt{n}}}^{\tiny{GLM}}\left(\widehat{\thc_n}\right)\right\} \rP  \Pro \Big( \mu^\star \,S\,\psi( S) + \alpha^\star G < w^\star {\eps_{\rm ts}}/{\eps_{\rm tr}} \Big),
\eea
where $(\alpha^\star,\mu^\star,w^\star)$ is the unique solution to the scalar minimax problem \eqref{eq:minmax_ellinfty_iso}.
\end{corollary}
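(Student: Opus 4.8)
The plan is to derive Corollary \ref{cor:iso_ell_infty} by specializing the proof of Theorem \ref{thm:5} to $\Sn = \mathbb{I}_n$ and tracking the simplifications that the identity covariance allows. With $\Sn = \mathbb{I}_n$ we have $\widetilde{\thc_n} = \thc_n$ and $\widetilde{\thc_n^\star} = \thc_n^\star$, so by Assumption \ref{ass:2} the limit $\zeta$ equals $1$ (hence $\mathcal{C} = 1$), and since the features are isotropic the spectral distribution in Assumption \ref{ass:3} is a unit point mass at $L \equiv 1$ with $T$ the limit of $\sqrt{n}\,\thc^\star_{n,i}$. I would run the argument of Theorem \ref{thm:5} verbatim: reduce the robust ERM to $\min_{\thc_n}\tfrac1m\sum_i \Lm(y_i\langle\x_i,\thc_n\rangle - \tfrac{\eps_{\rm tr}}{\sqrt n}\|\thc_n\|_1) + r\|\thc_n\|_2^2$ as in \eqref{eq:ermaftermax}, introduce $\vb$ and its dual $\ub$, split $\thc_n$ along $\Th$ and $\Thp$, pass to the auxiliary optimization via the CGMT, and optimize over $\ub$ to reach the analogue of \eqref{eq:minmax} with $\Sn = \mathbb{I}_n$.

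From this point the only structural difference is the handling of the two norms of $\thc_n$. In Theorem \ref{thm:5} a slack vector $\widetilde{\rhoc_n}$ was needed in order to dualize \emph{both} $\|\Sn^{-1/2}\widetilde{\thc_n}\|_1$ and $\|\Sn^{-1/2}\widetilde{\thc_n}\|_2^2$, because neither is decomposable along $\Th\widetilde{\thc_n},\Thp\widetilde{\thc_n}$ in the presence of $\Sn^{-1/2}$; when $\Sn = \mathbb{I}_n$ the ridge term \emph{is} decomposable, $\|\thc_n\|_2^2 = \|\Th\thc_n\|_2^2 + \|\Thp\thc_n\|_2^2$. I would therefore keep $r\|\Th\thc_n\|_2^2$ and $r\|\Thp\thc_n\|_2^2$ outside any envelope, and introduce only the scalar $w$ with its constraint $w = \tfrac{\eps_{\rm tr}}{\sqrt n}\|\thc_n\|_1$, its multiplier $\gamma$, and a slack $\rhoc_n$ for the $\ell_1$ norm with Lagrange vector $\lac$ exactly as in \eqref{eq:minmax_3}. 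Optimizing over the direction of $\Thp\thc_n$ and over $\lac$ as in \eqref{eq:minmax_4}--\eqref{eq:minmax_6} now produces the explicit scalar terms $r\mu^2$ (from $\|\Th\thc_n\|_2^2 \to \mu^2$ under the constraint $\Th\rhoc_n = \Th\thc_n$) and $r\alpha^2$ (from $\|\Thp\thc_n\|_2^2 \to \alpha^2$), while the minimization over $\rhoc_n$ collapses to a Moreau envelope of the \emph{pure} $\ell_1$ norm rather than of $\ell_1 + \tfrac{r}{\eps_{\rm tr}\gamma}\ell_2^2$. Its $\tfrac1n$-normalized limit is $\E_{H,T}\big[\env{\ell_1}{\tfrac{\alpha\beta}{\tau_2\sqrt\delta}H + \tfrac{\alpha\eta}{\tau_2}T}{\tfrac{\alpha\eps_{\rm tr}\gamma}{\tau_2}}\big]$, obtained from the Pseudo-Lipschitz/Wasserstein argument behind \eqref{eq:melimit} (Assumptions \ref{ass:1}--\ref{ass:3}) specialized to $L \equiv 1$; and the minimization over $\vb$ gives $\E_{G,S}\big[\env{\Lm}{\alpha G + \mu S\psi(S) - w}{\tau_1/\beta}\big]$ using $\mu Y\bar X\thc_n^\star \rP \mu(\mathbf{s}\odot\Psi(\mathbf{s}))$ with $\zeta = 1$. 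Collecting $f_{_{\delta,1}}(\bar\vb)$ together with the terms $r\alpha^2 + r\mu^2$ and the two Moreau-envelope limits yields exactly \eqref{eq:minmax_ellinfty_iso}.

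For the error formula I would invoke Lemma \ref{lem:gen_error} with $q = \infty$ (dual norm $\ell_1$), $\Sn = \mathbb{I}_n$, and test budget $\eps = \eps_{\rm ts}/\sqrt n$, which gives $\mathcal{E}_{\ell_\infty,\eps_{\rm ts}/\sqrt n}(\widehat{\thc_n}) \rP \Pro\big(\mu^\star S\psi(S) + \alpha^\star G - \tfrac{\eps_{\rm ts}}{\sqrt n}\|\widehat{\thc_n}\|_1 < 0\big)$, where $\mu^\star,\alpha^\star$ are the high-dimensional limits of $\|\Th\widehat{\thc_n}\|_2$ and $\|\Thp\widehat{\thc_n}\|_2$. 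The transfer-of-minimizers part of the CGMT (Theorem 6.1(iii) in \cite{master}) identifies $\mu^\star,\alpha^\star$ with the minimizers of \eqref{eq:minmax_ellinfty_iso}, and since the scalarization fixes $w = \tfrac{\eps_{\rm tr}}{\sqrt n}\|\widehat{\thc_n}\|_1$ the quantity $\tfrac{\eps_{\rm ts}}{\sqrt n}\|\widehat{\thc_n}\|_1$ converges to $w^\star\eps_{\rm ts}/\eps_{\rm tr}$; this gives \eqref{eq:corr5_gen_error}.

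I expect the main obstacle to be the bookkeeping needed to show that \eqref{eq:minmax_ellinfty_iso} still admits a \emph{unique} bounded saddle point — i.e. that pulling the ridge term out of the envelope and retaining only the $\ell_1$ Moreau term does not spoil the strict convex--concave structure that the CGMT conclusions on convergence of the minimizers rely on — together with checking, as in \eqref{eq:beforeVani}--\eqref{eq:beforeME2}, that the cross terms involving $\Th\h$ and $\langle\thc_n^\star,\h\rangle$ vanish asymptotically (here immediate from $\|\thc_n^\star\|_2 \rP 1$). Everything else is a routine transcription of the Case-I computation with $\Sn$ replaced by the identity.
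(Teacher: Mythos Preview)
Your proposal is correct and follows essentially the same approach as the paper's own proof: specialize Theorem~\ref{thm:5} to $\Sn=\mathbb{I}_n$ (so $\zeta=1$, $L\equiv 1$), observe that the ridge term now decomposes as $\|\thc_n\|_2^2=\|\Th\thc_n\|_2^2+\|\Thp\thc_n\|_2^2\rightarrow\mu^2+\alpha^2$ and therefore can be kept outside, apply the Lagrangian/slack-variable machinery only to the $\ell_1$ term, and collect the resulting pure-$\ell_1$ Moreau envelope to obtain \eqref{eq:minmax_ellinfty_iso}. The paper's proof is a terse two-sentence remark to exactly this effect; your write-up simply unpacks the same steps in more detail.
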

\begin{proof}
The proof follows the same steps as Theorem \ref{thm:5}. Note here that $\zeta=1$ and the random variable $L=1$. Also, in deriving \eqref{eq:minmax_3}, it suffices to write the Lagrangian equivalent formulation only for the $\ell_1$ loss and write $\|\widetilde{\thc_n}\|_2^2 = \|\Th\widetilde{\thc_n}\|_2^2 + \|\Thp\widetilde{\thc_n}\|_2^2\rP r \al^2+ r\mu^2$, which results in \eqref{eq:minmax_ellinfty_iso}. 
\end{proof}
\subsubsection{A System of Equations.}\label{sec:sys_eqs_iso}
We find solutions to the min-max problem in \eqref{eq:minmax_ellinfty_iso}(the objective of which we denote by $\bar L:\R^8\rightarrow\R$) by forming and solving $\nabla_{\bar{\vb}} \bar L = \mathbf{0}$. To compute $\nabla_{\bar \vb} \bar L$ we leverage properties of Moreau-envelopes and appropriately combine different equations in the system $\nabla _{\bar \vb}\bar L = \mathbf{0}$, so as to simplify the resulting expressions (details are provided below). This leads to the system of eight equations \eqref{eq:eta_main}. Our experiments suggest that the simplifications that lead to these, are important for a simple iterative fixed-point scheme to obtain the theoretical values of $(\alpha^\star,\mu^\star,w^\star)$.  
\begin{align}\label{eq:eta_main}
\begin{split} \hspace{-.7in}
\begin{cases}& \hspace{-.13in}\eta =  \frac{\beta}{\tau_1} \E\big[Z\left(w+\prox{\Lm}{ \mu Z+ \alpha G-w}{\tau_1/{\beta}}\right)\big] - 2\la\mu+\frac{\mu\tau_2}{\alpha} - \kappa\frac{\beta\mu}{\tau_1},\\[5pt]
&\hspace{-.13in}\mu = \E\left[T\cdot \prox{\ell_1}{{\alpha\beta H}/({\tau_2\sqrt{\delta}}) + {\alpha\eta T}/{\tau_2}}{{\alpha\eps_{\rm tr}\gamma}/{\tau_2}}\right],\\[5pt]
&\hspace{-.13in}\gamma =  -\E\left[ \envdx{\Lm}{ \mu Z+ \alpha G-w}{{\tau_1}/{\beta}}\right],\\[5pt]
&\hspace{-.13in}\beta^2 = \E\Big[\Big( \envdx{\Lm}{ \mu  Z+ \alpha G-w}{{\tau_1}/{\beta}}\Big)^2\Big],\\[5pt]
&\hspace{-.13in}\tau_1 =\frac{1}{\sqrt{\delta}} \E\left[H\cdot\prox{\ell_1}{\frac{\alpha\beta H}{\tau_2\sqrt{\delta}} + {\alpha\eta T}/{\tau_2}}{{\alpha\eps_{\rm tr}\gamma}/{\tau_2}}\right],\\[5pt]
&\hspace{-.13in}\alpha = \frac{1}{\beta+2\la\tau_1}({\tau_1\tau_2} + \beta\E\left[G\prox{\Lm}{\alpha G+\mu  Z-w}{{\tau_1}/{\beta}} \right]),\\[4pt]
&\hspace{-.13in}w= \eps_{\rm tr} \E\left[\env{\ell_1}{\frac{\alpha\beta H}{\tau_2\sqrt{\delta}} + {\alpha\eta T}/{\tau_2}}{{\alpha\eps_{\rm tr}\gamma}/{\tau_2}}\right] \\&\hspace{1in}-\frac{\eps_{\rm tr}^2\gamma\alpha}{2\tau_2}\E\Big[\left(\envdx{\ell_1}{\frac{\alpha\beta H}{\tau_2\sqrt{\delta}} + {\alpha\eta T}/{\tau_2}}{{\alpha\eps_{\rm tr}\gamma}/{\tau_2}}\right)^2\Big],\\[6pt]
&\hspace{-.13in}\tau_2^2 = \frac{\alpha^2}{\alpha^2+\mu^2}\Big({\beta^2}/{\delta}+\eta^2 +\eps_{\rm tr}^2\gamma^2\E\Big[\left(\envdx{\ell_1}{\frac{\alpha\beta H}{\tau_2\sqrt{\delta}} + {\alpha\eta T}/{\tau_2}}{{\alpha\eps_{\rm tr}\gamma}/{\tau_2}}\right)^2\Big]\\[4pt]  &\hspace{1in}-\frac{2\beta\eps_{\rm tr}\gamma}{\sqrt{\delta}}\E\left[ H \envdx{\ell_1}{\frac{\alpha\beta H}{\tau_2\sqrt{\delta}} + {\alpha\eta T}/{\tau_2}}{{\alpha\eps_{\rm tr}\gamma}/{\tau_2}}\right]\\&\hspace{1in}-2\eta\eps_{\rm tr}\gamma\E\left[T\envdx{\ell_1}{\frac{\alpha\beta H}{\tau_2\sqrt{\delta}} + {\alpha\eta T}{\tau_2}}{{\alpha\eps_{\rm tr}\gamma}/{\tau_2}}\right]\;\Big),
\end{cases}
\end{split}
\end{align}
where the random variable $Z=S\,\psi(S)$ for GLM and $Z=S+1$ for GMM, and the constant $\kappa=1$ and $2$ for GLM and GMM, respectively. Here, the Proximal operator of a function $f:\R\rightarrow\R$, at $x$ with parameter $\kappa>0$, is defined as follows,
\bea\label{eq:prox_def}
\prox{f}{x}{\kappa}\triangleq\arg\min_{v}\frac{1}{2\kappa}(x-v)^2 + f(v).
\eea

Next, we explain how to derive the Equations \eqref{eq:eta_main} for GLM. The approach for GMM is similar. Before starting, we recall useful properties of Moreau-envelops which we will leverage in deriving the equations. 

\begin{proposition}[\cite{rockafellar2009variational}]\label{propo:mor} Let $\Lm$ be a lower semi-continuous and proper function. 
Denote $
\envdx{\Lm}{x}{\kappa}\triangleq\frac{\partial{\env{\Lm}{x}{\kappa}}}{\partial x}.
$ and $\envdla{\Lm}{x}{\kappa}\triangleq\frac{\partial{\env{\Lm}{x}{\kappa}}}{\partial \kappa}.$ Then the following relations hold between first-order derivatives of Moreau-envelopes and the corresponding proximal operator, 
\bea
\envdx{\Lm}{x}{\tau}&= \frac{1}{\tau}{(x-\prox{\Lm}{x}{\tau})},\label{eq:moreaue_der1}\\
\envdla{\Lm}{x}{\tau}&= -\frac{1}{2\tau^2}{(x-\prox{\Lm}{x}{\tau})^2}\label{eq:moreaue_der2}. 
\eea


\end{proposition}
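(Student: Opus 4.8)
The plan is to recognize \eqref{eq:moreaue_der1}--\eqref{eq:moreaue_der2} as a pair of envelope-theorem (Danskin-type) identities for the parametrized minimization $\env{\Lm}{x}{\tau}=\min_v F(x,\tau;v)$ with $F(x,\tau;v)\triangleq\tfrac{1}{2\tau}(x-v)^2+\Lm(v)$, and to prove them by an elementary two-sided comparison rather than by invoking Danskin's theorem as a black box. The first step is to pin down the minimizer: since $\Lm$ is convex (as assumed throughout the paper, and true also for the composite $\ell_1+C\ell_2^2$ that appears in the analysis), $v\mapsto F(x,\tau;v)$ is $\tfrac1\tau$-strongly convex for each fixed $(x,\tau)$ with $\tau>0$, hence has the unique minimizer $p:=\prox{\Lm}{x}{\tau}$; a standard strong-convexity estimate also shows $(x,\tau)\mapsto p$ is continuous (indeed locally Lipschitz). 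Everything then follows by perturbing one argument at a time.

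For \eqref{eq:moreaue_der1} I would fix $\tau$ and compare $x$ with a nearby $x'$. Using that $p=\prox{\Lm}{x}{\tau}$ is merely feasible (not optimal) for the problem defining $\env{\Lm}{x'}{\tau}$, while it is optimal for $\env{\Lm}{x}{\tau}$,
\bea
\env{\Lm}{x'}{\tau}-\env{\Lm}{x}{\tau}\;\le\;\tfrac{1}{2\tau}\big[(x'-p)^2-(x-p)^2\big]\;=\;\tfrac{1}{2\tau}(x'-x)(x'+x-2p).\nn
\eea
Dividing by $x'-x$ and letting $x'\to x$ bounds the one-sided derivatives of $\env{\Lm}{\cdot}{\tau}$ at $x$ by $\tfrac1\tau(x-p)$ from the appropriate side; running the same inequality with $x$ and $x'$ interchanged (now using optimality of $\prox{\Lm}{x'}{\tau}$) and invoking continuity of $x'\mapsto\prox{\Lm}{x'}{\tau}$ gives the matching bound from the other side, so the derivative exists and equals $\tfrac1\tau(x-\prox{\Lm}{x}{\tau})$; continuity of the prox then upgrades this to $\env{\Lm}{\cdot}{\tau}\in C^1$.

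For \eqref{eq:moreaue_der2} I would instead fix $x$ and perturb $\tau$ to $\tau'$. With the same $p=\prox{\Lm}{x}{\tau}$, feasibility for $\env{\Lm}{x}{\tau'}$ and optimality for $\env{\Lm}{x}{\tau}$ give
\bea
\env{\Lm}{x}{\tau'}-\env{\Lm}{x}{\tau}\;\le\;\Big(\tfrac{1}{2\tau'}-\tfrac{1}{2\tau}\Big)(x-p)^2\;=\;\tfrac{\tau-\tau'}{2\tau\tau'}\,(x-p)^2,\nn
\eea
and, dividing by $\tau'-\tau$, sending $\tau'\to\tau$, and using the reverse inequality (swap $\tau\leftrightarrow\tau'$) together with continuity of $\prox{\Lm}{x}{\cdot}$, the one-sided $\tau$-derivatives are squeezed to the common value $-\tfrac{1}{2\tau^2}(x-\prox{\Lm}{x}{\tau})^2$, which is \eqref{eq:moreaue_der2}.

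The hard part is not the two one-line displays above but the regularity bookkeeping needed to conclude that the two one-sided limits actually coincide: this requires the prox to be single-valued and to vary continuously, so that the comparison inequalities sandwich a genuine derivative rather than merely an element of the subdifferential. Under the paper's blanket convexity assumption this is immediate from strong convexity of the augmented objective, so in our setting the obstacle is mild; for the fully general lower-semicontinuous proper $\Lm$ of \cite{rockafellar2009variational} one instead restricts $\tau$ to the prox-regular range where $\prox{\Lm}{\cdot}{\tau}$ is single-valued and Lipschitz, after which the identical argument applies verbatim.
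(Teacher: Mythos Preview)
Your argument is correct. The two-sided comparison you carry out is exactly the standard envelope-theorem proof, and under the paper's running convexity assumption on $\Lm$ the strong convexity of $v\mapsto\tfrac{1}{2\tau}(x-v)^2+\Lm(v)$ indeed guarantees a unique, continuous proximal map, so the squeeze on the one-sided derivatives goes through cleanly.

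As for comparison with the paper: there is nothing to compare. The paper does not prove this proposition at all; it is stated with a citation to \cite{rockafellar2009variational} and used as a black box in the derivation of the saddle-point equations. Your write-up therefore supplies strictly more than the paper does here: a short, self-contained derivation that avoids invoking Danskin's theorem by name and makes explicit exactly which regularity (single-valuedness and continuity of the prox) is needed. The only caveat is the one you already flag: for general lower semicontinuous proper $\Lm$ (the hypothesis as literally stated), the prox need not be single-valued for all $\tau>0$, and one must restrict to the prox-bounded/prox-regular range as in \cite{rockafellar2009variational}; but for the convex losses actually used in the paper this is a non-issue.
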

We proceed with the derivation of the Equations \eqref{eq:eta_main}. First, we start with $\nabla_\mu L$ to find that, 
\bea
\nabla_\mu \bar L &= -\frac{\mu\tau_2}{\alpha} + \eta +  \E\left[S\psi(S)\cdot\envdx{\Lm}{ \mu  S \psi(S)+ \alpha G-w}{\frac{\tau_1}{\beta}}\right] + 2\la\mu\nn \\
&=  -\frac{\mu\tau_2}{\alpha} + \eta +\frac{\beta}{\tau_1}\left(\mu - w\E\left[S\psi(S)\right] - \E\left[S\psi(S)\cdot\prox{\Lm}{ \mu S \psi(S)+ \alpha G-w}{\frac{\tau_1}{\beta}}\right]\right)\nn\\
&+2\la\mu,\nn
\eea
which gives rise to the equation below for finding $\eta^\star$:
\bea\label{eq:eight_eta}
\eta = \frac{\mu\tau_2}{\alpha} - \frac{\beta\mu}{\tau_1} + \frac{\beta w}{\tau_1}\E[S\psi(S)] + \frac{\beta}{\tau_1 } \E\left[S\psi(S)\cdot\prox{\Lm}{ \mu S \psi(S)+ \alpha G-w}{\frac{\tau_1}{\beta}}\right]-2\la\mu.
\eea
By taking derivative w.r.t $\eta$ and rewriting the derivatives based on proximal operators, we derive the equation for $\mu^\star$:
\bea
\nabla_\eta \bar L &= \mu - \frac{\eta\alpha }{\tau_2} + \frac{\eps_{\rm tr}\gamma\alpha}{\tau_2} \E\left[T \envdx{\ell_1}{\frac{\alpha\beta}{\tau_2\sqrt{\delta}}H + \frac{\alpha\eta}{\tau_2}T}{\frac{\alpha\eps_{\rm tr}\gamma}{\tau_2}}\right]\nn \\
&= \mu - \frac{\eta\alpha}{\tau_2} + \frac{\alpha\eta}{\tau_2}\E\left[T^2\right] - \E\left[T\cdot \prox{\ell_1}{\frac{\alpha\beta}{\tau_2\sqrt{\delta}}H + \frac{\alpha\eta}{\tau_2}T}{\frac{\alpha\eps_{\rm tr}\gamma}{\tau_2}}\right],\nn
\eea
which after noting that $\E[Z^2] = 1$, yields the following equation: 
\bea\label{eq:eight_mu}
\mu =  \E\left[T\cdot \prox{\ell_1}{\frac{\alpha\beta}{\tau_2\sqrt{\delta}}H + \frac{\alpha\eta}{\tau_2}T}{\frac{\alpha\eps_{\rm tr}\gamma}{\tau_2}}\right].
\eea
In order to find $\gamma^\star$, we consider $\nabla_w \bar L$ to derive that:
\bea\label{eq:eight_gamma}
\gamma =  -\E\left[ \envdx{\Lm}{ \mu  S \psi(S)+ \alpha G-w}{\frac{\tau_1}{\beta}}\right]
\eea
To proceed, we derive $\nabla_{\tau_1}\bar L$ and $\nabla_{\beta}\bar L$:
\bea
\nabla_{\tau_1} \bar L &= \frac{\beta}{2} + \frac{1}{\beta} \E \left[ \envdla{\Lm}{ \mu S \psi(S)+ \alpha G-w}{\frac{\tau_1}{\beta}}\right] \nn\\&= \frac{\beta}{2} -\frac{1}{2\beta} \E\left[\left( \envdx{\Lm}{ \mu  S \psi(S)+ \alpha G-w}{\frac{\tau_1}{\beta}}\right)^2\right]\label{eq:nabla_t1} \eea
\bea
\nabla_{\beta} \bar L &= -\frac{\alpha\beta}{\delta\tau_2} + \frac{\tau_1}{2} -\frac{\tau_1}{\beta^2}\E \left[\envdla{\Lm}{ \mu  S \psi(S)+ \alpha G-w}{\frac{\tau_1}{\beta}}\right] \nn\\ &+ \frac{\alpha\eps_{\rm tr}\gamma}{\tau_2\sqrt{\delta}} \E \left[H\cdot\envdx{\ell_1}{\frac{\alpha\beta}{\tau_2\sqrt{\delta}}H + \frac{\alpha\eta}{\tau_2}T}{\frac{\alpha\eps_{\rm tr}\gamma}{\tau_2}}\right] \nn \\[5pt]
&=-\frac{\alpha\beta}{\delta\tau_2} - \frac{\tau_1}{2} +\frac{\tau_1}{2\beta^2}\E \left[\left(\envdx{\Lm}{ \mu  S \psi(S)+ \alpha G-w}{\frac{\tau_1}{\beta}}\right)^2\right]\nn \\&+ \frac{\alpha\eps_{\rm tr}\gamma}{\tau_2\sqrt{\delta}} \E \left[H\cdot\envdx{\ell_1}{\frac{\alpha\beta}{\tau_2\sqrt{\delta}}H + \frac{\alpha\eta}{\tau_2}T}{\frac{\alpha\eps_{\rm tr}\gamma}{\tau_2}}\right].\label{eq:nabla_beta}
\eea
\eqref{eq:nabla_t1} yields the equation for deriving $\beta$ i.e.,
\bea\label{eq:eight_beta}
\beta = \left(\E\left[\left( \envdx{\Lm}{ \mu S \psi(S)+ \alpha G-w}{\frac{\tau_1}{\beta}}\right)^2\right]\right)^{1/2}.
\eea
Next, we combine \eqref{eq:nabla_t1} with \eqref{eq:nabla_beta} with proper coefficients to simplify the equations yielding, 
\bea
\frac{\nabla_{\tau_1} \bar L}{\beta} +\frac{\nabla_{\beta} \bar L }{\tau_1} &= 1-\frac{\alpha\beta}{\delta\tau_1\tau_2} + \frac{\alpha\eps_{\rm tr}\gamma}{\sqrt{\delta}\tau_1\tau_2}\E \left[H\cdot\envdx{\ell_1}{\frac{\alpha\beta}{\tau_2\sqrt{\delta}}H + \frac{\alpha\eta}{\tau_2}T}{\frac{\alpha\eps_{\rm tr}\gamma}{\tau_2}}\right] \nn\\
&=1 -\frac{\alpha\beta}{\delta\tau_1\tau_2} +\frac{1}{\sqrt{\delta}\tau_1}\left(\frac{\alpha\beta}{\tau_2\sqrt{\delta}} - \E\left[H\cdot\prox{\ell_1}{\frac{\alpha\beta}{\tau_2\sqrt{\delta}}H + \frac{\alpha\eta}{\tau_2}T}{\frac{\alpha\eps_{\rm tr}\gamma}{\tau_2}}\right]\right),
\eea
which yields the following equation:
\bea\label{eq:eight_tau1}
\tau_1 = \frac{1}{\sqrt{\delta}}\,\E\left[H\cdot\prox{\ell_p}{\frac{\alpha\beta}{\tau_2\sqrt{\delta}}H + \frac{\alpha\eta}{\tau_2}T}{\frac{\alpha\eps_{\rm tr}\gamma}{\tau_2}}\right].
\eea
In a similar way, we derive $\nabla_{\tau_2}\bar L$ and $\nabla_{\alpha} \bar L$:
\bea
\nabla_{\tau_2}\bar L = -\frac{r^2\mu^2}{2\alpha} + \frac{\alpha\beta^2}{2\delta\tau_2^2}-&\frac{\alpha}{2}+ \frac{\eta^2\alpha r^2}{2\tau_2^2} -\frac{\eps_{\rm tr}^2\gamma^2\alpha}{\tau_2^2}\E\left[\envdla{\ell_1}{\frac{\alpha\beta}{\tau_2\sqrt{\delta}}H + \frac{\alpha\eta}{\tau_2}T}{\frac{\alpha\eps_{\rm tr}\gamma}{\tau_2}}\right]\nn\\[5pt]
-&\frac{\alpha\beta\eps_{\rm tr}\gamma}{\tau_2^2\sqrt{\delta}}\E\left[ H\cdot\envdx{\ell_1}{\frac{\alpha\beta}{\tau_2\sqrt{\delta}}H + \frac{\alpha\eta}{\tau_2}T}{\frac{\alpha\eps_{\rm tr}\gamma}{\tau_2}}\right] \nn\\
-&\frac{\alpha\eta\eps_{\rm tr}\gamma}{\tau_2^2}\E\left[T\cdot\envdx{\ell_1}{\frac{\alpha\beta}{\tau_2\sqrt{\delta}}H + \frac{\alpha\eta}{\tau_2}T}{\frac{\alpha\eps_{\rm tr}\gamma}{\tau_2}}\right],\label{eq:nabla_t2}
\eea
\bea
\nabla_{\alpha}\bar L =  \frac{\mu^2\tau_2}{2\alpha^2} - \frac{\beta^2}{2\delta\tau_2}-&\frac{\tau_2}{2}- \frac{\eta^2}{2\tau_2} + \E\left[G\cdot\envdx{\Lm}{\alpha G+\mu  S\psi(S)-w}{\frac{\tau_1}{\beta}}\right] \nn \\
+&\frac{\eps_{\rm tr}^2\gamma^2}{\tau_2}\E\left[\envdla{\ell_1}{\frac{\alpha\beta}{\tau_2\sqrt{\delta}}H + \frac{\alpha\eta}{\tau_2}T}{\frac{\alpha\eps_{\rm tr}\gamma}{\tau_2}}\right]  + 2\la\al \nn\\[5pt]
+&\frac{\beta\eps_{\rm tr}\gamma}{\tau_2\sqrt{\delta}}\E\left[ H\cdot\envdx{\ell_1}{\frac{\alpha\beta}{\tau_2\sqrt{\delta}}H + \frac{\alpha\eta}{\tau_2}T}{\frac{\alpha\eps_{\rm tr}\gamma}{\tau_2}}\right] \nn\\+&\frac{\eta\eps_{\rm tr}\gamma}{\tau_2}\E\left[T\cdot\envdx{\ell_1}{\frac{\alpha\beta}{\tau_2\sqrt{\delta}}H + \frac{\alpha\eta}{\tau_2}T}{\frac{\alpha\eps_{\rm tr}\gamma}{\tau_2}}\right].\label{eq:nabla_alpha}
\eea
First, the following equation is directly followed based on \eqref{eq:nabla_t2}:
\bea
\tau_2^2 = &\frac{2\alpha}{\alpha^2+\mu^2}\bigg(\frac{\alpha\beta^2}{2\delta}+\frac{\eta^2\alpha}{2}+\frac{\eps_{\rm tr}^2\gamma^2\alpha}{2}\E\left[\left(\envdx{\ell_1}{\frac{\alpha\beta}{\tau_2\sqrt{\delta}}H + \frac{\alpha\eta}{\tau_2}T}{\frac{\alpha\eps_{\rm tr}\gamma}{\tau_2}}\right)^2\right]\nn \\ 
-&\frac{\alpha\beta\eps_{\rm tr}\gamma}{\sqrt{\delta}}\E\left[ H\cdot\envdx{\ell_1}{\frac{\alpha\beta}{\tau_2\sqrt{\delta}}H + \frac{\alpha\eta}{\tau_2}T}{\frac{\alpha\eps_{\rm tr}\gamma}{\tau_2}}\right] \nn\\&-\alpha\eta\eps_{\rm tr}\gamma\E\left[T\cdot\envdx{\ell_1}{\frac{\alpha\beta}{\tau_2\sqrt{\delta}}H + \frac{\alpha\eta}{\tau_2}T}{\frac{\alpha\eps_{\rm tr}\gamma}{\tau_2}}\right]\bigg).\label{eq:eight_tau2} \eea
In the next step, we combine \eqref{eq:nabla_t2} and \eqref{eq:nabla_alpha} to derive that,
\bea
\frac{\nabla_{\tau_2}\bar L}{\alpha} + \frac{\nabla_{\alpha}\bar L}{\tau_2} &= \frac{1}{\tau_2}\E\left[G\cdot\envdx{\Lm}{\alpha G+\mu  S\psi(S)-w}{{\tau_1}/{\beta}}\right] -1 +2\la\al/\tau_2\nn\\
&= \frac{\beta}{\tau_1\tau_2}\Big(\alpha - \E\left[G\cdot\prox{\Lm}{\alpha G+\mu S\psi(S)-w}{{\tau_1}/{\beta}} \right]\Big) - 1 + 2\la\al/\tau_2.\nn
\eea
This gives the following equation, based on the stationary point condition:
\bea\label{eq:eight_al}
\alpha = \Big({\tau_1\tau_2} + \beta\E\Big[G\cdot\prox{\Lm}{\alpha G+\mu  S\psi(S)-w}{{\tau_1}/{\beta}} \Big]\Big)\Big/(\beta+2\la\tau_1).
\eea
Finally, the following equation is derived directly based on $\nabla_\gamma \bar L$,
\bea\label{eq:eight_w}
w&= \eps_{\rm tr} \E\left[\env{\ell_1}{\frac{\alpha\beta}{\tau_2\sqrt{\delta}}H + \frac{\alpha\eta}{\tau_2}T}{\frac{\alpha\eps_{\rm tr}\gamma}{\tau_2}}\right] \nn\\&\hspace{.8in}- \frac{\eps_{\rm tr}^2\gamma\alpha}{2\tau_2}\E\left[\left(\envdx{\ell_1}{\frac{\alpha\beta}{\tau_2\sqrt{\delta}}H + \frac{\alpha\eta}{\tau_2}T}{\frac{\alpha\eps_{\rm tr}\gamma}{\tau_2}}\right)^2\right].
\eea
By putting together the equations \eqref{eq:eight_eta}, \eqref{eq:eight_mu}, \eqref{eq:eight_gamma}, \eqref{eq:eight_beta}, \eqref{eq:eight_tau1}, \eqref{eq:eight_tau2}, \eqref{eq:eight_al} and \eqref{eq:eight_w}, we end up with the system of eight equations in \eqref{eq:eta_main} for GLM. The steps required for deriving \eqref{eq:eta_main} for GMM are in a similar fashion. 
\section{Asymptotic Analysis of Adversarial Training for $\pmb{\ell_2}$ Perturbation}
\subsection{Case I: Correlated Features with General Covariance Matrix (Proof of Theorem \ref{cor:GLM_ell2})}
First, we note that when $q=p=2$, the term $\|\Sn^{-1/2}\widetilde{\rhoc_n}\|_p$ (in \eqref{eq:mimax_8}) can be rewritten as follows, 
\bea
\left\|\Sn^{-1/2}\widetilde{\rhoc_{n}}\right\|_2 = \min_{\tau_3\in\R_+}\frac{1}{2\tau_3}\left\|\Sn^{-1/2}\widetilde{\rhoc_{n}}\right\|_2^2+\frac{\tau_3}{2}.
\eea
The reason behind this reformulation is to permit the analysis of the final Moreau-envelope expression. With this, we rewrite the steps previously required to derive \eqref{eq:beforeME2}, as follows
\begin{align}
&\min_{\widetilde{\rhoc_{n}}\in\R^n} \left( \frac{\eps_{\rm tr}\gamma}{2\tau_3}+r\right)\left\|\Sn^{-1/2}\widetilde{\rhoc_{n}}\right\|_2^2 + \frac{\tau_2}{2n\alpha}\left\|\widetilde{\rhoc_{n}}\sqrt{n}+ \frac{\alpha\beta}{\tau_2\sqrt{\delta}}\h \right\|_2^2 -\eta\frac{\left\langle\widetilde{\thc_{n}^\star},\widetilde{\rhoc_{n}}\right\rangle}{\left\|\widetilde{\thc_n^\star}\right\|_2^2}\nn \\[4pt] =
&\min_{\widetilde{\rhoc_{n}}\in\R^n} \left( \frac{\eps_{\rm tr}\gamma}{2\tau_3}+r\right)\left\|\Sn^{-1/2}\widetilde{\rhoc_{n}}\right\|_2^2+ \frac{\tau_2}{2\alpha n} \left\|\widetilde{\rhoc_{n}} \sqrt{n}+ \frac{\alpha\beta}{\tau_2\sqrt{\delta}}\h - \frac{\eta\alpha\sqrt{n}}{\tau_2\left\|\widetilde{\thc_n^\star}\right\|_2^2}\widetilde{\thc_{n}^\star}\right\|^2_2 \nn\\&\hspace{1.5in}-\frac{\eta^2\alpha}{2\tau_2\left\|\widetilde{\thc_n^\star}\right\|^2_2} - \frac{\alpha\beta\eta}{\sqrt{m}{\tau_2}\left\|\widetilde{\thc_n^\star}\right\|_2^2}\left\langle\widetilde{{\thc_{n}^\star}},\h \right\rangle \nn\\[4pt]=
&\min_{\widetilde{\rhoc_{n}}\in\R^n}\left( \frac{\eps_{\rm tr}\gamma}{2\tau_3}+r\right)\left\|\Sn^{-1/2}\widetilde{\rhoc_{n}}\right\|_2^2+ \frac{\tau_2}{2\alpha n} \left\|\widetilde{\rhoc_{n}} \sqrt{n}+ \frac{\alpha\beta}{\tau_2\sqrt{\delta}}\h - \frac{\eta\alpha\sqrt{n}}{\tau_2\left\|\widetilde{\thc_n^\star}\right\|^2_2}\widetilde{\thc_n^\star}\right\|^2_2 - \frac{\eta^2\alpha}{2\tau_2\left\|\widetilde{\thc_n^\star}\right\|^2_2}  \nn\\[4pt]=
& \min_{\widetilde{\rhoc_{n}}\in\R^n} \frac{1}{n}\left( \frac{\eps_{\rm tr}\gamma}{2\tau_3}+r\right)\left\|\widetilde{\rhoc_{n}}\sqrt{n}\right\|_2^2\;\nn\\
&\;\;\;\;+ \frac{\tau_2}{2\alpha n} \left\|\Sn^{1/2}\left(\widetilde \rhoc_{n} \sqrt{n}+ \frac{\alpha\beta}{\tau_2\sqrt{\delta}}\Sn^{-1/2}\h - \frac{\eta\alpha\sqrt{n}}{\tau_2\left\|\widetilde{\thc_n^\star}\right\|^2_2}\Sn^{-1/2}\widetilde{\thc_{n}^\star}\right)\right\|^2_2 - \frac{\eta^2\alpha}{2\tau_2\left\|\widetilde{\thc_{n}^\star}\right\|^2_2}.
\label{eq:orthog}
\end{align}
To proceed, note that $\Sn^{1/2} =  \mathbf{U}_n\mathbf{\Lambda}_n^{1/2}\mathbf{U}_n^{\top}$ where $\mathbf{U}_n$ is an orthogonal matrix, therefore with a change of variable $\mathbf{U}_n^\top\widetilde{\rhoc_n}\Rightarrow\widetilde{\rhoc_n}$, the minimization based on $\widetilde{\rhoc_n}$ in \eqref{eq:orthog} is equivalent to:
\bea
\min_{\widetilde{\rhoc_{n}}\in\R^n} \frac{1}{n}\left( \frac{\eps_{\rm tr}\gamma}{2\tau_3}+r\right)\left\|\widetilde{\rhoc_{n}}\sqrt{n}\right\|_2^2+ \frac{\tau_2}{2\alpha n} \left\|\mathbf{\Lambda}_n^{1/2}\widetilde{ \rhoc_{n}} \sqrt{n}+ \frac{\alpha\beta}{\tau_2\sqrt{\delta}}\mathbf{U}_n^\top\h - \frac{\eta\alpha\sqrt{n}}{\tau_2\left\|\widetilde{\thc_n^\star}\right\|^2_2}\mathbf{U}_n^\top\widetilde{\thc_{n}^\star}\right\|^2_2 .\label{eq:beforeSepr}
\eea
It holds that $\mathbf{U}_n^\top\h\sim\h$ and following Assumption \ref{ass:3}, we have
\bea\label{eq:utheta}
\mathbf{U}_n^\top\widetilde{\thc_{n}^\star} = \mathbf{U}_n^\top\Sn^{1/2}{\thc_{n}^\star} = \mathbf{\Lambda}_n^{1/2} \mathbf{U}_n^\top{\thc_{n}^\star}  = \mathbf{\Lambda}_n^{1/2} \vb_n.
\eea
Therefore optimization over $\widetilde{\rhoc_{n}}$ becomes separable over its entries and \eqref{eq:beforeSepr} is equivalent to
\bea
&\frac{1}{n}\sum_{i=1}^n \min_{\widetilde{\rhoc_{n,i}}\in\R}\left( \frac{\eps_{\rm tr}\gamma}{2\tau_3}+r\right)\widetilde{\rhoc_{n,i}}^2+ \frac{\tau_2\lambda_{n,i}}{2\alpha n} \left(\widetilde{ \rhoc_{n,i}} + \frac{\alpha\beta}{\tau_2\sqrt{\delta\lambda_{n,i}}}\h_i - \frac{\eta\alpha\sqrt{n}}{\tau_2\left\|\widetilde{\thc_n^\star}\right\|^2_2}\vb_{n,i}\right)^2 \nn\\
&=\frac{1}{n}\left(\frac{\eps_{\rm tr}\gamma}{2\tau_3}+r\right)\sum_{i=1}^n \env{\ell_2^2}{\frac{\alpha\beta}{\tau_2\sqrt{\delta\lambda_{n,i}}}\h_i + \frac{\eta\alpha\sqrt{n}}{\tau_2\left\|\widetilde{\thc_n^\star}\right\|^2_2}\vb_{n,i}}{\frac{\eps_{\rm tr}\gamma\al+2\tau_3 r\al}{2\tau_2\tau_3\lambda_{n,i}}}\nn\\
&\rP \left(\frac{\eps_{\rm tr}\gamma}{2\tau_3}+r\right)\E_{H,V,L}\left[\env{\ell_2^2}{\frac{\alpha\beta}{\tau_2\sqrt{\delta L}}H + \frac{\eta\alpha}{\tau_2{\zeta}^2}V}{\frac{\eps_{\rm tr}\gamma\al+2\tau_3 r\al}{2\tau_2\tau_3 L}}\right]\label{eq:convergence_p}\\[3pt]
&=  \left(\frac{\eps_{\rm tr}\gamma}{2\tau_3}+r\right)\left(\frac{\eta^2\al^2}{\tau_2^2{\zeta}^4}\right)\E_{L}\left[\frac{\frac{{\zeta}^4\beta^2}{\eta^2\delta}+L}{\frac{\eps_{\rm tr}\gamma\alpha+2\tau_3 r\alpha}{2\tau_2\tau_3}+L} \right],\label{eq:aftersim}
\eea
where $H$ is standard normal and in \eqref{eq:convergence_p}, we used Assumption \ref{ass:3}, together with the fact that $\ell_2^2$ is pseudo-Lipschitz of order 2. In deriving \eqref{eq:aftersim}, we used the fact that $\env{\ell_2^2}{x}{\tau}=\frac{x^2}{\tau+1}$ and $\E[H^2]=\E[V^2]=1$.
Inserting this back in \eqref{eq:mimax_8} leads to the following objective,
\bea
&\min_{\substack{{\alpha,\tau_1,\tau_3,w\in\R_+,}\\ {\mu\in\R}}}\;\;\max_{\substack{{\tau_2,\beta,\gamma\in\R_+,}\\ {\eta\in\R}}} \,   
-\gamma w - \frac{\mu^2\tau_2}{2\alpha}\zeta^2-\frac{\alpha\beta^2}{2\delta\tau_2} - \frac{\alpha\tau_2}{2}+ \frac{\beta\tau_1}{2} + \eta \mu- \frac{\eta^2\alpha}{2\tau_2\zeta^2} +\frac{\eps_{\rm tr}\gamma\tau_3}{2}\nn \\
&\hspace{1.1in}+\E_{\,G,S}\left[\env{\Lm}{\alpha G+\mu \zeta S\cdot \psi(\zeta S)-w}{\frac{\tau_1}{\beta}}\right]\nn\\&\hspace{1.1in}+\left(\frac{\eps_{\rm tr}\gamma}{2\tau_3}+r\right)\left(\frac{\eta^2\al^2}{\tau_2^2{\zeta}^4}\right)\E_{L}\left[\frac{\frac{{\zeta}^4\beta^2}{\eta^2\delta}+L}{\frac{\eps_{\rm tr}\gamma\alpha+2\tau_3 r\alpha}{\tau_2\tau_3}+L} \right].\label{eq:minmax_10_ell2}
\eea
This completes the proof of Theorem \ref{cor:GLM_ell2}.
\subsection{Case II: Isotropic Features }\label{sec:GLM_ell_2}
Here we derive the minimax objective for $\Sn=\mathbb{I}_n$. We focus here on GLM, the extensions to GMM are achievable in light of the analysis in Section \ref{sec:GMM_analysis}.
\begin{corollary}\label{thm:ell2_iso}
Consider the Generalized Linear model \eqref{eq:binarymodel}. Let $\Sn=\mathbb{I}_n$ and $\|\thc^\star_n\|_2\rP 1$. The high-dimensional limit for the adversarial test error takes the following form,
\bea
\{\mathcal{E}_{\ell_2, \eps}^{\text{GLM}}(\thc_n)\} \rP \Pro \left( \frac{\mu^\star S\psi(S) + \al^\star G}{\sqrt{{\al^\star}^2+{\mu^\star}^2}}  < \eps \right),
\eea
where $(\al^\star,\mu^\star)$ is the unique solution to the following min-max objective,
\bea
&\min_{\mu\in\R,\alpha,\tau\in\R_+}\;\max_{\beta\in\R_+}\; \widetilde{L}\triangleq \frac{\beta\tau}{2} - \frac{\alpha\beta}{\sqrt{\delta}} +\la\al^2+\la\mu^2\nn\\&\hspace{1in}+\E_{G,S}\Big[\env{\Lm}{ \mu S\psi(S)+ \alpha G -\eps_{\rm tr}\sqrt{\alpha^2+\mu^2}}{{\tau}/{\beta}} \Big].\label{eq:minmax_ell_2_is}
\eea
\end{corollary}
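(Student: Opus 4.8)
The plan is to follow the CGMT-based route of the proofs of Theorems~\ref{thm:5} and~\ref{cor:GLM_ell2}, noting that the combination $q=2$ and $\Sn=\mathbb{I}_n$ makes most of the intermediate machinery collapse. Starting from the reduced robust ERM \eqref{eq:ermaftermax-q2}, i.e.\ $\min_{\thc_n}\frac1m\sum_i\Lm\big(y_i\langle\x_i,\thc_n\rangle-\eps_{\rm tr}\|\thc_n\|_2\big)+\la\|\thc_n\|_2^2$, I would introduce the auxiliary variables $v_i=y_i\x_i^\top\thc_n$ and dualize the constraints exactly as in \eqref{eq:dual}--\eqref{eq:afterdual0}, obtaining a min--max problem that is convex in $(\thc_n,\vb)$ and concave in the dual vector $\ub$. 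Since $\Sn=\mathbb{I}_n$ there is no change of variables: I split $\thc_n=\Th\thc_n+\Thp\thc_n$ along the rank-one projector onto $\thc_n^\star$, observe (as in Lemma~\ref{lem:gen_error}) that $\{y_i\}$ and $\bar X\Th\thc_n$ are independent of $\bar X\Thp\thc_n$, replace $\bar X\Thp$ by an independent standard Gaussian matrix, and invoke the CGMT (Theorem~6.1 in \cite{master}) to pass to the Auxiliary Optimization.

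The second step is to scalarize the AO. Maximizing over $\ub$ turns the bilinear terms into $\frac{\beta}{\sqrt m}\big\|-\vb+Y\bar X\Th\thc_n+\g\|\Thp\thc_n\|_2\big\|_2+\frac{\beta}{\sqrt m}\langle\h,\Thp\thc_n\rangle$ with $\beta=\|\ub\|_2/\sqrt m\ge 0$; minimizing over the direction of $\Thp\thc_n$ at fixed $\|\Thp\thc_n\|_2=\alpha$ produces $-\frac{\alpha\beta}{\sqrt m}\|\Thp\h\|_2\rP-\frac{\alpha\beta}{\sqrt\delta}$, and applying the identity $x=\min_{\tau\ge 0}\frac{x^2}{2\tau}+\frac{\tau}{2}$ (as in \cite{COLT}) to the remaining $\ell_2$ term yields the summand $\frac{\beta\tau}{2}$ together with a squared term whose minimization over $\vb$ is a separable Moreau envelope of $\Lm$. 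The key simplification relative to the general/$\ell_\infty$ analysis is that here both the penalty $\la\|\thc_n\|_2^2$ and the shift $\eps_{\rm tr}\|\thc_n\|_2$ inside the loss are already decomposable: $\|\thc_n\|_2^2=\|\Th\thc_n\|_2^2+\|\Thp\thc_n\|_2^2\rP\mu^2+\alpha^2$, where $\mu\triangleq\langle\thc_n,\thc_n^\star\rangle/\|\thc_n^\star\|_2^2$ and we use $\|\thc_n^\star\|_2\rP 1$ from Assumption~\ref{ass:2}; thus no slack variable $\widetilde{\rhoc_n}$ and no $\ell_1$-Moreau envelope are needed, and the variables $w,\gamma,\eta,\tau_2$ of the general proof disappear. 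Substituting $Y\bar X\Th\thc_n\rP\mu\,(\s\odot\Psi(\s))$ (here $\zeta=1$) and using the law of large numbers / pseudo-Lipschitz concentration for the separable Moreau-envelope sum collapses the AO to the scalar min--max \eqref{eq:minmax_ell_2_is}.

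Finally, I would appeal to the solution-concentration part of the CGMT: strict convexity of the scalarized AO objective (strict when $\la>0$, and otherwise handled exactly as in the proof of Theorem~\ref{thm:5}) forces the minimizers of the original problem to obey $\|\Thp\thc_n\|_2\rP\alpha^\star$, $\langle\thc_n,\thc_n^\star\rangle/\|\thc_n^\star\|_2^2\rP\mu^\star$, hence $\|\thc_n\|_2\rP\sqrt{{\alpha^\star}^2+{\mu^\star}^2}$, where $(\alpha^\star,\mu^\star)$ solves \eqref{eq:minmax_ell_2_is}. Feeding these three limits into Lemma~\ref{lem:gen_error} with $q=2$, $\Sn=\mathbb{I}_n$ --- that is, into \eqref{eq:gen_err_adv_2} with $\sigma=\alpha^\star/\mu^\star$ and $u=\sqrt{{\alpha^\star}^2+{\mu^\star}^2}$ --- gives the claimed adversarial test error. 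I expect the main obstacle to be not any single computation but the careful bookkeeping needed to make the CGMT transfer rigorous: verifying the convex--concave structure after dualization, justifying the (by now standard) relaxation of the compactness hypotheses because the AO has a bounded minimizer, and establishing uniqueness of the saddle point of \eqref{eq:minmax_ell_2_is} so that the limiting error is well defined; each of these mirrors the corresponding argument in the proof of Theorem~\ref{thm:5} and would be imported rather than redone.
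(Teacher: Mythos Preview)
Your proposal is correct and follows essentially the same route as the paper's proof: reduce \eqref{eq:ermaftermax-q2} to a bilinear min--max via the $\vb,\ub$ dualization, apply the CGMT on the $\Thp$-component, scalarize using the square-root trick and the decomposition $\|\thc_n\|_2^2=\|\Th\thc_n\|_2^2+\|\Thp\thc_n\|_2^2$, and finish with Lemma~\ref{lem:gen_error}. You also correctly identify the main simplification over the general case, namely that with $q=2$ and $\Sn=\mathbb{I}_n$ the shift $\eps_{\rm tr}\|\thc_n\|_2$ and the penalty $\la\|\thc_n\|_2^2$ are already decomposable, so the slack variable $\widetilde{\rhoc_n}$ and the auxiliary scalars $w,\gamma,\eta,\tau_2$ never enter.
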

%
\begin{proof}
We know that, 
\bea
\widehat{\thc_{n}} =  \min_{\thc_{n} \in \R^n} \frac{1}{m}\sum_{i=1}^m \Lm(y_i\x_i^{\top}\thc_{n}-\eps_{\rm tr}\|\thc_{n}\|_2)  + \la\|\thc_{n}\|_2^2.\nn
\eea
To proceed, we use our approach that derived \eqref{eq:minmax}, to end at a similar expression, here for $p=2$. We omit the steps as they are akin to the steps that led to \eqref{eq:minmax}. We end up with the following objective which is the counterpart of \eqref{eq:minmax} for $q=p=2$. 

\bea\label{eq:minmax_ell2}
\min_{\thc_{n}\in \R^n,\vb\in \R^m}\max_{\beta\in\R_+}  \;\frac{\mathbf{1}_m^{\top}}{m} &\Ellb\left(\vb-\eps_{\rm tr}\|\thc_{n}\|_2\mathbf{1}_m\right)  + \frac{\beta}{\sqrt{m}} \left\|-\vb + YX\Th\thc_{n}+ \g \|\Thp\thc_{n}\|_2 \right\|_2 \; \\ &
+\;  \frac{\beta \h^{\top}\Thp\thc_{n}}{\sqrt{m}}  +\la\|\thc_{n}\|_2^2\;\;= \nn
\eea
\bea
\min_{\thc_{n}\in \R^n,\vb\in \R^m}\max_{\beta,\tau\in\R_+} \;\frac{\mathbf{1}_m^{\top}}{m} &\Ellb\left(\vb-\eps_{\rm tr}\|\thc_{n}\|_2\mathbf{1}_m\right)  + \frac{\beta}{m\tau} \left\|-\vb + YX\Th\thc_{n}+ \g \|\Thp\thc_{n}\|_2 \right\|_2^2 \;+\;  \frac{\beta\tau}{2}  \nn\\& + \frac{\beta \h^{\top}\Thp\thc_{n}}{\sqrt{m}}+\la\|\thc_{n}\|_2^2 \, ,\nn
\eea
where similar to \eqref{eq:minmax_5}, here also \eqref{eq:minmax_ell2} is due to $x= \min_{\tau\in\R_+}  \frac{x^2}{2\tau} + \frac{\tau}{2}$. By minimizing w.r.t. $\thc_{n}$  and denoting $\alpha\triangleq \|\Thp\thc_{n}\|_2$, $\mu\triangleq \|\Th\thc_{n}\|_2$ we have,
\bea
\min_{\vb\in \R^m,\mu\in\R,\alpha\in\R_+}\;\max_{\beta,\tau\in\R_+} &\;\frac{\mathbf{1}_m^{\top}}{m} \Ellb\left(\vb-\eps_{\rm tr}\sqrt{\alpha^2+\mu^2}\mathbf{1}_m\right)  + \frac{\beta}{m\tau} \left\|-\vb + \mu Y X\thc_{n}^\star+ \alpha\g \right\|_2^2 \;\nn\\ &+\;  \frac{\beta\tau}{2} - \frac{\alpha\beta \h}{\sqrt{m}} + \la\|\thc_{n}\|_2^2.\nn
\eea
After $m,n\rightarrow\infty$, one can easily see that the objective simplifies to \eqref{eq:minmax_ell_2_is}.
 Additionally, by replacing $(\alpha^\star,\mu^\star)$ derived as the solution of \eqref{eq:minmax_ell_2_is}, in \eqref{eq:gen_err_adv_2}, we derive the asymptotic error of adversary. This completes the proof
 \end{proof} 
\subsubsection{A System of Equations}\label{sec:sys_eq_q2} Now, we present the corresponding fixed-point equations for the $\ell_2$ case in \eqref{eq:sys_eq_q2}. The equations are obtained by forming $\nabla \widetilde{L} = \mathbf{0}$ based on three variables $(\alpha,\mu,\kappa)$, where $\kappa:=\tau/\beta$.

\bea\label{eq:sys_eq_q2}
 \begin{split}
  \hspace{0in}
 \begin{cases}
 &\hspace{-0.15in}\E_{G,S}\Big[\Big(\envdx{\Lm}{ \mu S\psi(S) + \alpha G -\eps_{\rm tr}\sqrt{\alpha^2+\mu^2}}{\kappa}\Big)^2 \,\Big] =\frac{\al^2}{\kappa^2\delta},\\[8pt]
 &\hspace{-0.15in}\E_{G,S}\left[S\psi(S)\cdot\envdx{\Lm}{ \mu S\psi(S)+ \alpha G -\eps_{\rm tr}\sqrt{\alpha^2+\mu^2}}{\kappa} \right] =-2\la\mu \\&\hspace{1.2in}+\frac{\eps_{\rm tr}\mu}{\sqrt{\al^2+\mu^2}}\E_{G,S}\left[\envdx{\Lm}{ \mu S\psi(S)+ \alpha G -\eps_{\rm tr}\sqrt{\alpha^2+\mu^2}}{\kappa} \right],\\[8pt]
  &\hspace{-0.15in}\E_{G,S}\left[G\cdot\envdx{\Lm}{ \mu S\psi(S)+ \alpha G -\eps_{\rm tr}\sqrt{\alpha^2+\mu^2}}{\kappa} \right] = - 2\alpha\la \\&\hspace{1.2in}+\frac{\eps_{\rm tr}\al}{\sqrt{\al^2+\mu^2}}\E_{G,S}\left[\envdx{\Lm}{ \mu S\psi(S)+ \alpha G -\eps_{\rm tr}\sqrt{\alpha^2+\mu^2}}{\kappa} \right]+\frac{\al}{\delta\kappa} . 
\end{cases}
 \end{split}
 \eea

 Next, we show how to derive the saddle-point equations \eqref{eq:sys_eq_q2} from $\nabla \widetilde{L}=\mathbf{0}$. To derive the first equation in \eqref{eq:sys_eq_q2}, we can see that based on Proposition \ref{propo:mor},
\begin{align}
&\nabla_\tau\widetilde{L} = \frac{\beta}{2} - \frac{1}{2\beta}\E\Big[\left(\envdx{\Lm}{ \mu S\psi(S)+ \alpha G -\eps_{\rm tr}\sqrt{\alpha^2+\mu^2}}{{\tau}/{\beta}}\right)^2 \Big],\label{eq:tau_der_q2}\\
&\nabla_\beta\widetilde{L} = \frac{\tau}{2}- \frac{\alpha}{\sqrt{\delta}} + \frac{\tau}{2\beta^2}\E\Big[\left(\envdx{\Lm}{ \mu S\psi(S)+ \alpha G -\eps_{\rm tr}\sqrt{\alpha^2+\mu^2}}{{\tau}/{\beta}}\right)^2 \Big].\nn
\end{align}
After forming $\frac{\nabla_\tau\widetilde{L}}{\beta} + \frac{\nabla_\beta\widetilde{L}}{\tau} =0$, we can deduce that $\alpha = \tau\sqrt{\delta}.$ Since we defined $\kappa \triangleq \tau/\beta$, it follows that $\beta = \alpha/(\kappa\sqrt{\delta})$. Replacing this in \eqref{eq:tau_der_q2}, yields the first equation in \eqref{eq:sys_eq_q2}. The last two equations in \eqref{eq:sys_eq_q2}, are obtained directly from $\nabla_\mu\widetilde{L}=0$ and $\nabla_\al\widetilde{L}=0$.

For GMM \eqref{eq:G_mix}, the min-max objective and the system of equations are obtained by replacing $S\psi(S)$ with $S+1$, in  \eqref{eq:minmax_ell_2_is} and \eqref{eq:sys_eq_q2}.

\section{The Gaussian-Mixture Model Analysis}\label{sec:GMM_analysis}
\subsection{Adversarial Error of an Arbitrary Estimator}

Next lemma (restatement of Lemma \ref{lem:gen_error0} for GMM) derives the asymptotic error of a given sequence of estimators for the Gaussian-Mixture model. 
\begin{lemma}\label{lem:arbit_gen_error_gmm}
The high-dimensional limit of the Adversarial Error for the Gaussian-Mixture model with a sequence of classifiers $\{\thc_{n}\}$ is given as follows,
\bea\label{eq:GMMRob_error}
\left\{\mathcal{E}_{\ell_q, {\eps}}^{\text{GMM}}(\thc_n)\right\} \rP Q\left(\frac{\mu\widetilde{\zeta}^2-\eps u}{\sqrt{\al^2+\mu^2\widetilde{\zeta}^2}}\right),
\eea
where $Q(\cdot)$ denotes the Gaussian $Q$-function and $u,\mu$ and $\alpha$ are derived as follows,
$$\left\|\Sn^{-1/2}\widetilde{\thc_{n}}\right\|_p\rP u,\;\;\left\langle\widetilde{\thc_n^\star},\widetilde{\thc_{n}}\right\rangle\Big/\left\|\widetilde{\thc_n^\star}\right\|_2^2\rP \mu ,\;\;\left\|\Th^\perp\widetilde{\thc_{n}}\right\|_2\rP \alpha,$$
 for $\ell_p$-norm denoting the dual of the $\ell_q$-norm, $\widetilde{{\thc}_n}\triangleq \Sn^{1/2}\thc_n,\widetilde{\thc_{n}^\star} \triangleq \Sn^{-1/2}\thc_{n}^\star$ and $\Thp \in \R^{n\times n}$ defined as follows:
 $$ \Thp\triangleq \mathbb{I}_{n} - \Th, \;\;\Th\triangleq\frac{ \widetilde{\thc_{n}^\star}\widetilde{\thc_{n}^\star}^{\top}}{\left\|\widetilde{\thc_n^\star}\right\|_2^2}.$$
 Moreover, in the special case of $q=2$ and $\Sn=\mathbb{I}_n$, by denoting $\sigma\triangleq\alpha/\mu$, \eqref{eq:GMMRob_error} simplifies to, 
\bea\label{eq:gen_err_adv_2_GMM}
\left\{\mathcal{E}_{\ell_2, \eps}^{\text{GMM}}(\thc_n)\right\} \rP  Q\left(\frac{\mu}{\sqrt{\al^2+\mu^2}}-\eps\right),
\eea
\end{lemma}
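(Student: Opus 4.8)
The plan is to adapt the argument behind Lemma~\ref{lem:gen_error}, exploiting a simplification specific to the Gaussian-Mixture model: since $y$ enters $\x$ only through the conditional mean, the signal term $\langle\thc_n^\star,\thc_n\rangle$ is a deterministic function of $\thc_n$, and there is no correlation between $y$ and the Gaussian noise that needs to be decoupled, so the ``replace $\Thp\widetilde{\thc_n}$ by an independent Gaussian copy'' step used in the proof of Lemma~\ref{lem:gen_error} is unnecessary here.

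First I would carry out the inner maximization in closed form. Exactly as in the opening display of the proof of Lemma~\ref{lem:gen_error}, for $\thc_n\neq\mathbf{0}$ the monotonicity of the indicator together with $\ell_p$--$\ell_q$ duality (the $\ell_q$-ball being compact, so that $\min_{\|\db\|_q\le\eps} y\langle\db,\thc_n\rangle=-\eps\|\thc_n\|_p$) gives
\[
\max_{\|\db\|_q\le\eps}\one_{\{y\neq\sign\langle\x+\db,\thc_n\rangle\}}=\one_{\{y\langle\x,\thc_n\rangle-\eps\|\thc_n\|_p<0\}},
\]
and hence $\mathcal{E}_{\ell_q,\eps}^{\text{GMM}}(\thc_n)=\Pro\big(y\langle\x,\thc_n\rangle-\eps\|\thc_n\|_p<0\big)$.

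Next I would evaluate this probability conditionally on $\thc_n$, which is built from the training set and hence independent of the fresh sample. Writing $\x=y\thc_n^\star+\Sn^{1/2}\g$ with $\g\sim\mathcal{N}(\mathbf{0},\mathbb{I}_n)$ independent of $y$, and using $y^2=1$, one gets $y\langle\x,\thc_n\rangle=\langle\thc_n^\star,\thc_n\rangle+y\langle\g,\widetilde{\thc_n}\rangle$ with $\widetilde{\thc_n}\triangleq\Sn^{1/2}\thc_n$; by symmetry of $\g$ and its independence of $y\in\{\pm1\}$, the second summand is $\mathcal{N}(0,\|\widetilde{\thc_n}\|_2^2)$ given $\thc_n$, so that, almost surely,
\[
\mathcal{E}_{\ell_q,\eps}^{\text{GMM}}(\thc_n)=Q\!\left(\frac{\langle\thc_n^\star,\thc_n\rangle-\eps\|\thc_n\|_p}{\|\widetilde{\thc_n}\|_2}\right).
\]
It then remains to send the three scalars in this formula to their limits. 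With $\widetilde{\thc_n^\star}\triangleq\Sn^{-1/2}\thc_n^\star$ one has $\langle\thc_n^\star,\thc_n\rangle=\langle\widetilde{\thc_n^\star},\widetilde{\thc_n}\rangle$, so the hypotheses together with $\|\widetilde{\thc_n^\star}\|_2^2={\thc_n^\star}^\top\Sn^{-1}\thc_n^\star\rP\widetilde\zeta^2$ (Assumption~\ref{ass:2}) give $\langle\thc_n^\star,\thc_n\rangle\rP\mu\widetilde\zeta^2$; the orthogonal decomposition $\|\widetilde{\thc_n}\|_2^2=\|\Th\widetilde{\thc_n}\|_2^2+\|\Thp\widetilde{\thc_n}\|_2^2$ with $\|\Th\widetilde{\thc_n}\|_2^2=\mu^2\|\widetilde{\thc_n^\star}\|_2^2\rP\mu^2\widetilde\zeta^2$ and $\|\Thp\widetilde{\thc_n}\|_2\rP\alpha$ gives $\|\widetilde{\thc_n}\|_2^2\rP\mu^2\widetilde\zeta^2+\alpha^2$; and $\|\thc_n\|_p\rP u$ by hypothesis. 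Since $Q\big((a-\eps c)/\sqrt{b}\big)$ is continuous for $b>0$, the continuous mapping theorem yields \eqref{eq:GMMRob_error}. For the special case $q=2$, $\Sn=\mathbb{I}_n$ one has $p=2$, $\widetilde\zeta=\|\thc_n^\star\|_2\rP1$ and $u=\|\thc_n\|_2\rP\sqrt{\mu^2+\alpha^2}$, and substituting into \eqref{eq:GMMRob_error} gives \eqref{eq:gen_err_adv_2_GMM}.

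I do not expect a genuine obstacle. The two points requiring care are the GMM sign convention $\widetilde{\thc_n^\star}=\Sn^{-1/2}\thc_n^\star$ (opposite to the GLM one, which is why $\widetilde\zeta$ rather than $\zeta$ appears, and why one normalizes by $\|\widetilde{\thc_n^\star}\|_2^2=\widetilde{\zeta_n}^2$), and the passage from in-probability convergence of the three statistics of $\thc_n$ to in-probability convergence of the error, which is routine since the error is a single fixed continuous function of those statistics.
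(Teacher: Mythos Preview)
Your proof is correct and follows essentially the same approach as the paper's: solve the inner maximization via duality, use the GMM decomposition $\x=y\thc_n^\star+\Sn^{1/2}\g$ with $\g$ independent of $y$, and pass to the limit via the assumed convergences and Assumption~\ref{ass:2}. Your version is slightly more streamlined in that you recognize $y\langle\g,\widetilde{\thc_n}\rangle\sim\mathcal{N}(0,\|\widetilde{\thc_n}\|_2^2)$ directly and write the error as a single $Q$-function before taking limits, whereas the paper first splits $\langle\bar\z,\widetilde{\thc_n}\rangle$ into its $\Th$ and $\Thp$ components (yielding two independent Gaussians $S,G$) in parallel with the GLM argument; both routes end at the same formula, and your observation that the ``independent copy'' step of Lemma~\ref{lem:gen_error} is unnecessary here is exactly right.
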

\begin{proof}
Note that here $\widetilde{\thc_n^\star}$ is defined rather differently in GLM. Based on the definition of GMM, we have $\x=y\thc_n^\star  + \z$ for $\z\sim\Nn(\mathbf{0}_n,\Sn)$ and $\z=\Sn^{1/2}\bar{\z}$ for standard Gaussian vector $\bar\z$.  
We can write
\bea
\E_{\x,y}\left[\max_{\|\db\|_q<\eps} \one_{\{y\neq \sign\left\langle\x+\db,\thc_{n}\right\rangle\}}\right] &= \Pro \Big(y\langle\x,\thc_{n}\rangle - \eps\|\thc_{n}\|_p <0\Big)\nn \\
&= \Pro \Big( y\langle\z,\thc_{n}\rangle + \langle\thc_{n},\thc_{n}^\star\rangle - \eps\|\thc_{n}\|_p<0\Big)\label{eq:gm1}\\[4pt]
&= \Pro \left(y\langle\bar{\z},\widetilde{\thc_n}\rangle + \langle\widetilde{\thc_n},\widetilde{\thc_n^\star}\rangle  - \eps\left\|\Sn^{-1/2}\widetilde{\thc_{n}}\right\|_p\right)\nn\\[4pt]
&\hspace{-.5in}= \Pro \left(\langle \bar\z , \Th\widetilde{\thc_{n}}\rangle + \langle \bar\z , \Thp\widetilde{\thc_{n}}\rangle + \langle\widetilde{\thc_{n}},\widetilde{\thc_{n}^\star}\rangle -\eps\left\|\Sn^{-1/2}\widetilde{\thc_{n}}\right\|_p<0\right)\label{eq:gm2},
\eea
where \eqref{eq:gm1} and \eqref{eq:gm2} follow from the definition of the Gaussian-Mixture model and noting that $\z$ is independent of $y$. Since $\langle \bar\z , \Thp\widetilde{\thc_{n}}\rangle$ and $\langle \bar\z , \Th\widetilde{\thc_{n}}\rangle$ are independent, we can deduce that for $G,S\simiid\Nn(0,1)$, it holds that
\begin{align*}
&\left\langle\bar\z,\Th\widetilde{\thc_{n}}\right\rangle  = \frac{\left\langle \widetilde{\thc_n^\star},\widetilde{\thc_n}\right\rangle}{\left\|\widetilde{\thc_n^\star}\right\|_2^2}\left\langle \bar\z,\widetilde{\thc_n^\star}\right\rangle \rP \mu \widetilde{\zeta}S,\\[5pt]
& \left\langle\bar\z,\Thp\widetilde{\thc_{n}} \right\rangle\sim \left\|\Thp\widetilde{\thc_{n}}\right\|_2 \bar\z\rP\alpha G,\\[8pt]
&\left\langle\widetilde{\thc_n},\widetilde{\thc_{n}^\star}\right\rangle\rP \mu\widetilde{\zeta}^2.
\end{align*}

where recall that $\left\|\widetilde{\thc_n^\star}\right\|_2 = {\thc_n^\star}^\top\Sn^{-1}{\thc_n}^\star \rightarrow \widetilde\zeta$ by Assumption \ref{ass:2}. Therefore, from \eqref{eq:gm2}, we infer that, 
$$
\{\mathcal{E}_{\ell_q, {\eps}}^{\text{GMM}}(\thc_n)\} \rP \Pro \Big( \mu \widetilde{\zeta}\left(S+\widetilde{\zeta}\right)+ \alpha G - u {\eps}  < 0 \Big).
$$
This leads to \eqref{eq:GMMRob_error}. 
When $q=2$ and $\Sn=\mathbb{I}_n$, we have that $\widetilde{\zeta}=1$ and noting that $u = \sqrt{\alpha^2+\mu^2},$ leads to \eqref{eq:gen_err_adv_2_GMM}. This completes the proof. 
\end{proof}
\subsection{Asymptotic Analysis of Adversarial Training for the Gaussian-Mixture Model}
In this section, we outline the approach to the proof of Theorem \ref{thm:main} for GMM. In light of the previously described steps for GLM, here we only need to derive the corresponding min-max scalar problem for GMM. 
For the Gaussian-Mixture model we have by definition that $\x_i\sim \mathcal{N}(y_i\thc_{n}^\star,\Sn)$. Thus, the min-max ERM can be equivalently written as follows,
\bea
&\min_{\thc_{n} \in \R^n}\; \max_{\substack{{\|\db_i\|_{\infty}\le \eps}\\[2pt]{i\in[m]}}} \;\frac{1}{m}\sum_{i=1}^m \Lm \Big(y_i \Big\langle \x_i+\db_i,\thc_{n}\Big\rangle\Big) + \la\left\|\thc_{n}\right\|_2^2\nn \\&=
\min_{\thc_{n} \in \R^n} \frac{1}{m}\sum_{i=1}^m \Lm\Big(y_i\Big\langle\x_i,\thc_{n}\Big\rangle-\eps\|\thc_{n}\|_1\Big) + \la\left\|\thc_{n}\right\|_2^2  \nn\\
& = \min_{\widetilde{\thc_{n}} \in \R^n} \frac{1}{m} \sum_{i=1}^m \Lm\left(\left\langle\bar{\z}_i,\widetilde{\thc_{n}}\right\rangle+\left\langle\widetilde{\thc_{n}},\widetilde{\thc_{n}^\star}\right\rangle -\eps\left\|\Sn^{-1/2}\widetilde{\thc_{n}}\right\|_1\right)+ \la\left\|\Sn^{-1/2}\widetilde{\thc_{n}}\right\|_2^2. \nn\\
&= \min_{\widetilde{\thc_{n}} \in \R^n} \frac{1}{m} \sum_{i=1}^m \Lm\left(\left\langle\bar{\z}_i,\Th\widetilde{\thc_{n}}\right\rangle+\left\langle\bar{\z}_i,\Thp\widetilde{\thc_{n}}\right\rangle+ \left\langle\widetilde{\thc_{n}},\widetilde{\thc_{n}^\star}\right\rangle -\eps\left\|\Sn^{-1/2}\widetilde{\thc_{n}}\right\|_1\right)+ \la\left\|\Sn^{-1/2}\widetilde{\thc_{n}}\right\|_2^2 \label{eq:beforeLag_GMM}
\eea
The second step is due to the fact that $\widetilde{\thc_n^\star}\triangleq\Sn^{-1/2}\thc_n^\star,\;\;\widetilde{\thc_n}\triangleq\Sn^{1/2}\thc_n$ and that $y_i$ and $\bar\z_i\simiid\mathcal{N}(\mathbf{0},\mathbb{I}_n)$ are independent for all $i$. In the last step we used the matrices $\Th\triangleq\widetilde{\thc_n^\star}\widetilde{\thc_n^\star}^\top/\|\widetilde{\thc_n^\star}\|_2^2$ and $\Thp\triangleq\mathbb{I}_n-\Th$, to allow scalarization w.r.t. desired quantities $\al,\mu$ and also to allow using CGMT as the random variables $\langle\bar{\z}_i,\Th\widetilde{\thc_{n}}\rangle$ are $\langle\bar{\z}_i,\Thp\widetilde{\thc_{n}}\rangle$ are independent.
Next, similar to \eqref{eq:vectoront}, we can use the Lagrangian multiplier method to obtain that \eqref{eq:beforeLag_GMM} is equivalent to 
\bea
\min_{\widetilde{\thc_{n}}\in \R^n,\vb\in \R^m}\max_{\ub\in\R^m} \;\;\; \frac{\mathbf{1}_m^{\top}}{m} \Ellb\left(\vb-\eps\left\|\Sn^{-1/2}\widetilde{\thc_{n}}\right\|_1\mathbf{1}_m\right) - &\frac{\langle\ub,\vb\rangle}{m} + \frac{\left\langle\ub, \bar{Z}\Th\widetilde{\thc_{n}}\right\rangle}{m} + \frac{\left\langle\ub, \bar{Z}\Thp\widetilde{\thc_{n}}\right\rangle}{m} \nn\\&\hspace{-1in}+\frac{\langle\ub,\mathbf{1}_m\rangle}{m}\left\langle\widetilde{\thc_{n}},\widetilde{\thc_{n}^\star}\right\rangle +  \la\left\|\Sn^{-1/2}\widetilde{\thc_{n}}\right\|_2^2,\label{eq:vectoront_GMM}
\eea

The objective in \eqref{eq:vectoront_GMM} bears close similarity to its GLM counterpart in \eqref{eq:vectoront}. Note that here 
$\bar{Z}\Th\widetilde{\thc_{n}}$ and $\bar{Z}\Thp\widetilde{\thc_{n}}$ have the same role as $Y\bar{X}\Th\widetilde{\thc_{n}}$ and $Y\bar{X}\Thp\widetilde{\thc_{n}}$ in \eqref{eq:vectoront}, respectively. Here we also have an additional term $\frac{\langle\ub,\mathbf{1}_m\rangle}{m}\langle\widetilde{\thc_{n}},\widetilde{\thc_{n}^\star}\rangle$ compared to \eqref{eq:vectoront}. We recall that based on the definition, it holds that $\langle\widetilde{\thc_{n}},\widetilde{\thc_{n}^\star}\rangle\rP\mu\widetilde{\zeta}^2$. Continuing with the same technique described in Section \ref{sec:proof_GLM} that led to the objective \eqref{eq:minmax_9}, we find that for GMM, \eqref{eq:vectoront_GMM} is equivalent to the following min-max problem (details are omitted for brevity):
\bea
&\min_{\substack{{\alpha,\tau_1,w\in\R_+,}\\ {\mu\in\R}}}\;\;\max_{\substack{{\tau_2,\beta,\gamma\in\R_+,}\\ {\eta\in\R}}} \,   
-\gamma w - \frac{\mu^2\tau_2}{2\alpha}\left\|\widetilde{\thc_{n}^\star}\right\|^2_2-\frac{\alpha\beta^2}{2\delta\tau_2} - \frac{\alpha\tau_2}{2}+ \frac{\beta\tau_1}{2} + \eta \mu- \frac{\eta^2\alpha}{2\tau_2\left\|\widetilde{\thc_{n}^\star}\right\|^2_2} \nn \\
&\hspace{.9in}+\frac{1}{m}\env{{\small{\Ellb}}}{ \mu \bar{Z}\widetilde{\thc_{n}^\star}+ \alpha\g+\mu\widetilde{\zeta}^2\mathbf{1}_m-w\mathbf{1}_m}{\frac{\tau_1}{\beta}} \nn\\&\hspace{.9in}+ \frac{ \eps_{\rm tr}\gamma}{n}\env{\left(\small{\ellb_1}+\frac{r}{\Large{\eps}_{ \tiny{\rm tr}}\gamma}\ellb_2^2,\,\Sn\right)}{\frac{\alpha\beta}{\tau_2\sqrt{\delta}}\Sn^{-1/2}\h + \frac{\alpha\eta\sqrt{n}}{\tau_2\left\|\widetilde{\thc_{n}^\star}\right\|^2_2}\Sn^{-1}{\thc_{n}^\star}}{\frac{\alpha\eps_{\rm tr}\gamma}{\tau_2}}.\label{eq:minmax_9_GMM}
\eea

We have $\bar{Z}\widetilde{\thc_n^\star}\sim \widetilde{\zeta}\s$ for a standard Gaussian vector $\s$ independent of $\g$. This leads to
  $$
\frac{1}{m}\env{{\small{\Ellb}}}{ \mu \bar{Z}\widetilde{\thc_{n}^\star}+ \alpha\g+\mu\widetilde{\zeta}^2\mathbf{1}_m-w\mathbf{1}_m}{\frac{\tau_1}{\beta}} \rP \E_{G}\left[\env{\Lm}{\sqrt{\alpha^2+\mu^2\widetilde\zeta^2}G +\mu\widetilde\zeta^2-w}{\frac{\tau_1}{\beta}}\right],
  $$
for standard Gaussian random variable $G$. In particular, when $\Sn$ is a diagonal matrix, we end up with the following min-max problem based on eight scalars:

\bea
&\min_{\substack{{\alpha,\tau_1,w\in\R_+,}\\ {\mu\in\R}}}\;\;\max_{\substack{{\tau_2,\beta,\gamma\in\R_+,}\\ {\eta\in\R}}} \,   
-\gamma w - \frac{\mu^2\tau_2}{2\alpha}\widetilde{\zeta}^2-\frac{\alpha\beta^2}{2\delta\tau_2} - \frac{\alpha\tau_2}{2}+ \frac{\beta\tau_1}{2} + \eta \mu- \frac{\eta^2\alpha}{2\tau_2\widetilde{\zeta}^2} \nn \\
&\hspace{1.1in} + \E_{G}\left[\env{\Lm}{\sqrt{\alpha^2+\mu^2\widetilde\zeta^2}\,G +\mu\widetilde\zeta^2-w}{\frac{\tau_1}{\beta}}\right]
\nn\\
&\hspace{1.1in}+\eps_{\rm tr}\gamma\,\E_{L,H,T}\left[\env{\ell_1+\frac{r}{\eps_{\rm tr}\gamma}\ell_2^2}{\frac{\alpha\beta}{\tau_2\sqrt{\delta L}}H + \frac{\alpha\eta}{\tau_2\widetilde\zeta^2 L}T}{\frac{\alpha\eps_{\rm tr}\gamma}{\tau_2 L}} \right],\label{eq:minmax_final_GMM}
\eea
as desired by Theorem \ref{thm:main}. 

Proof of Theorem \ref{cor:GLM_ell2} for GMM, follows the same steps as GLM, however note that here, due to the definition of $\widetilde{\thc_n^\star}$, \eqref{eq:utheta} changes into
 \bea
 \mathbf{U}_n^\top\widetilde{\thc_{n}^\star} = \mathbf{U}_n^\top\Sn^{-1/2}{\thc_{n}^\star} = \mathbf{\Lambda}_n^{-1/2} \mathbf{U}_n^\top{\thc_{n}^\star}  = \mathbf{\Lambda}_n^{-1/2} \vb_n.
 \eea
 Thus, the resulting min-max objective has the following form,
 \bea
&\min_{\substack{{\alpha,\tau_1,\tau_3,w\in\R_+,}\\ {\mu\in\R}}}\;\;\max_{\substack{{\tau_2,\beta,\gamma\in\R_+,}\\ {\eta\in\R}}} \,   
-\gamma w - \frac{\mu^2\tau_2}{2\alpha}\widetilde{\zeta}^2-\frac{\alpha\beta^2}{2\delta\tau_2} - \frac{\alpha\tau_2}{2}+ \frac{\beta\tau_1}{2} + \eta \mu- \frac{\eta^2\alpha}{2\tau_2\widetilde{\zeta}^2} +\frac{\eps_{\rm tr}\gamma\tau_3}{2}\nn \\
&\hspace{1.1in}+\E_{G}\left[\env{\Lm}{\sqrt{\alpha^2+\mu^2\widetilde\zeta^2}\,G +\mu\widetilde\zeta^2-w}{\frac{\tau_1}{\beta}}\right]
\nn\\&\hspace{1.1in}+\left(\frac{\eps_{\rm tr}\gamma}{2\tau_3}+r\right)\left(\frac{\eta^2\al^2}{\tau_2^2{\widetilde\zeta}^4}\right)\E_{L}\left[\frac{\frac{{\widetilde\zeta}^4\beta^2}{\eta^2\delta}+ L^{-1}}{\frac{\eps_{\rm tr}\gamma\alpha+2\tau_3 r\alpha}{2\tau_2\tau_3}+L} \right].\label{eq:minmax_10_ell2_GMM}
\eea
This together with Lemma \ref{lem:arbit_gen_error_gmm}, yields the proof of Theorem \ref{cor:GLM_ell2} for GMM. 
\subsection{The Large Sample-size Limit}\label{sec:large_sample_size}
In this section, we focus on the $\delta=m/n\rightarrow\infty$ limit. In particular, we consider the Exponential loss $\Lm(t) = \exp(-t)$ and the isotropic Gaussian-mixture model and set $r=0$. We prove that for $q=2$, when $\delta\rightarrow\infty$, the adversarial test error, exactly achieves the Bayes adversarial error derived by \cite{bhagoji2019lower}. The results are summarized in the following corollary.

\begin{corollary}
Consider the Gaussian-mixture model under the same settings as Corollary \ref{thm:ell2_iso}. Let the loss function $\Lm$, be the Exponenttal loss and let $ \delta\rightarrow\infty$. Fix $\eps_{\rm ts}<1$. Then if $\eps_{\rm tr}<1$, the adversarial test error of estimators derived by adversarial training and the Bayes adversarial test error are equal. 
\end{corollary}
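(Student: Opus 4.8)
The plan is to reduce the statement to a single scalar limit about the saddle point of \eqref{eq:minmax_ell_2_is} (equivalently, of the fixed-point system \eqref{eq:sys_eq_q2}) and then compute that limit using the closed form of the population robust risk that is available for the exponential loss. By Corollary~\ref{thm:ell2_iso} in its GMM form (replace $S\psi(S)$ by $S+1$; note $\widetilde\zeta=1$ since $\Sn=\mathbb{I}_n$ and $\|\thc^\star_n\|_2\rP1$) the adversarial test error of the adversarial-training estimator converges to $Q\!\big(\mu^\star/\sqrt{{\al^\star}^2+{\mu^\star}^2}-\eps_{\rm ts}\big)$, with $(\al^\star,\mu^\star)$ the unique solution of \eqref{eq:minmax_ell_2_is} for $r=0$ and $\Lm(t)=e^{-t}$, cf.\ \eqref{eq:gen_err_adv_2_GMM}. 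For the isotropic GMM with budget $\eps_{\rm ts}<1$ the robust Bayes classifier is the linear rule $\sign(\langle\x,\thc^\star_n\rangle)$ and the robust Bayes error equals $Q(1-\eps_{\rm ts})$ \cite{bhagoji2019lower,dobriban2020provable}. Since the Bayes error is a minimum over all classifiers the adversarial-training error is never smaller, so it suffices to prove the single statement: as $\delta\to\infty$, $(\al^\star,\mu^\star)\to(0,\,1-\eps_{\rm tr})$, which gives $\mu^\star/\sqrt{{\al^\star}^2+{\mu^\star}^2}\to1$ and hence the matching limit $Q(1-\eps_{\rm ts})$.

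For the limiting analysis the central observation is that for $\Lm(t)=e^{-t}$ the population robust risk is explicit: writing $X_{\al,\mu}=\mu(S+1)+\al G-\eps_{\rm tr}\sqrt{\al^2+\mu^2}$ and $\rho=\sqrt{\al^2+\mu^2}$, straightforward Gaussian moment computations give $\E[\Lm(X_{\al,\mu})]=R(\al,\mu):=\exp\!\big(\tfrac12(\al^2+\mu^2)-\mu+\eps_{\rm tr}\rho\big)$, together with $\E[(S+1)\Lm(X_{\al,\mu})]=(1-\mu)R(\al,\mu)$ and $\E[G\,\Lm(X_{\al,\mu})]=-\al R(\al,\mu)$. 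The function $R$ is coercive and, for $\eps_{\rm tr}<1$, uniquely minimized at $(\al,\mu)=(0,1-\eps_{\rm tr})$; moreover substituting $\al=\tau=0$, $\mu=1-\eps_{\rm tr}$ into \eqref{eq:minmax_ell_2_is} makes $\widetilde L$ independent of $\beta$ and equal to $R(0,1-\eps_{\rm tr})=\min R<1=R(0,0)$, so the minimax value obeys $\widetilde L^\star(\delta)\le\min R$ for all $\delta$. I would then pass to the limit in \eqref{eq:sys_eq_q2} with $r=0$: using $\envdx{\Lm}{x}{\kappa}=-e^{-\prox{\Lm}{x}{\kappa}}$ and $\prox{\Lm}{x}{\kappa}\ge x$ (Proposition~\ref{propo:mor}, $\Lm$ decreasing), an a priori bound on $(\al^\star,\mu^\star)$ (see below) together with $\beta^{\star2}=\E[(\envdx{\Lm}{X^\star}{\kappa^\star})^2]$ and $\kappa^\star=\tau^\star/\beta^\star$ rules out $\kappa^\star\to\infty$ (which would force $\al^\star\asymp\sqrt\delta$), and then the first equation $\E[(\envdx{\Lm}{X^\star}{\kappa^\star})^2]={\al^\star}^2/({\kappa^\star}^2\delta)$ forces $\kappa^\star\to0$ (if $\kappa^\star$ stayed bounded below, a continuity/compactness argument keeps the left side bounded below while the right side $\to0$); this also kills the remainder $\al^\star/(\delta\kappa^\star)$ in the third equation. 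With $\kappa^\star\to0$ one has $\envdx{\Lm}{\cdot}{\kappa^\star}\to-e^{-(\cdot)}$ and $\env{\Lm}{\cdot}{\kappa^\star}\uparrow\Lm$; dominating by $\Lm(X_{\al,\mu})$ uniformly over the compact set of admissible $(\al,\mu)$ and invoking the three identities above, the second and third equations of \eqref{eq:sys_eq_q2} converge, for any subsequential limit $(\al_\infty,\mu_\infty)$, to $(1-\mu_\infty)\rho_\infty=\eps_{\rm tr}\mu_\infty$ and $\al_\infty(1+\eps_{\rm tr}/\rho_\infty)=0$. The latter gives $\al_\infty=0$, whence the former gives $\mu_\infty\in\{0,1-\eps_{\rm tr}\}$, and $\mu_\infty=0$ is excluded since it corresponds to the degenerate $\thc=0$, which is strictly beaten in the limit by the competitor $(1-\eps_{\rm tr})\thc^\star_n$. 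Thus every subsequential limit is $(0,1-\eps_{\rm tr})$, so $(\al^\star,\mu^\star)\to(0,1-\eps_{\rm tr})$ and the adversarial test error tends to $Q(1-\eps_{\rm ts})$, the Bayes adversarial error.

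The real work — and the main obstacle — is in the two quantitative inputs used above. First, uniform-in-$\delta$ compactness of the saddle point: since ${\mu^\star}^2+{\al^\star}^2$ is the $n\to\infty$ limit of $\|\widehat{\thc_n}\|_2^2$ and, using $\langle\thc^\star_n,v\rangle\le1$, the population robust risk satisfies $\E[\Lm(X_{\al,\mu})]\ge\exp\!\big(\tfrac12(\rho-(1-\eps_{\rm tr}))^2-\tfrac12(1-\eps_{\rm tr})^2\big)$, a uniform concentration of the empirical objective around the population one (valid once $\delta$ is large enough that the robust ERM has a finite-norm minimizer, i.e.\ the data are not $(\ell_2,\eps_{\rm tr})$-separable) confines $\|\widehat{\thc_n}\|_2$, hence $(\al^\star,\mu^\star)$, to a fixed ball. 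Second, one must verify the rate $\kappa^\star\asymp\delta^{-1/2}$ so that all $\delta$-dependent remainder terms truly vanish — in particular $\al^\star\beta^\star/\sqrt\delta=\kappa^\star\,\E[(\envdx{\Lm}{X^\star}{\kappa^\star})^2]\to0$ and the Moreau gap $\E[\Lm(X^\star)-\env{\Lm}{X^\star}{\kappa^\star}]\to0$ — which are what justify the limit interchanges inside the expectations. I expect this rate/interchange bookkeeping to be the bottleneck; note that the closed form $\E[\Lm(X_{\al,\mu})]=R(\al,\mu)$, and hence the fact that the limiting equations are solvable explicitly, is special to the exponential loss, which is precisely why the statement is restricted to it.
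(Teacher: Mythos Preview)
Your proposal is correct and lands on the same computation as the paper: both reduce to the explicit population robust risk $R(\alpha,\mu)=\exp\!\big(\tfrac12(\alpha^2+\mu^2)-\mu+\eps_{\rm tr}\sqrt{\alpha^2+\mu^2}\big)$ for the exponential loss, identify its unique minimizer $(0,1-\eps_{\rm tr})$ when $\eps_{\rm tr}<1$, plug into \eqref{eq:gen_err_adv_2_GMM} to get $Q(1-\eps_{\rm ts})$, and match this against the Bayes robust error from \cite{bhagoji2019lower}.

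The route differs in how the limit $\delta\to\infty$ is taken. The paper works directly with the min--max objective \eqref{eq:minmax_ell_2_is}: it drops the $-\alpha\beta/\sqrt{\delta}$ term, argues from monotonicity of $\env{\Lm}{x}{\cdot}$ that $\beta^\star\to\infty$ and $\tau^\star\to0$, hence $\tau^\star/\beta^\star\to0$ and $\env{\Lm}{\cdot}{\tau^\star/\beta^\star}\to\Lm$, and then simply minimizes $\E_G[\exp(-X_{\alpha,\mu})]$. You instead pass to the limit in the saddle-point system \eqref{eq:sys_eq_q2}, using the first equation to force $\kappa^\star\to0$, then reading off the limiting stationarity conditions from the second and third equations via the identities $\E[(S+1)\Lm]=(1-\mu)R$ and $\E[G\,\Lm]=-\alpha R$. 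What you gain is rigor: you explicitly isolate the two ingredients the paper's argument leaves implicit (uniform-in-$\delta$ compactness of $(\alpha^\star,\mu^\star)$ and control of the remainder $\alpha^\star/(\delta\kappa^\star)$), and your exclusion of the spurious root $\mu_\infty=0$ is cleaner than the paper's direct minimization. What the paper's route buys is brevity: once one accepts the formal limit of \eqref{eq:minmax_ell_2_is}, the problem is a one-line Gaussian MGF calculation followed by minimizing a scalar exponent, with no need to track the fixed-point equations or subsequential limits.
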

\begin{proof}
To see this, note that under these conditions, \eqref{eq:minmax_ell_2_is} takes the following form
\bea
\min_{\mu\in\R,\alpha,\tau\in\R_+}\;\max_{\beta\in\R_+}\; \widetilde{L}_{\delta\rightarrow\infty} =  \frac{\beta\tau}{2} + \E_G\Big[\env{\Lm}{\sqrt{\alpha^2 + \mu^2}\, G  +\mu-\eps_{\rm tr}\sqrt{\alpha^2+\mu^2}}{{\tau}/{\beta}} \Big].\label{eq:minmax_ell_2_is_inftydel}
\eea
In light of Proposition \ref{propo:mor}, $\mathcal{M}_{\Lm}(x;\cdot)$ is a decreasing function for all $x$. This gives,
\bea
\lim_{\delta\rightarrow\infty}\beta^\star(\delta) = \infty ,\;\;\;\;\; \lim_{\delta\rightarrow\infty}\tau^\star(\delta) = 0.
\eea
Since $\lim_{\kappa\rightarrow\infty}\env{\Lm}{x}{\kappa} = \Lm(x)$ for all $x$, we deduce that,
\bea
(\al^\star,\mu^\star )&= \arg \min_{\al\in\R_+,\mu\in\R} \;\;\E_G\left[ \exp\left(\eps_{\rm tr}\sqrt{\al^2+\mu^2}-\mu+\sqrt{\al^2+\mu^2}G\right)\right]\nn\\
&= \arg \min_{\al\in\R_+,\mu\in\R} \;\;\exp\left(\eps_{\rm tr}\sqrt{\al^2+\mu^2}-\mu + (\al^2+\mu^2)/2\right)\nn\\
&=  \arg \min_{\al\in\R_+,\mu\in\R} \;\;\eps_{\rm tr}\sqrt{\al^2+\mu^2}-\mu + (\al^2+\mu^2)/2,\nn
\eea
which results in $(\al^\star,\mu^\star) = (0,1-\eps_{\rm tr})$. Plugging these in \eqref{eq:gen_err_adv_2_GMM}, we derive the following for the large sample-size limit of the generalization error of adversarial training, conditioned on $\eps_{\rm tr}<1$,
\bea\label{eq:lss-ell2}
\lim_{\delta\rightarrow\infty}\mathcal{E}_{\ell_2, \eps_{\rm ts}}^{\text{GMM}}(\widehat{\thc_n}) = Q\Big(1-\eps_{\rm ts}\Big).
\eea
On the other hand, based on \cite{bhagoji2019lower}, the Bayes adversarial error for isotropic GMM is derived as follows,
\bea\nn
\mathcal{E}_{\ell_2, \eps_{\rm ts}}^{\text{GMM}}({\rm OPT}) =  Q\left(\min _{\|\mathbf{z}\|_{q} \leq \varepsilon_{\mathrm{ts}}}\left\|\boldsymbol{\theta}^{\star}-\mathbf{z}\right\|_2\right).
\eea
In particular, noting that $\|\thc^\star\|_2\rP1$ (by Assumption \ref{ass:2}) and $q=2$, we find that the Bayes adversarial error in this case is 
\bea
\mathcal{E}_{\ell_2, \eps_{\rm ts}}^{\text{GMM}}({\rm OPT}) \rP Q\Big(\max\{1-\eps_{\rm ts},0\}\Big).\label{eq:lss-bayes}
\eea
Comparing \eqref{eq:lss-bayes} with \eqref{eq:lss-ell2} reveals that in the infinite sample size limit, if $\eps_{\rm ts}<1$, by choosing any $\eps_{\rm tr}<1$, the test error of adversarial training reaches the Bayes adversarial error. As a remark, it can be readily shown that for the general case of $\|\thc^\star\|_2\rP c$, the same results hold for $\eps_{\rm tr}<c$ and $\eps_{\rm ts}<c$.

\end{proof}

\end{document}